\newtheorem{theorem}{\textbf{Theorem}}
\newtheorem{lemma}{\textbf{Lemma}}
\newtheorem{claim}{\textbf{Claim}}
\providecommand{\iprod}[2]{\ensuremath{\left\langle #1,\,#2  \right\rangle}}
\providecommand{\norm}[1]{\ensuremath{\left\lVert#1\right\rVert }}
\providecommand{\mnorm}[1]{\ensuremath{\left\lvert#1\right\rvert}}
\title{Byzantine Fault Tolerant Distributed Linear Regression\footnote{Manuscript revised by adding; a new improved filtering technique in Section~\ref{sec:algo_2}, and convergence analysis in case of noise in Appendix~\ref{app:noise}.}}
\author{Nirupam Gupta and Nitin H. Vaidya}
\date{Department of Computer Science, \\
Georgetown University, Washington, DC 20057, USA \\
\{\emph{first-name}\}.\{\emph{last-name}\}@georgetown.edu}
\begin{document}

\maketitle

\def\R{{\mathbb R}}
\def\Z{{\mathbb Z}}
\def\N{{\mathbb N}}
\def\H{{\mathcal H}}
\def\B{{\mathcal B}}
\def\W{{\mathcal W}}
\def\F{{\mathcal F}}
\def\E{{\mathbb E}}
\def\inti{{\sf Int}}

\begin{abstract}
    This paper considers the problem of Byzantine fault tolerance in distributed linear regression in a multi-agent system. However, the proposed algorithms are given for a more general class of distributed optimization problems, of which distributed linear regression is a special case. The system comprises of a server and multiple agents, where each agent is holding a certain number of data points and responses that satisfy a linear relationship (could be noisy). The objective of the server is to determine this relationship, given that some of the agents in the system (up to a known number) are Byzantine faulty (aka. actively adversarial). We show that the server can achieve this objective, in a deterministic manner, by robustifying the original distributed gradient descent method using norm based filters, namely \emph{norm filtering} and \emph{norm-cap filtering}, incurring an additional log-linear computation cost in each iteration. The proposed algorithms improve upon the existing methods on three levels: i) no assumptions are required on the probability distribution of data points, ii) system can be partially asynchronous, and iii) the computational overhead (in order to handle Byzantine faulty agents) is log-linear in number of agents and linear in dimension of data points. The proposed algorithms differ from each other in the assumptions made for their correctness, and the gradient filter they use. \\
    
    \textbf{Keywords:} distributed regression, byzantine fault tolerance.
\end{abstract}

\section{Introduction}
\label{sec:intro}
This paper considers the problem of Byzantine fault tolerant distributed linear regression in a multi-agent system. The proposed algorithms, however, are applicable for a more general class of distributed optimization problems (described in Section~\ref{sec:pf}) that includes distributed linear regression. The system comprises of a server and $n$ agents, where each agent $i$ holds $n_i$ number of data points and responses, stacked as matrix $X_i \in \R^{n_i \times d}$ and vector $Y_i \in \R^{n_i}$, respectively. Up to $f$ of the $n$ agents in the system are Byzantine faulty and identity of Byzantine faulty agents is apriori unknown to the server~\cite{lamport1982byzantine, lynch1996distributed}. The server knows that if agent $i$ is honest (non-faulty) then its data points and responses satisfy $Y_i = X_i w^*$ for some unknown parameter value $w^* \in \R^d$. The objective of the server is to compute parameter $w^*$, regardless of the identity of Byzantine faulty agents. This seemingly simple problem is challenging to solve due to the adversarial nature of Byzantine faulty agents~\cite{bhatia2015robust}. In fact, it is well known that the existing techniques in robust statistical learning (cf.~\cite{huber2011robust}) are ineffective in solving the aforementioned problem unless certain assumptions on the probability distribution of agents' data points are satisfied~\cite{bhatia2015robust,chen2017distributed,blanchard2017machine}.\\


Existing solutions for Byzantine fault tolerant distributed statistical learning (ref.~\cite{chen2017distributed, blanchard2017machine,damaskinos2018asynchronous,cao2018distributed,bernstein2018signsgd, alistarh2018byzantine, yin2018byzantine, data2018data}) rely on assumptions on the probability distribution of honest agents' data points for accuracy in probabilistic manner (even when their is no noise in the system). Whereas, we are interested in algorithms that can accurately (in absence of noise and with reasonably bounded error in presence of noise) compute $w^*$ in deterministic manner, under certain conditions on $f/n$, regardless of the probability distribution of agents' data points. We also note that all the prior works on Byzantine fault tolerance in distributed statistical learning assume synchronicity in the system, except~\cite{data2018data, damaskinos2018asynchronous} where every agent has access to all the data points and responses. Whereas, the proposed algorithms are \emph{partially asynchronous}, and therefore, robust to bounded delays in the system. \\

It should be noted that the above Byzantine fault tolerant linear regression can be used to solve a wide range of engineering problems pertaining to fault-tolerance or security, such as secure distributed state estimation of control systems~\cite{shoukry2017secure,fawzi2014secure,pajic2017attack, chong2015observability}, secure localization~\cite{li2005robust, zeng2013secure} and secure pattern recognition~\cite{wright2009robust}.



\section{Summary of Contributions}
\label{sec:contri}
We propose two norm based filtering techniques, \emph{norm filtering} and \emph{norm-cap filtering}, that ``robustifies" the original distributed gradient descent algorithm to solve the aforementioned regression problem when $f/n$ is less than specified threshold values\footnote{Refer Section~\ref{sec:conv_anal} and Section~\ref{sec:conv_algo_2} for further details.}. The details of the algorithms are given in Sections~\ref{sec:algo} and~\ref{sec:algo_2}. The proposed algorithms also solve a more general multi-agent optimization problem where the honest agents' objective functions (or costs) satisfy certain assumptions, specified in Section~\ref{sec:pf}. The computational complexity of the proposed filtering techniques is $O(n (d + \log n))$, and the resultant algorithms are shown to be \emph{partially asynchronous}\footnote{Refer Section~\ref{sub:conv_2} for formal details.}. \\

Comparison of our paper with the existing related work is given in the following section.


\section{Related Work}
Existing related work can be broadly classified into four categories:
\begin{enumerate}
    \item Regression with adversarial corruptions to data points or responses.
    \item Byzantine fault tolerant distributed estimation.
    \item Byzantine fault tolerant distributed learning.
    \item Byzantine fault tolerant distributed multi-agent optimization.
\end{enumerate}

\subsection{Regression with adversarial corruptions}
The aforementioned Byzantine fault-tolerant regression problem has been addressed for the centralized setting by many researchers in recent years (ref.~\cite{bhatia2015robust, mcwilliams2014fast, chen2013robust,  ren2019secure, prasad2018robust,diakonikolas2018sever}), where the server has access to all the agents' data points and responses. We are interested in a \emph{distributed} setting, where the data points and responses are distributed amongst agents, and are inaccessible to the server. 

\subsubsection{Challenges of distributed over centralized setting}
The challenges of distributed setting over the centralized counterpart are as follows.
\begin{enumerate}
\item Both the data points and responses of Byzantine faulty agents can be corrupted. Some of the centralized techniques (cf. \cite{bhatia2015robust, mcwilliams2014fast}) assume only corrupted responses. 

\item Agents could be holding large volume of data points and responses, that would make sharing of the entire data set with the server quite expensive in terms of the communication cost. Most of the centralized techniques (cf.~\cite{bhatia2015robust}) require the server to have access to all the agents' data points and responses. 


\item Server and the agents need not be synchronous. All the centralized techniques rely on synchronicity in the system~\cite{bhatia2015robust, chen2013robust, mcwilliams2014fast, prasad2018robust, diakonikolas2018sever}. 
\end{enumerate}

Unlike the centralized techniques, our proposed algorithms do not require agents to share their data points or responses with the server, and it is \emph{partially asynchronous}.
While spectral filters proposed in~\cite{prasad2018robust, diakonikolas2018sever} can be used in the distributed setting, they rely on singular value decomposition (SVD) of agents' costs' gradients (in each iteration) and therefore, are orders of magnitude more computationally complex than the proposed norm based filters. Also, unlike~\cite{prasad2018robust, diakonikolas2018sever}, we are interested in computing $w^*$ precisely (in absence of noise and within a reasonably bounded error in presence of noise) in a deterministic manner. \\

The `hard-thresholding' based robust regression technique in~\cite{bhatia2015robust}, even for the centralized setting, is effective only if the data points satisfy a certain condition. This condition holds with ``high probability" if the probability distribution of the data points is Gaussian with zero mean~\cite{bhatia2015robust}. It should be noted that the efficacy of our proposed algorithms does not depend on any assumptions on the probability distribution of agents' data points. Therefore, the proposed algorithms have a much wider applicability than the solutions proposed in~\cite{bhatia2015robust}, even for the centralized case.

\subsection{Byzantine fault tolerant distributed estimation}
In a closely related work, Su and Shahrampour~\cite{su2018finite} propose \emph{coordinate-wise trimmed mean} filtering for ``robustifying" the distributed gradient descent method in a peer-to-peer network. However, they do not provide an explicit bound on the number of Byzantine faulty agents that can be tolerated using their filtering technique. The convergence of their algorithm relies on a technical assumption (assumption 1 in~\cite{su2018finite}) that imposes additional constraints, than required by our proposed algorithms, on agents' data points. This point is reiterated by an example in Section~\ref{sec:num}. Resilient estimation technique proposed by~\cite{chen2018resilient} requires agents to commit (or share) their data points and responses to the server (or some central authority in their case), whereas we are interested in distributed setting where agents do not share their data points or responses with the server or any other agent in the system. In recent years, there has been a significant amount of work in Byzantine fault-tolerant state estimation (both distributed and centralized) of linear time-invariant (LTI) dynamical systems~\cite{mitra2018byzantineresilient,fawzi2014secure, shoukry2017secure,pajic2017attack, ren2019secure}. However, it should be noted that Byzantine fault-tolerant state estimation (aka. secure state estimation) of LTI dynamical systems is a special case of the considered regression problem (ref.~\cite{mitra2018byzantineresilient,fawzi2014secure, shoukry2017secure,pajic2017attack, ren2019secure}). We also note that our proposed algorithms are significantly (orders of magnitude) simpler than some of the secure state estimation algorithms~\cite{shoukry2017secure,pajic2017attack}, albeit can handle relatively less number of Byzantine faulty agents.


\def\krum{{\sf Krum}}

\subsection{Byzantine fault tolerant distributed statistical learning}
In recent years, significant amount of progress has been made on Byzantine faulty tolerant distributed \emph{statistical parameter learning}~\cite{bernstein2018signsgd, damaskinos2018asynchronous , blanchard2017machine, cao2018distributed, chen2017distributed, alistarh2018byzantine, data2018data, xie2018generalized}. In~\cite{blanchard2017machine, xie2018generalized, damaskinos2018asynchronous, cao2018distributed, data2018data, bernstein2018signsgd} the agents assume the role of workers in the parallelization of the (stochastic) gradient descent method and therefore, agents have access to all the data points. In~\cite{data2018data}, the authors propose a data encoding scheme for tolerating Byzantine faulty workers. Whereas,~\cite{blanchard2017machine, xie2018generalized, damaskinos2018asynchronous, cao2018distributed, bernstein2018signsgd} rely on filters to ``robustify" the original distributed stochastic gradient descent method. In~\cite{chen2017distributed, yin2018byzantine,alistarh2018byzantine}, the agents have distributed data points and responses, however it is assumed that all the agents choose their data points and responses following a common probability distribution. Thus, the filtering (or encoding) techniques proposed in these papers are not guaranteed to be effective for the considered problem setting where no assumptions are made on the probability distribution of agents' data points. Moreover, we are interested in deterministic regression algorithms that compute $w^*$ in a deterministic manner. We also note that the computational complexity for the server in our proposed filtering techniques (both \emph{norm filtering} and \emph{norm-cap filtering}) is $O(n (d + \log n))$, which is significantly less than the filtering techniques proposed in~\cite{blanchard2017machine, chen2017distributed}. 

\subsection{Byzantine fault tolerant distributed multi-agent optimization}
Byzantine faulty tolerant distributed multi-agent optimization has also received considerable attention in recent years~\cite{su2016fault, su2016robust, sundaram2018distributed, su2016multi, fanitabasi2018review}. The objective in that case is to compute the point of minimum of the weighted average cost of the honest agents. If the agents' costs are scalar (i.e. $\R \to \R$) then the server can achieve this objective with weights of at least $n-2f$ honest agents bounded away from zero~\cite{su2016fault, sundaram2018distributed}. This result is extended in~\cite{su2016robust} for multivariate cost functions, where the proposed technique relies on the assumption that agents' costs are weighted linear combination of finite number of convex functions. In general, this assumption does not hold for the regression problem considered in this paper. Further, it is known that the weights can not be uniform when there are non-zero number of Byzantine faulty agents in the system \emph{if} the costs are not correlated~\cite{su2016multi,sundaram2018distributed,su2016fault}. Interestingly, the necessary correlation between honest agents' costs that would admit equal (positive) weights for all the honest agents in Byzantine distributed multi-agent optimization problem remains an open problem. In this paper, we present a sufficient correlation between honest agents' costs under which the weights associated with honest agents' costs are equal and positive. Specifically, if there exists a common point of minimum for all the honest agents' costs (refer Section~\ref{sec:pf}) then the minimizer of the average cost of honest agents can be computed in presence of \emph{limited} (limits specified in Section~\ref{sec:conv_anal} and~\ref{sec:conv_algo_2}) number of Byzantine faulty agents. Moreover, the proposed algorithms solve this multi-agent optimization problem efficiently, under the aforementioned sufficient correlation. \\


Authors in~\cite{yang2017byrdie} extend the results of~\cite{su2016multi} for multivariate cost functions by assuming that the original optimization problem can be split into \emph{independent} scalar sub-problems with \emph{strictly} convex objective costs. This assumption is quite strong and in general, does hold for the considered regression problem setting. Authors in \cite{xu2018robust} solve the Byzantine fault-tolerant distributed optimization problem, assuming that each and every agents' cost is \emph{strongly} convex, which implies that every honest agent can \emph{locally} compute $w^*$ in context of the considered regression problem. This assumption is quite strong (it basically trivializes the considered regression problem), and is not required for the effectiveness of our proposed algorithms. 

\subsection{Norm Clipping in Machine Learning}
\label{sub:norm_clip}
We note that norm clipping (or filtering) of gradients has been proposed before for solving other un-related problems in machine learning, namely the gradient explosion problem in training of recurrent neural networks~\cite{pascanu2012understanding}, and the privacy preservation problem in distributed stochastic gradient descent based training of deep feed-forward neural networks~\cite{shokri2015privacy}. However, in these works the gradients are clipped based on a constant threshold value, that needs to be apriori determined carefully, whereas our filtering techniques rely on relative ranking of gradients' norms at each iteration and does not require computation of any additional threshold value. 

\section*{Paper Organization}
The rest of the paper is organized as follows. In Section~\ref{sec:not}, we introduce the notation used throughout the paper. Section~\ref{sec:pf} presents formal description of the problem addressed, along with the assumptions made in the paper. Section~\ref{sec:algo} presents the first filtering technique, referred as \emph{norm filtering}. Section~\ref{sec:conv_anal} presents the convergence analysis of the resultant gradient descent algorithm with \emph{norm filtering}. Section~\ref{sec:algo_2} presents the second filtering technique, referred as \emph{norm-cap filtering}. Section~\ref{sec:conv_algo_2} presents the convergence analysis of the resultant gradient descent algorithm with \emph{norm-cap filtering}. Section~\ref{sec:num} presents a numerical example for demonstrating the obtained convergence results for the proposed algorithm. Finally, concluding remarks are made in Section~\ref{sec:conclude}. Appendix~\ref{app:noise} discusses the effect of system noise. Appendix~\ref{app:proofs} contains formal proofs of the results.

\section{Notations}
\label{sec:not}
$\Z$, $\N$, $\R$ and $\R^d$ denote sets of integers, natural numbers, real numbers and $d$-dimensional real-valued vectors, respectively. $\Z_{\geq 0}$, $\R_{\geq 0}$ and $\R_{> 0}$ represent non-negative integers, non-negative reals and positive reals, respectively. Let $[n] = \{1,\ldots,\,n\}$. For a vector $v \in \R^d$, $v[k]$ denotes its $k$-th element, and $\norm{v}$ denotes its Euclidean norm (or $2$-norm), which is equal to $\sqrt{\sum_{k} (v[k])^2}$. Notation $[a, b]^d$ for $a \leq b \in \R$ denotes a set of $d$-dimensional vectors with each element belonging to the interval $[a,b]$. For a matrix $M \in \R^{n \times d}$, $M^T$ denotes its transpose and $M[k] \in \R^{d}$ denotes a column vector corresponding its $k$-th row. In other words, $M[k]$ is the $k$-th column of $M^T$. For a set of matrices $\{M_i\}_{i \in S} = \{M_i \,| \, M_i\in \R^{n_i \times d}, \, i \in S\}$, the notation $[M_i]_{i \in S}$ represents the row-wise concatenation of the matrices $\{M_i\}_{i \in S}$ (stacking of the matrices). Thus, $[M_i]_{i \in S}$ is a matrix of dimensions $(\sum_{i \in S}n_i) \times d$. Inner product (or scalar product) of two vectors $v_1,\, v_2$ in $\R^d$ is denoted by $\iprod{v_1}{v_2}$ and is equal to $v^T_1 v_2$. For a multivariate differentiable function $C: \R^d \to\R$, $\nabla C (v)$ denotes is gradient at a point $v \in R^d$. For a finite set $S \subset \Z$, $\mnorm{S}$ denotes its cardinality. For real number $x \in \R$, $\mnorm{x}$ denotes its absolute value.  

\section{Optimization Framework}
\label{sec:pf}
As mentioned earlier, we consider a system of $n$ agents and a server, with communication links between all the agents and the server. Agents do not communicate with each other. The system contains at most $f$ Byzantine faulty agents that can behave arbitrarily~\cite{lynch1996distributed, lamport1982byzantine}. The identity of Byzantine faulty agents is apriori unknown to the server. However, the server knows the value of $f$. Let $\H$ and $\B$ denote the sets of honest (non-faulty) agents and Byzantine faulty agents, respectively. \\

In this paper, we propose an algorithm to solve a distributed multi-agent optimization problem where 
each agent $i \in \H$ is associated with a differentiable \emph{convex} cost $C_i(w): \R^d \to\R$, that satisfies certain assumptions that are mentioned below. The objective of the server is to compute a point of minimum of the average cost of the honest agents, 
\begin{align}
    C_{\H}(w) = \frac{1}{|\H|}\sum_{i \in \H}C_i(w), \quad \forall w \in \R^d \label{eqn:obj}
\end{align}
In Section~\ref{sub:lin_pf}, we demonstrate the applicability of this optimization framework for the case of least squared-error distributed linear regression. In this optimization problem, we assume the following:

\begin{itemize}
    \item[\textbf{(A1)}] \textbf{Unique point of minimum and strong convexity of reduced average cost:} \\
Assume that $C_{\H}$ has a unique point of minimum $w^*$ in a compact and convex set $\W \subset \R^d$. Further, for any $\hat{\H} \subseteq \H$ of cardinality at least $n-f$, assume that the average cost of $\hat{\H}$, i.e. $C_{\hat{\H}} = (1/\mnorm{\hat{\H}})\sum_{i \in \hat{\H}}C_i$, is strongly convex. Specifically, 
\[\iprod{w-w'}{\nabla C_{\hat{\H}}(w) - \nabla C_{\hat{\H}}(w')} \geq \lambda \norm{w - w'}^2,\, \forall w, \, w' \in \R^d\]
where $\lambda \in \R_{> 0}$. 

\item[\textbf{(A2)}] \textbf{$\{C_i\}_{i \in \H}$ minimizes at $w^*$ and $\{\nabla C_i\}_{i \in \H}$ are Lipschitz continuous:} \\
For every $i \in \H$, assume that $C_i(w) \geq C_i(w^*), \, \forall w \in \R^d$, and \[\norm{\nabla C_{i}(w) - \nabla C_{i}(w')} \leq \mu \norm{w - w'}, \, \forall w, \,w' \in \R^d,\]
where $\mu \in \R_{\geq 0}$. 

\item[\textbf{(A3)}] \textbf{Strength of Byzantine faulty agents is less than majority:} \\
Assume that the maximum number of Byzantine faulty agents is less than the half of the total number of agents, i.e. 
\[f < n/2\]  
It should be noted that it is impossible to compute $w^*$ if $f \geq n/2$ in general when no assumptions are made on the probability distribution of honest agents' data points~\cite{bhatia2015robust,fawzi2014secure,shoukry2017secure}.
\end{itemize}

\subsection{Least Squared-Error Distributed Linear Regression}
\label{sub:lin_pf}
Now, consider the distributed linear regression problem where each agent $i \in [n]$ is associated with $n_i$ number of data points and responses, represented by $X_i \in \R^{n_i \times d}$ and $Y_i \in \R^{n_i}$, respectively. The server knows that for each agent $i \in \H$, $Y_i = X_i w^*$ for some parameter $w^* \in \R^d$. The parameter $w^*$ is unknown to the server and is common for all the honest agents (cf.~\cite{bhatia2015robust}). The objective of the server is to learn a value of $w^*$ (need not be unique). To solve this regression problem, each agent $i \in \H$ defines the following squared-error cost 
\[C_i(w) = \frac{1}{2}\norm{Y_i - X_i w}^2 = \frac{1}{2}\left(w^T X_i^T X_i w - 2 X_i^T Y_i w + \norm{Y_i}^2\right), \, \forall w \in \R^d, \, \forall i \in \H\] 
As $v^T X_i^T X_i v = \norm{X_i v}^2, \, \forall v \in \R^d$, thus $X_i^T X_i$ is a positive semi-definite matrix. Thus, $C_i$ is convex for all $i \in \H$. Here,
\[\nabla C_i(w) = X_i^T (X_i w - Y_i), \, \forall w \in \R^d, \, \forall i \in \H\]
As $Y_i = X_i w^*, \, \forall i \in \H$, thus $\nabla C_i(w^*) = 0,\, \forall i \in \H$. As the costs $\{C_i\}_{i \in \H}$ are convex, this implies that $w^*$ is a point of minimum for all $\{C_i\}_{i \in \H}$. 
As $X_i^T X_i$ is positive semi-definite, therefore (cf.~\cite{horn1990matrix}) 
\[0 \leq v^T \left(X_i^T X_i\right)^2 v \leq \overline{\nu}_i^2\norm{v}^2, \, \forall v \in \R^d, \, \forall i \in \H\] 
where $\overline{\nu}_i$ is the largest eigenvalue of $X_i^TX_i$. This implies,
\[\norm{\nabla C_i(w) - \nabla C_i(w')} = \norm{X_i^T X_i (w - w')} = \sqrt{(w-w')^T (X_i^TX_i)^2 (w-w')} \leq \overline{\nu}_i \norm{w - w'}\]
for all  $w, \, w' \in \R^d$. Thus, for $\mu = \max_{i \in \H}\overline{\nu}_i \geq 0$, we get
\[\norm{\nabla C_i(w) - \nabla C_i(w')} \leq \mu \norm{w - w'}, \, \forall w, \, w' \in \R^d, \, \forall i \in \H\]
Hence, assumption~\textbf{(A2)} holds naturally for the case of least squared-error linear regression. For any set $\hat{\H} \subseteq \H$, the average cost $C_{\hat{\H}}$ is 
\[C_{\hat{\H}}(w) = \frac{1}{\mnorm{\hat\H}}\sum_{i \in \hat\H} C_i = \frac{1}{2\mnorm{\hat\H}}\sum_{i \in \hat\H}\norm{Y_i - X_i w}^2 = \frac{1}{2\mnorm{\hat\H}} \norm{Y_{\hat{\H}} - X_{\hat\H} w}^2, \, \forall w \in \R^d\]
where, $Y_{\hat\H} = [Y_i]_{i \in \hat\H}$ and $X_{\hat\H} = [X_i]_{i \in \hat{\H}}$ are the stacked responses and data points of all the agents in $\hat\H$. Thus, 
\[\nabla C_{\hat\H}(w) = \frac{1}{\mnorm{\hat\H}} X_{\hat\H}^T (X_{\hat\H} w - Y_{\hat\H}), \, \forall w \in \R^d\]
Therefore, 
\[\iprod{w-w'}{\nabla C_{\hat\H}(w) - \nabla C_{\hat\H}(w')} = \frac{1}{\mnorm{\hat\H}} (w - w')^T X_{\hat\H}^T X_{\hat\H} (w - w') \geq \frac{\underline{\nu}_{\hat\H}}{\mnorm{\hat\H}} \norm{w - w'}^2, \, \forall w, \, w' \in \R^d\]
where, $\underline{\nu}_{\hat\H}$ is the smallest eigenvalue of $X_{\hat\H}^TX_{\hat\H}$. Thus, if the stacked matrix $[X_i]_{i \in \hat{\H}}$ has rank equal to $d$, i.e. $w^*$ can be uniquely computed from the responses and data points of honest agents in $\hat\H$, then not only $w^*$ is the unique point of minimum of $C_{\hat{\H}}(w)$, but $C_{\hat\H}$ is also strongly convex as $\underline{\nu}_{\hat\H} > 0$ (cf.~\cite{horn1990matrix}). In other words, if $w^*$ can be uniquely determined given the data points and responses of agents in $\hat\H$, for all $\hat\H \subseteq \H$ of cardinality $n-f$ then assumption~\textbf{(A1)} holds, and
\[\lambda = \frac{1}{\mnorm{\hat\H}}\left(\min_{\hat\H \subseteq \H, \, \mnorm{\hat\H} = n-f} \underline{\nu}_{\hat\H} \right) > 0\]
In the discussion above, we only consider the noiseless case. However, the proposed algorithms are effective even when there is (bounded) noise in the system, as discussed in Appendix~\ref{app:noise}. 
\section{Algorithm-I: Gradient Descent with Norm Filtering}
\label{sec:algo}

The algorithm follows the philosophy of gradient descent based optimization. The server starts with an arbitrary estimate of the parameter and updates it iteratively in two simple steps. In the first step, the server collects gradients of all the agents' costs (at the current estimated value of the parameter) and sort them in the increasing order of their $2$-norms (breaking ties arbitrarily in the order). In the second step, the server filters out the gradients with $f$ largest $2$-norms, and uses the (vector) sum of the remaining gradients as update direction. Therefore, the filtering scheme is referred as \emph{norm filtering}. The algorithm is formally described as follows. \\

Server begins with an arbitrary estimate $w^0 \in \W$ of the parameter $w^*$ and iteratively updates it using the following steps. We let $w^t$ denote the parameter estimate at time $t \in \Z_{\geq 0}$.

\begin{enumerate}
    \item[S1:] At each time $t \in \Z_{\geq 0}$, the server requests from each agent the gradient of its cost at the current estimate $w^{t}$, and sorts the received gradients by their norms. Let,
\[\norm{g^t_{i_1}} \leq \ldots \leq \norm{ g^t_{i_{n-f}}} \leq \ldots \leq \norm{g^t_{i_{n}}}\]
where, $i_k \in [n], \, \forall k \in [n]$ and $g_i^t$ denotes the gradient reported by agent $i$ at time $t$. Note that if $i \in \B$ then $g^t_i = \star$ (arbitrary), and if $i \in \H$ and the system is synchronous then $g^t_i = \nabla C_i(w^t)$ (asynchronous case is discussed in Section~\ref{sub:conv_2}). Let, 
\begin{align}
    \F_t = \{i_1,\ldots, i_{n-f}\} \label{eqn:filter}
\end{align}
be the set of agents with $n-f$ smallest gradient norms at time $t$. 
    \item[S2:] The server updates $w^t$ as,
\begin{align}
    w^{t+1} = \left[ w^t - \eta_{t} \cdot \sum_{\sigma \in \F_t} g^t_{\sigma} \right]_{\W}, \, \forall t \in \Z_{\geq 0} \label{eqn:algo_1}
\end{align}
where, $\{\eta_{t}\}$ is a sequence of bounded positive real values and $[\,\cdot \,]_\W$ denotes projection onto $\W$ w.r.t. Euclidean norm, i.e. $[w]_{\W} = \arg \min_{v \in \W} \norm{w - v}, \, \forall w \in \R^d$.
\end{enumerate}

\subsection{Computational Complexity}
\label{sub:comp}
In Step S1, the server computes the norm of all reported gradients in $O(nd)$ time. Sorting of these norms takes additional $O(n\log n)$ time. Thus, the net computational complexity of norm filtering (for the server) is $O(n (d + \log n))$. Whereas, computational complexity of each agent $i \in \H$ is $O(n_i d)$.\\

In Step S2, the server adds all the vectors in set $\F_t$ to update its parameter estimate in $O(nd)$ time. The projection of the updated estimate on a known compact convex set $\W$, defined using affine constraints (a bounded polygon), can be done in $O(d^{3})$ time using quadratic programming algorithm in~\cite{ye1989extension}.
Therefore, the net computational complexity of the algorithm (for the server) is $O(n (d + \log n) + d^3)$ per iteration.

\subsection{Intuition}
The principal factor behind the convergence of the proposed algorithm is consensus amongst all the honest agents on $w^*$. Norm filtering bounds the norms of all the gradients used for computing the update direction (even if they are Byzantine faulty gradients) by norm of an honest agent's gradient (as there could be at most $f$ Byzantine faulty agents). This has two-fold implications,
\begin{enumerate}
    \item As the gradients of all the honest agents' costs vanish at $w^*$ (cf. assumption~\textbf{(A2)} and Claim~\ref{clm:red}), therefore $w^*$ is ensured to be a fixed-point of the iterative algorithm~\eqref{eqn:algo_1}.
    \item As gradients of all the honest agents' costs are Lipschitz continuous (assumption~\textbf{(A2)}), therefore the magnitude of the contribution of the adversarial gradients (reported by Byzantine faulty agents) in the update direction is bounded above by the separation between current estimate $w^t$ and $w^*$ (cf. Claim~\ref{clm:red}).
\end{enumerate}
The proposed filtering allows contribution of at least $n-2f$ honest agents' gradients ($f<n/2$ by assumption~\textbf{(A3)}), that pushes the current estimate $w^t$ towards $w^*$ with force that is also proportional to the separation between current estimate $w^t$ and $w^*$ for small enough $f/n$, due to the strong convexity assumption~\textbf{(A1)}. This gives us an intuition that effect of adversarial gradients can be overpowered by the honest agents' gradients in Step S2 at all times if $f/n$ is small enough. \\

The insight above is conducive to the formal convergence results presented in the next section, for both synchronous (Section~\ref{sub:sync}) and asynchronous (Section~\ref{sub:conv_2}) cases.

\section{Convergence Analysis: Algorithm-I}
\label{sec:conv_anal}

Before we present the convergence results for Algorithm-I, let us note the following implications of assumptions~\textbf{(A1)} and~\textbf{(A2)}.
\begin{claim}
\label{clm:red}
Assumptions \textbf{(A1)}-\textbf{(A2)} imply that
\begin{align}
    \mu \geq \lambda. \label{eqn:imp_1}
\end{align}
Moreover, if $f/n < 1 / (1 + (\mu/\lambda))$ then for any $\H' \subset \H$ of cardinality $|\H'| = n - 2f$, we get
\begin{align}
    \nabla C_{\H'} (w) = 0 \text{ in } \W \text{ iff } w = w^* \label{eqn:restrict}
\end{align}
where, $C_{\H'} = (1/|\H'|)\sum_{i \in \H'}C_i$.
\end{claim}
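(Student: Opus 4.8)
The plan is to prove the two assertions separately, first the elementary inequality $\mu \geq \lambda$, and then the more substantial fixed-point characterization of $w^*$ for honest subsets of size $n-2f$. For $\mu \geq \lambda$, I would note that the $\mu$-Lipschitz bound on each $\nabla C_i$ from \textbf{(A2)} passes to any average by the triangle inequality, so $\norm{\nabla C_{\hat{\H}}(w) - \nabla C_{\hat{\H}}(w')} \leq \mu \norm{w - w'}$ for every $\hat{\H} \subseteq \H$. Feeding this into the strong-convexity inequality of \textbf{(A1)} via Cauchy--Schwarz gives
\[
\lambda \norm{w-w'}^2 \leq \iprod{w-w'}{\nabla C_{\hat{\H}}(w) - \nabla C_{\hat{\H}}(w')} \leq \norm{w-w'}\,\norm{\nabla C_{\hat{\H}}(w) - \nabla C_{\hat{\H}}(w')} \leq \mu \norm{w-w'}^2,
\]
and taking any $w \neq w'$ yields $\lambda \leq \mu$.

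For the second part, the ``if'' direction is immediate: by \textbf{(A2)} the point $w^*$ globally minimizes each differentiable convex $C_i$, so $\nabla C_i(w^*) = 0$ for all $i \in \H$, whence $\nabla C_{\H'}(w^*) = 0$. The crux is the ``only if'' direction. The key idea is to repair the strong convexity that $C_{\H'}$ lacks (its index set has size only $n-2f < n-f$, so \textbf{(A1)} does not apply directly) by enlarging $\H'$. Since at most $f$ agents are faulty, $\mnorm{\H} \geq n-f$, so I can choose an honest superset $\hat{\H} \supseteq \H'$ with $\mnorm{\hat{\H}} = n-f$, and write $\hat{\H} = \H' \cup \H''$ with $\mnorm{\H''} = f$. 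Now \textbf{(A1)} applies to $\hat{\H}$. Decomposing the averaged gradient and using $\nabla C_i(w^*) = 0$ on $\H''$ gives
\[
(n-f)\,\nabla C_{\hat{\H}}(w) = (n-2f)\,\nabla C_{\H'}(w) + \sum_{i \in \H''}\bigl(\nabla C_i(w) - \nabla C_i(w^*)\bigr).
\]

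Assuming $\nabla C_{\H'}(w) = 0$ annihilates the first term, after which I would pair both sides with $w - w^*$. On the left, strong convexity of $C_{\hat{\H}}$ combined with $\nabla C_{\hat{\H}}(w^*) = 0$ gives the lower bound $(n-f)\lambda \norm{w-w^*}^2$; on the right, Cauchy--Schwarz and the $\mu$-Lipschitz bound applied termwise over the $f$ members of $\H''$ give the upper bound $f\mu \norm{w-w^*}^2$. Combining yields $(n-f)\lambda \norm{w-w^*}^2 \leq f\mu \norm{w-w^*}^2$, so if $w \neq w^*$ then $\lambda(n-f) \leq f\mu$, which rearranges to $f/n \geq 1/(1 + \mu/\lambda)$. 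This contradicts the hypothesis $f/n < 1/(1 + \mu/\lambda)$, forcing $w = w^*$.

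The main obstacle I anticipate is not any single estimate but the augmentation step and its bookkeeping: I must confirm that an honest $\hat{\H}$ of size exactly $n-f$ containing $\H'$ exists (it does, as $\mnorm{\H \setminus \H'} \geq (n-f)-(n-2f) = f$), and that carefully tracking the $n-2f$, $f$, and $n-f$ normalizations produces precisely the advertised threshold rather than a weaker one. Once the decomposition is in hand, the remainder collapses to the two one-line bounds (strong convexity on one side, Cauchy--Schwarz plus Lipschitz continuity on the other).
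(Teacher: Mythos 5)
Your proposal is correct and follows essentially the same route as the paper's proof: both establish $\mu \geq \lambda$ by playing the Lipschitz bound (via Cauchy--Schwarz) against strong convexity, and both prove the fixed-point characterization by augmenting $\H'$ to an honest set $\hat{\H}$ of size $n-f$, decomposing $\nabla C_{\hat{\H}}$ into the $\H'$-average plus $f$ leftover gradients, and bounding the two pieces with \textbf{(A1)} and \textbf{(A2)} to arrive at the threshold $n\lambda > f(\lambda+\mu)$. The only cosmetic difference is that you phrase the conclusion as a contradiction from $\nabla C_{\H'}(w)=0$, whereas the paper derives the stronger intermediate bound $\iprod{w-w^*}{\nabla C_{\H'}(w)} \geq \zeta\norm{w-w^*}^2$ with $\zeta>0$ and reads off the claim.
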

\begin{proof}
Refer to Appendix~\ref{sub:red}.
\end{proof}

We rely on the following sufficient criterion for the convergence of non-negative sequences.
\begin{lemma}[Ref. Bottou, 1998 \cite{bottou1998online}]
\label{lem:seq_conv}
Consider a sequence of real values $\{u_t\}, \, t \in \Z_{\geq 0}$. If $u_t \geq 0, \,\forall t \in \Z_{\geq 0}$ then 
\begin{align}
    \sum_{t = 0}^\infty (u_{t+1} - u_t)_{+} = S^{+}_{\infty} < \infty \implies \left\{\begin{array}{c} u_t \underset{t \to \infty}{\longrightarrow} u_\infty < \infty \\ \\ \sum_{t = 0}^\infty (u_{t+1} - u_t)_{-} = S^{-}_{\infty} > -\infty \end{array}\right.
\end{align}
where the operators $(\cdot)_{+}$ and $(\cdot)_{-}$ are defined as follows ($x \in \R$),
\begin{align*}
    (x)_{+} = \left\{\begin{array}{ccc} x &, & x > 0\\ 0 &, & \text{otherwise} \end{array}\right. \text{, and } (x)_{-} = \left\{\begin{array}{ccc} 0 &, & x > 0\\ x &, & \text{otherwise} \end{array}\right.
\end{align*}
In other words, convergence of infinite sum of positive variations of a non-negative sequence is sufficient for the convergence of the sequence and infinite sum of its negative variations. 
\end{lemma}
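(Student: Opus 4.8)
The plan is to exploit the telescoping structure of the sequence together with the elementary decomposition $x = (x)_{+} + (x)_{-}$, which holds for every real $x$ under the given definitions of the positive and negative parts. Writing $\delta_t = u_{t+1} - u_t$ and summing over $t$ from $0$ to $T$ gives $u_{T+1} - u_0 = \sum_{t=0}^{T}\delta_t = S_T^{+} + S_T^{-}$, where $S_T^{+} = \sum_{t=0}^T (\delta_t)_{+}$ and $S_T^{-} = \sum_{t=0}^T (\delta_t)_{-}$. Thus the entire argument reduces to understanding the two monotone partial sums $S_T^{+}$ and $S_T^{-}$ and passing to the limit in the identity $u_{T+1} = u_0 + S_T^{+} + S_T^{-}$.

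First I would observe that $S_T^{+}$ is non-decreasing in $T$, since each added term $(\delta_t)_{+} \geq 0$, and by hypothesis it converges to the finite limit $S_\infty^{+}$; in particular $S_T^{+} \leq S_\infty^{+}$ for all $T$. Symmetrically, $S_T^{-}$ is non-increasing in $T$, since each added term $(\delta_t)_{-} \leq 0$, so it either converges to a finite limit or diverges to $-\infty$. The crucial step is to rule out divergence using the non-negativity of $\{u_t\}$: rearranging $u_{T+1} = u_0 + S_T^{+} + S_T^{-}$ and using $u_{T+1} \geq 0$ yields $S_T^{-} \geq -u_0 - S_T^{+} \geq -u_0 - S_\infty^{+}$, a finite lower bound independent of $T$. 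A non-increasing sequence bounded below converges, so $S_T^{-} \to S_\infty^{-} > -\infty$, which is exactly the second conclusion.

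Finally, substituting both limits back into $u_{T+1} = u_0 + S_T^{+} + S_T^{-}$ shows that $u_t$ converges to the finite value $u_0 + S_\infty^{+} + S_\infty^{-}$, giving the first conclusion. The proof is essentially an application of the monotone convergence theorem for real sequences, and I do not anticipate a genuine obstacle; the only point requiring care is the interplay between the two partial sums. The finiteness of $S_\infty^{+}$ alone does not bound $u_t$, and it is precisely the hypothesis $u_t \geq 0$ that converts the bound on positive variations into a lower bound on the negative variations. Keeping this dependency straight is the heart of the argument.
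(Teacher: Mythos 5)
Your proof is correct and complete. The paper itself does not prove this lemma --- it is quoted from Bottou (1998) without proof --- so there is no in-paper argument to compare against; your argument (telescoping $u_{T+1}-u_0 = S_T^{+}+S_T^{-}$ via the decomposition $x=(x)_{+}+(x)_{-}$, bounding $S_T^{-}$ below using $u_{T+1}\geq 0$ and the finiteness of $S_\infty^{+}$, then invoking monotone convergence for both partial sums) is the standard one and correctly isolates the role of non-negativity.
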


\subsection{Convergence With Full Synchronism}
\label{sub:sync}
We now present the sufficient conditions under which the proposed algorithm converges to $w^*$ when the server and honest agents are synchronous, i.e. we assume: \\

\noindent \textbf{(A4)} \textbf{Full Synchronism:} $g^t_i = \nabla C_i(w^t), \, \forall i \in \H$ for all $t \in \Z_{\geq 0}$.

\begin{theorem}
\label{thm:mr_1}
Under assumptions \textbf{(A1)}-\textbf{(A4)}, if $\sum_{t = 0}^{\infty} \eta_t = \infty$, $\sum_{t = 0}^{\infty} \eta^2_t < \infty$, and
\begin{align}
    \frac{f}{n} < \frac{1}{1 + 2(\mu/\lambda)} \label{eqn:cond_1}
\end{align}
then the sequence of parameter estimates $\{w^t\}$, generated by~\eqref{eqn:algo_1}, converges to $w^*$.
\end{theorem}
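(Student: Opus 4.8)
The plan is to track the squared distance to the optimum, $u_t = \norm{w^t - w^*}^2$, as a Lyapunov function and drive it to zero using the Bottou criterion of Lemma~\ref{lem:seq_conv}. First I would use that $w^* \in \W$ and that Euclidean projection onto the convex set $\W$ is non-expansive to discard the projection in~\eqref{eqn:algo_1}, giving
\begin{align*}
    u_{t+1} \leq \norm{w^t - \eta_t \textstyle\sum_{\sigma \in \F_t} g^t_\sigma - w^*}^2 = u_t - 2\eta_t \iprod{w^t - w^*}{\textstyle\sum_{\sigma \in \F_t} g^t_\sigma} + \eta_t^2 \norm{\textstyle\sum_{\sigma \in \F_t} g^t_\sigma}^2 .
\end{align*}
The entire argument then reduces to lower-bounding the drift (inner-product) term by $\alpha\, u_t$ for some $\alpha > 0$ and upper-bounding the quadratic term by $\beta\, u_t$.

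The key structural fact about norm filtering that I would establish is that every retained gradient is norm-dominated by an honest one: since at most $f$ agents are faulty and $\F_t$ discards exactly the $f$ largest norms, no retained $g^t_\sigma$ can strictly exceed \emph{every} honest gradient in norm, for otherwise it would outrank all $\geq n-f$ honest gradients and hence be discarded. Combining this with assumption~\textbf{(A2)} — which gives $\nabla C_i(w^*) = 0$ for each $i \in \H$ (as $C_i$ is minimized at $w^*$) together with $\mu$-Lipschitzness — yields $\norm{g^t_\sigma} \leq \mu \norm{w^t - w^*}$ for all $\sigma \in \F_t$. This at once controls the quadratic term, $\norm{\sum_{\sigma \in \F_t} g^t_\sigma}^2 \leq (n-f)^2 \mu^2\, u_t$, so with $\beta = (n-f)^2\mu^2$ the $\eta_t^2$-contribution is summable because $w^t \in \W$ compact keeps $u_t$ bounded.

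For the drift term I would split $\F_t$ into its honest part $\H_t = \F_t \cap \H$ (with $|\H_t| \geq n-2f$) and faulty part $\B_t = \F_t \cap \B$ (with $|\B_t| \leq f$). The subtlety on the honest part is that~\textbf{(A1)} only guarantees strong convexity for subsets of size at least $n-f$, whereas $\H_t$ can be smaller. I would therefore start from the full honest set $\H$ (size $\geq n-f$, so \textbf{(A1)} applies) to obtain $\iprod{w^t - w^*}{\sum_{i \in \H}\nabla C_i(w^t)} \geq |\H|\lambda\, u_t$, and then subtract the at most $f$ excluded honest gradients $\H \setminus \H_t$, each contributing at least $-\mu\, u_t$ via Cauchy--Schwarz and Lipschitzness; the faulty part $\B_t$ is bounded below by $-f\mu\, u_t$ using the retained-norm bound above. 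Collecting these,
\begin{align*}
    \iprod{w^t - w^*}{\textstyle\sum_{\sigma \in \F_t} g^t_\sigma} \geq \big((n-f)\lambda - 2f\mu\big)\, u_t ,
\end{align*}
so $\alpha = (n-f)\lambda - 2f\mu > 0$ exactly when $f/n < 1/(1 + 2(\mu/\lambda))$, i.e.\ under~\eqref{eqn:cond_1}. This is the step I expect to be the crux: accounting for the gap between the $n-2f$ retained honest gradients and the $n-f$ required for strong convexity is precisely what forces the factor $2\mu/\lambda$ here, rather than the $\mu/\lambda$ appearing in Claim~\ref{clm:red}.

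Finally I would assemble $u_{t+1} \leq (1 - 2\eta_t\alpha + \eta_t^2\beta)\, u_t$ and invoke Lemma~\ref{lem:seq_conv}. Since the negative drift term is non-positive, $(u_{t+1} - u_t)_+ \leq \eta_t^2 \beta\, u_t \leq \eta_t^2 \beta\, \mathrm{diam}(\W)^2$, whose sum is finite because $\sum_t \eta_t^2 < \infty$; hence the lemma gives $u_t \to u_\infty < \infty$. Summing the recursion then yields $\sum_t \eta_t\, u_t < \infty$, which together with $\sum_t \eta_t = \infty$ and $u_t \to u_\infty$ forces $u_\infty = 0$ (otherwise $u_t$ would stay bounded away from zero and the weighted sum would diverge). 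Therefore $\norm{w^t - w^*} \to 0$, establishing convergence to $w^*$.
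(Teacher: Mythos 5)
Your proposal is correct and follows essentially the same route as the paper's proof: the same Lyapunov function $\norm{w^t - w^*}^2$, the same non-expansive-projection step, the same key observation that every retained gradient is norm-dominated by some honest gradient (hence bounded by $\mu\norm{w^t - w^*}$ via \textbf{(A2)}), and the same Bottou-lemma conclusion. Your bookkeeping of the drift term (full honest set $\H$ minus the at most $f$ excluded honest agents, plus the at most $f$ retained faulty agents) differs only cosmetically from the paper's (a fixed size-$(n-2f)$ honest subset $\H^t_1 \subset \F_t$ compared against a size-$(n-f)$ honest superset), and both yield the identical constant $(n-f)\lambda - 2f\mu = n\lambda - f(\lambda + 2\mu)$, hence the same threshold~\eqref{eqn:cond_1}.
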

\begin{proof}
Refer Appendix~\ref{sub:mr_1}.
\end{proof}


Theorem \ref{thm:mr_1} states that if $f/n$ is less than $1/(1 + 2(\mu/\lambda))$ then the proposed algorithm will reach the point of minimum of the $C_{\H}$ asymptotically under assumptions \textbf{(A1)}-\textbf{(A4)}. As assumptions \textbf{(A1)}-\textbf{(A3)} also imply that $\mu \geq \lambda$ (cf.~Claim \ref{clm:red}), thus $f$ (maximum allowable Byzantine agents) should be less than one-third of $n$ (total number of agents) for the proposed algorithm to converge to $w^*$.\\

If assumptions \textbf{(A1)}-\textbf{(A2)} and condition~\eqref{eqn:cond_1} are satisfied, then 
\[f/n < 1/(1 + 2(\mu/\lambda)) < 1/(1 + (\mu/\lambda))\] 
and thus (cf. Claim~\ref{clm:red}), 
\begin{align*}
    \nabla C_{\H'}(w) = 0 \text{ in } \W \text{ iff } w = w^*
\end{align*}
for all $\H' \subset \H$ subject to $\mnorm{\H'} = n-2f$. In other words, the point of minimum of the average cost of any $n-2f$ honest agents is the point of minimum of the average cost of all honest agents. Therefore, under condition~\eqref{eqn:cond_1} and assumptions~\textbf{(A1)}-\textbf{(A2)}, $C_{\H'}$ is indeed strongly convex for all $\H' \subset \H$ of cardinality $n-2f$.\\

It is known, from control systems literature~\cite{fawzi2014secure, pajic2014robustness, shoukry2017secure, chong2015observability}, that the considered linear regression problem can be solved in presence of at most $f$ Byzantine faulty agents \emph{only if} matrix 
\[X_{\H'} = [X_i]_{i \in \H'} \in \R^{(\sum_{i \in \H'}n_i)\times d}\]
has rank equal to $d$ for every subset $\H' \subset \H$ of cardinality $n-2f$. In light of this information, we make the following additional assumption on the costs $\{C_i\}_{i \in \H}$ to improve the tolerance bound on $f/n$.

\begin{itemize}
    \item[\textbf{(A5)}] \textbf{Uniform $f$-Redundancy:} \\
    For any $\H' \subset \H$ of cardinality $n-2f$, we assume that
\[\iprod{w - w'}{\nabla C_{\H'}(w) - \nabla C_{\H'}(w') } \geq \gamma \norm{w-w'}^2, \ \forall w, \, w' \in \R^d\]
where, $C_{\H'}(w) = (1/\mnorm{\H'}) \sum_{i \in \H'}C_{i}(w)$ and $\gamma \in \R_{> 0}$.
\end{itemize}

For the case of least squared-error linear regression (refer Section~\ref{sub:lin_pf}), similar to $\lambda$ in assumption~\textbf{(A1)}, we have
\[\gamma = \frac{1}{\mnorm{\H'}}\left(\min_{\H' \subset \H, \, \mnorm{\H'} = n-2f} \underline{\nu}_{\H'} \right)\]
where, $\underline{\nu}_{\H'}$ is the smallest eigenvalue of $X_{\H'}^T X_{\H'}$. We refer the above redundancy as \emph{uniform} because it is required to hold for all $\H' \subset \H$ of cardinality $n-2f$. This $f$-redundancy property of the regression problem is also referred as $2f$-\emph{sparse observability} in control systems literature~\cite{chong2015observability}. Also, note that assumption~\textbf{(A5)} is meaningful only if assumption~\textbf{(A3)} holds, i.e. $f < n/2$. \\

Similar to Claim~\ref{clm:red}, 
\begin{claim}
\label{clm:mu_gamma}
Assumptions \textbf{(A2)}-\textbf{(A3)} and~\textbf{(A5)} imply that $\mu \geq \gamma$
\end{claim}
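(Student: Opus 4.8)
The plan is to mirror exactly the argument that establishes $\mu \geq \lambda$ in Claim~\ref{clm:red}, now with the averaged cost over a set of size $n-2f$ instead of $n-f$. First I would fix an arbitrary set $\H' \subset \H$ with $\mnorm{\H'} = n-2f$ and check that such a set genuinely exists, so that both~\textbf{(A5)} and~\textbf{(A2)} apply to it. By assumption~\textbf{(A3)} we have $f < n/2$, hence $n-2f \geq 1$, so $\H'$ is nonempty; and since at most $f$ agents are faulty we have $\mnorm{\H} \geq n-f \geq n-2f$, so a subset of the honest agents of the required cardinality is available. On this $\H'$ the quantity $\gamma$ in~\textbf{(A5)} is well-defined and strictly positive.

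Next I would show that the averaged gradient $\nabla C_{\H'} = (1/\mnorm{\H'})\sum_{i \in \H'}\nabla C_i$ inherits the Lipschitz constant $\mu$ from the individual costs. Because each $\nabla C_i$ is $\mu$-Lipschitz by~\textbf{(A2)}, the triangle inequality applied to the (normalized) sum gives
\[
\norm{\nabla C_{\H'}(w) - \nabla C_{\H'}(w')} \leq \frac{1}{\mnorm{\H'}}\sum_{i \in \H'}\norm{\nabla C_i(w) - \nabla C_i(w')} \leq \mu \norm{w - w'}, \quad \forall w,\, w' \in \R^d .
\]
The key (if elementary) point here is that averaging $\mu$-Lipschitz maps preserves, and does not improve, the constant $\mu$, so $\mu$ remains a valid upper bound on the modulus of the averaged gradient.

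Finally I would combine this Lipschitz bound with the monotonicity estimate of~\textbf{(A5)} through the Cauchy–Schwarz inequality. For any $w \neq w'$, assumption~\textbf{(A5)} yields $\gamma \norm{w-w'}^2 \leq \iprod{w-w'}{\nabla C_{\H'}(w) - \nabla C_{\H'}(w')}$, while Cauchy–Schwarz together with the bound above gives $\iprod{w-w'}{\nabla C_{\H'}(w) - \nabla C_{\H'}(w')} \leq \norm{w-w'}\,\norm{\nabla C_{\H'}(w) - \nabla C_{\H'}(w')} \leq \mu \norm{w-w'}^2$. Chaining these and dividing by $\norm{w-w'}^2 > 0$ delivers $\gamma \leq \mu$, which is the claim.

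I do not expect a genuine obstacle in this argument: the entire content is the interplay between the $\mu$-Lipschitz upper bound and the $\gamma$-strong-monotonicity lower bound, and the only care needed is the bookkeeping of paragraph one, namely verifying via~\textbf{(A3)} that a subset $\H'$ of size $n-2f$ exists so that~\textbf{(A5)} is non-vacuous. The structure is identical to the $\mu \geq \lambda$ part of Claim~\ref{clm:red}, simply replacing $n-f$ by $n-2f$ and $\lambda$ by $\gamma$.
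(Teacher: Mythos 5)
Your proposal is correct and follows essentially the same route as the paper's proof: chain the $\gamma$-strong-monotonicity lower bound from \textbf{(A5)} against a $\mu$-based upper bound obtained via Cauchy--Schwarz and the Lipschitz property from \textbf{(A2)}, then cancel $\norm{w-w'}^2$. The only cosmetic difference is that the paper anchors the argument at $w^*$ (using $\nabla C_i(w^*)=0$ to bound each $\iprod{w-w^*}{\nabla C_i(w)}$ individually before averaging), whereas you bound the Lipschitz constant of the averaged gradient directly for arbitrary $w, w'$; both yield the same inequality.
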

\begin{proof}
Refer Appendix~\ref{sub:mu_gamma}
\end{proof}

With assumption~\textbf{(A5)}, we get the following alternate convergence result for the proposed algorithm.
\begin{theorem}
\label{thm:mr_2}
Under assumptions \textbf{(A1)}-\textbf{(A5)}, if $\sum_{t = 0}^{\infty} \eta_t = \infty$, $\sum_{t = 0}^{\infty} \eta^2_t < \infty$, and
\begin{align}
    \frac{f}{n} < \frac{1}{2 + \mu/\gamma} \label{eqn:cond_2}
\end{align}
then the sequence of parameter estimates $\{w^t\}$, generated by~\eqref{eqn:algo_1}, converges to $w^*$.
\end{theorem}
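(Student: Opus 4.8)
The plan is to run the standard supermartingale-style Lyapunov argument on $u_t = \norm{w^t - w^*}^2$, which is the same skeleton I expect underlies Theorem~\ref{thm:mr_1}, but with the global constant $\lambda$ of \textbf{(A1)} replaced by the redundancy constant $\gamma$ of \textbf{(A5)}. Since $w^* \in \W$ and projection onto the convex set $\W$ is non-expansive, the update~\eqref{eqn:algo_1} yields
$$u_{t+1} \leq u_t - 2\eta_t \iprod{w^t - w^*}{\sum_{\sigma \in \F_t} g^t_\sigma} + \eta_t^2 \norm{\sum_{\sigma \in \F_t} g^t_\sigma}^2.$$
All of the content is then in lower-bounding the inner-product term and upper-bounding the squared-norm term, both by constant multiples of $u_t$.

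For the inner product I would split $\F_t$ into its honest part $H_t = \F_t \cap \H$ and its faulty part $B_t = \F_t \cap \B$. Because $|\F_t| = n-f$ and $|B_t| \leq f$, we have $|H_t| \geq n-2f$, so $H_t$ contains some $\H' \subseteq H_t$ of cardinality exactly $n-2f$. Each honest $C_i$ is convex and minimized at $w^*$, so $\nabla C_i(w^*) = 0$ and each term $\iprod{w^t - w^*}{\nabla C_i(w^t)}$ is non-negative; discarding the honest contributions outside $\H'$ and applying \textbf{(A5)} to $\H'$ then gives
$$\iprod{w^t - w^*}{\sum_{i \in H_t} \nabla C_i(w^t)} \geq (n-2f)\,\gamma\, u_t.$$
This is the single place where the proof departs from Theorem~\ref{thm:mr_1}: there one is forced to lower-bound through the full set $\H$ (incurring a second $f\mu$ penalty from the honest gradients that get filtered out), whereas \textbf{(A5)} lets us apply strong convexity directly to a subset of the retained honest gradients, avoiding that penalty and improving the threshold from~\eqref{eqn:cond_1} to~\eqref{eqn:cond_2}.

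The crux — and the step I expect to be the main obstacle — is controlling the adversarial contribution $\iprod{w^t - w^*}{\sum_{j \in B_t} g^t_j}$, which is where norm filtering earns its keep. By Cauchy--Schwarz it is at least $-\norm{w^t - w^*}\sum_{j \in B_t}\norm{g^t_j}$, so I must bound $\sum_{j \in B_t}\norm{g^t_j}$ by honest quantities. The counting observation is that exactly $f$ agents are discarded by the filter, of which at most $|\B| - |B_t| \leq f - |B_t|$ are faulty, hence at least $|B_t|$ of the discarded agents are honest; and every discarded gradient has norm no smaller than any retained norm, in particular no smaller than $\norm{g^t_j}$ for $j \in B_t$. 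Matching each $j \in B_t$ to a distinct discarded honest agent and using the Lipschitz bound of \textbf{(A2)} (again with $\nabla C_i(w^*) = 0$) gives $\sum_{j \in B_t}\norm{g^t_j} \leq f\mu\norm{w^t - w^*}$, so the faulty contribution is at least $-f\mu\, u_t$. Combining with the honest bound,
$$\iprod{w^t - w^*}{\sum_{\sigma \in \F_t} g^t_\sigma} \geq \bigl[(n-2f)\gamma - f\mu\bigr]\, u_t =: \alpha\, u_t,$$
and $\alpha > 0$ precisely when~\eqref{eqn:cond_2} holds. The same two norm bounds (together with $|H_t| \leq n-f$) give $\norm{\sum_{\sigma \in \F_t} g^t_\sigma} \leq n\mu\norm{w^t - w^*}$, hence $\norm{\sum_{\sigma \in \F_t} g^t_\sigma}^2 \leq n^2\mu^2\, u_t$.

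Finally I would assemble $u_{t+1} - u_t \leq \bigl(-2\alpha\,\eta_t + n^2\mu^2\,\eta_t^2\bigr)\, u_t$ and invoke Lemma~\ref{lem:seq_conv}. Compactness of $\W$ bounds $u_t \leq D$ for some $D < \infty$, so $(u_{t+1} - u_t)_+ \leq n^2\mu^2 D\,\eta_t^2$ and $\sum_t (u_{t+1} - u_t)_+ < \infty$ by $\sum_t \eta_t^2 < \infty$; the lemma then gives $u_t \to u_\infty < \infty$ together with $\sum_t (u_{t+1} - u_t)_- > -\infty$. Rearranging the recursion and telescoping yields $2\alpha \sum_t \eta_t u_t \leq (u_0 - u_\infty) + n^2\mu^2 D \sum_t \eta_t^2 < \infty$, and since $\sum_t \eta_t = \infty$ this forces $u_\infty = 0$, i.e. $w^t \to w^*$. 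Claim~\ref{clm:mu_gamma} ($\mu \geq \gamma$) confirms the threshold in~\eqref{eqn:cond_2} lies below $1/3$, consistent with \textbf{(A3)}.
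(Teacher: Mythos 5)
Your proposal is correct and follows essentially the same route as the paper: the Lyapunov recursion on $\norm{w^t-w^*}^2$ via non-expansive projection, a retained honest subset of size $n-2f$ handled by \textbf{(A5)}, the filtered norms of faulty retained gradients bounded by $\mu\norm{w^t-w^*}$ through an honest witness among the discarded agents, and the conclusion via Lemma~\ref{lem:seq_conv}, yielding the identical coefficient $(n-2f)\gamma - f\mu = n\gamma - f(2\gamma+\mu)$. Your minor variations (treating extra retained honest gradients as non-negative rather than adversarial, matching faulty agents to distinct discarded honest ones, and concluding by telescoping $\sum_t \eta_t u_t < \infty$ instead of an explicit contradiction) do not change the substance.
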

\begin{proof}
Refer Appendix~\ref{sub:mr_2}. 
\end{proof}

Theorem \ref{thm:mr_2} states that if $f/n$ is less than $1/(2 + \mu/\gamma)$ then the proposed algorithm reaches the point of minimum of the $C_{\H}$ asymptotically under assumptions \textbf{(A1)}-\textbf{(A5)}.  Owing to Claim~\ref{clm:mu_gamma}, the right-hand side in condition~\eqref{eqn:cond_2} is less than or equal to $1/3$.\\

Instead of using a diminishing step-size, we can use a \emph{small enough} constant step-size in~\eqref{eqn:algo_1} to obtain linear convergence of the proposed algorithm as stated below.

\begin{theorem}
\label{thm:exp_conv}
Under assumptions \textbf{(A1)}-\textbf{(A5)}, if condition~\eqref{eqn:cond_2} is satisfied then for 
\[\eta_t = \eta = \frac{n\gamma - f(2 \gamma + \mu)}{\mu^2 (n-f)^2} > 0, \quad \forall t \in \Z_{\geq 0},\]
the sequence of parameter estimates $\{w^t\}$, generated by~\eqref{eqn:algo_1}, converges linearly to $w^*$, with
\begin{align*}
    \norm{w^{t+1} - w^*} \leq \rho \norm{w^t - w^*}, \quad \forall t \in \Z_{\geq 0}
\end{align*}
where $\rho = \sqrt{1 - 2\eta(n\gamma - f(2 \gamma + \mu)) + \mu^2 (n-f)^2 \eta^2}$ is a positive real number of value less than $1$.
\end{theorem}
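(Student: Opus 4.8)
The plan is to establish the stated contraction directly, bounding $\norm{w^{t+1} - w^*}^2$ in terms of $\norm{w^t - w^*}^2$ within a single iteration, rather than appealing to the sequence convergence lemma. First I would invoke the non-expansiveness of the Euclidean projection onto the convex set $\W$: since $w^* \in \W$ by assumption \textbf{(A1)} and $w^* = [w^*]_\W$, the update~\eqref{eqn:algo_1} gives
\begin{align*}
    \norm{w^{t+1} - w^*}^2 \leq \norm{w^t - w^* - \eta \sum_{\sigma \in \F_t} g^t_\sigma}^2 = \norm{w^t - w^*}^2 - 2\eta \iprod{w^t - w^*}{\sum_{\sigma \in \F_t} g^t_\sigma} + \eta^2 \norm{\sum_{\sigma \in \F_t} g^t_\sigma}^2 .
\end{align*}
The whole argument then reduces to a lower bound on the cross term and an upper bound on the last term.

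For the norm bound, the key filtering observation is that every gradient retained in $\F_t$ has norm at most $\mu\norm{w^t - w^*}$. Indeed, since $\mnorm{\B} \leq f$ and $\F_t$ keeps the $n-f$ smallest norms, for any Byzantine $\sigma \in \F_t$ there is an honest agent in $[n]\setminus\F_t$ whose gradient norm dominates $\norm{g^t_\sigma}$ (if a Byzantine agent survives the filter, at least one of the $f$ discarded agents must be honest). Combined with assumption~\textbf{(A2)}, which supplies Lipschitzness together with $\nabla C_i(w^*) = 0$ (each $C_i$ being minimized at $w^*$), every honest agent $h$ satisfies $\norm{\nabla C_h(w^t)} = \norm{\nabla C_h(w^t) - \nabla C_h(w^*)} \leq \mu\norm{w^t - w^*}$. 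Hence $\norm{g^t_\sigma} \leq \mu\norm{w^t - w^*}$ for all $\sigma \in \F_t$ (honest ones directly, Byzantine ones by domination), and the triangle inequality yields $\norm{\sum_{\sigma\in\F_t} g^t_\sigma} \leq (n-f)\mu\norm{w^t - w^*}$.

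For the cross term, I would split $\F_t = \H_t \cup \B_t$ into its honest and Byzantine parts. Under full synchronism~\textbf{(A4)} the honest gradients equal $\nabla C_\sigma(w^t)$, and since $\mnorm{\H_t} \geq n - 2f$, I would select a subset $\H'' \subseteq \H_t$ of cardinality exactly $n-2f$ and apply uniform $f$-redundancy~\textbf{(A5)} (with $\nabla C_{\H''}(w^*) = 0$) to get $\iprod{w^t - w^*}{\sum_{\sigma\in\H''}\nabla C_\sigma(w^t)} \geq (n-2f)\gamma\norm{w^t - w^*}^2$; the remaining honest agents contribute non-negatively by convexity (monotonicity of each $\nabla C_\sigma$). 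The Byzantine part is controlled in magnitude via Cauchy--Schwarz and the norm bound above by $\mnorm{\B_t}\mu\norm{w^t-w^*}^2 \leq f\mu\norm{w^t-w^*}^2$. Together these give $\iprod{w^t-w^*}{\sum_{\sigma\in\F_t} g^t_\sigma} \geq (n\gamma - f(2\gamma+\mu))\norm{w^t - w^*}^2$, whose coefficient is positive precisely under condition~\eqref{eqn:cond_2} (which is also why the prescribed $\eta$ is positive).

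Substituting both bounds produces $\norm{w^{t+1}-w^*}^2 \leq \rho^2\norm{w^t - w^*}^2$ with $\rho^2 = 1 - 2\eta(n\gamma - f(2\gamma+\mu)) + \eta^2\mu^2(n-f)^2$, exactly as stated. To finish, I would substitute the prescribed $\eta = (n\gamma - f(2\gamma+\mu))/(\mu^2(n-f)^2)$, which simplifies $\rho^2$ to $1 - (n\gamma - f(2\gamma+\mu))^2/(\mu^2(n-f)^2)$; then $\rho^2 < 1$ is immediate, while $\rho^2 > 0$ reduces to $\gamma(n-2f) < \mu n$, which follows from $\mu \geq \gamma$ (Claim~\ref{clm:mu_gamma}) together with $n - 2f < n$. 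The main obstacle is the filtering and splitting bookkeeping in the two middle paragraphs — in particular justifying the norm domination of Byzantine gradients by honest ones, and handling the fact that \textbf{(A5)} applies only to subsets of size exactly $n-2f$ whereas $\mnorm{\H_t}$ may be strictly larger; once these are pinned down, the algebra for $\rho$ is routine.
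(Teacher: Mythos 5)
Your proposal is correct and follows essentially the same route as the paper's proof: non-expansive projection, a one-step expansion of $\norm{w^{t+1}-w^*}^2$, the norm-domination of every retained gradient by an honest agent's gradient combined with \textbf{(A2)} to bound the quadratic term by $(n-f)^2\mu^2\eta^2\norm{w^t-w^*}^2$, a size-$(n-2f)$ honest subset of $\F_t$ handled via \textbf{(A5)} plus Cauchy--Schwarz for the rest of the cross term, and then the algebra for $\rho$. The only cosmetic difference is that you split the leftover agents into honest ones (contributing $\geq 0$ by monotonicity) and Byzantine ones, whereas the paper lumps all $f$ agents of $\F_t\setminus\H^t_1$ into a single $-f\mu\norm{w^t-w^*}^2$ Cauchy--Schwarz bound; both yield the identical constant $n\gamma - f(2\gamma+\mu)$.
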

\begin{proof}
Refer Appendix~\ref{sub:exp_conv}. 
\end{proof}

\subsection{Convergence With Partial Asynchronism}
\label{sub:conv_2}
In practice, the server and the agents need not synchronize. At any given time $t$, some of the honest agents might not be able to report gradients of their costs at the current estimate $w^t$. This could occur due to various reasons, such as hardware malfunction or large communication delays. In order to cope with such irregularities, the server uses the last reported gradient, in step S2, of an agent that fails to report its cost's gradient at the current estimate in step S1. Formally, for an agent $i \in [n]$ that fails to report its gradient at $t$, the server uses the last reported gradient $g_i^{t - s_i(t)}$ of that agent, where $s_i(t) \in \Z_{\geq 0}$ is the time passed since agent $i$ reported its gradient. However, we assume $s_i(t)$ to be bounded for all $i \in \H$. In other words, we assume \emph{partial asynchronism} that is formally stated as follows (cf. Section 7.1 of Bertsekas and Tsitsiklis, 1998 ~\cite{bertsekas1989parallel}).

\begin{itemize}
\item[\textbf{(A6)}] \textbf{Partial Asynchronism:} \\
For every $i \in \H$, $g_i^{t} = \nabla C_i(w^{t - s_i(t)}), \, \forall t \in \Z_{\geq 0}$ where $0 \leq s_i(t) \leq t_o$. \\
Here, $t_o$ is a finite (unknown) positive integer. As the server uses the last available gradient at each time $t$ for each agent $i \in [n]$, thus $s_i(t+1) \leq 1 + s_i(t), \, \forall t \in \Z_{\geq 0}, \, \forall i \in \H$. \\
If the server does not receive any gradient from an agent $i \in [n]$ until time $t$ (i.e. $t - s_i(t) < 0$), then it assigns $g_i^{t} = 0$. 
\end{itemize}

If $t_o = 0$ then assumption~\textbf{(A6)} is equivalent to assumption~\textbf{(A4)}, for which case the sufficient conditions for convergence of $\{w^t\}$ to $w^*$ have already been stated in Theorems~\ref{thm:mr_1},~\ref{thm:mr_2} and~\ref{thm:exp_conv}. Therefore, in assumption~\textbf{(A6)} $t_o > 0$. Before we state the result on the convergence result under~\textbf{(A6)}, let us first establish that the infinite sum of the sequence $\left\{ \eta_t \norm{w^{t} - w^{t - s_i(t)}}\right\}$ for all $i \in \H$ is finite ($< \infty$). This result is used later for showing convergence of $\{w^t\}$, generated by~\eqref{eqn:algo_1}, to $w^*$ under the aforementioned partial asynchronism.

\begin{lemma}
\label{lem:bnd_growth}
Consider the update law~\eqref{eqn:algo_1} under assumptions~\textbf{(A1)}-\textbf{(A3)} and \textbf{(A6)}. If $\eta_{t+1} \leq \eta_t, \, \forall t \in \Z_{\geq 0}$ and $\sum_{t = 0}^{\infty}\eta^2_t < \infty$ then 
\begin{align*}
    \sum_{t = 0}^\infty \eta_t\norm{w^{t} - w^{t - s_i(t)}} < \infty, \, \forall i \in \H
\end{align*}
\end{lemma}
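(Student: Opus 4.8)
The plan is to reduce everything to the one-step increments $\norm{w^{k+1}-w^k}$ via a telescoping identity, bound each increment by a constant multiple of $\eta_k$, and then exploit the bounded-delay hypothesis together with $\sum_t \eta_t^2 < \infty$ to conclude summability. Concretely, since $0 \le s_i(t) \le t_o$ by assumption~\textbf{(A6)}, writing
\[
w^t - w^{t-s_i(t)} = \sum_{k = t - s_i(t)}^{t-1}\left(w^{k+1} - w^k\right)
\]
exhibits $w^t - w^{t-s_i(t)}$ as a sum of at most $t_o$ consecutive increments, so by the triangle inequality $\norm{w^t - w^{t-s_i(t)}} \le \sum_{k=t-t_o}^{t-1}\norm{w^{k+1}-w^k}$ (extending the index range only increases the bound, as the increments are nonnegative; the finitely many terms with $t < t_o$, where $t - s_i(t)$ may be negative, are individually bounded and do not affect convergence).

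Second, I would bound a single increment. Because each iterate lies in the compact convex set $\W$ (assumption~\textbf{(A1)}) and Euclidean projection onto $\W$ is non-expansive, the update~\eqref{eqn:algo_1} gives
\[
\norm{w^{k+1}-w^k} \le \eta_k\,\norm{\sum_{\sigma\in\F_k} g^k_\sigma}.
\]
The crux is to bound the filtered gradient sum uniformly in $k$. For this I would use the norm-filtering property: among the $f+1$ agents reporting the largest gradient norms at least one, say $h$, is honest (since $\mnorm{\B} \le f$ by~\textbf{(A3)}), and every $\sigma\in\F_k$ satisfies $\norm{g^k_\sigma} \le \norm{g^k_h}$ because $i_{n-f}\in\F_k$ carries the largest retained norm while $h$ sits at position $\ge n-f$ in the sorted list. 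For honest $h$ we have $g^k_h = \nabla C_h(w^{k-s_h(k)})$ (or $0$); since $\nabla C_h(w^*)=0$ by~\textbf{(A2)} and $\nabla C_h$ is $\mu$-Lipschitz, $\norm{g^k_h} \le \mu\norm{w^{k-s_h(k)}-w^*} \le \mu D$, where $D := \sup_{w\in\W}\norm{w-w^*} < \infty$ by compactness of $\W$. Hence $\norm{\sum_{\sigma\in\F_k} g^k_\sigma} \le (n-f)\mu D$ and $\norm{w^{k+1}-w^k} \le (n-f)\mu D\,\eta_k =: K\eta_k$.

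Finally, combining the two bounds and using that $\{\eta_t\}$ is non-increasing (so $\eta_t \le \eta_k$ for $k \le t$, whence $\eta_t\eta_k \le \eta_k^2$),
\[
\eta_t\,\norm{w^t - w^{t-s_i(t)}} \le K\,\eta_t \sum_{k=t-t_o}^{t-1}\eta_k \le K\sum_{k=t-t_o}^{t-1}\eta_k^2.
\]
Summing over $t$ and exchanging the order of summation, each $\eta_k^2$ is counted for at most $t_o$ values of $t$, giving
\[
\sum_{t=0}^\infty \eta_t\,\norm{w^t - w^{t-s_i(t)}} \le K\,t_o\sum_{k=0}^\infty \eta_k^2 < \infty
\]
by the hypothesis $\sum_t \eta_t^2 < \infty$, which is the desired conclusion. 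I expect the main obstacle to be the uniform increment bound $\norm{w^{k+1}-w^k} \le K\eta_k$, specifically arguing rigorously that norm filtering caps every retained gradient by an honest agent's gradient norm (using $\mnorm{\B}\le f$) and then controlling that norm by $\mu D$ through Lipschitz continuity, the vanishing of honest gradients at $w^*$, and compactness of $\W$. Once this is established, the telescoping and double-sum manipulation are routine.
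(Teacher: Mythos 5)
Your proposal is correct and follows essentially the same route as the paper's proof: non-expansiveness of the projection plus the norm-filter guarantee (an honest agent among the top $f+1$ norms) and assumption \textbf{(A2)} give the uniform increment bound $\norm{w^{k+1}-w^k}\le (n-f)\mu\Gamma\,\eta_k$, after which telescoping over at most $t_o$ steps, monotonicity of $\{\eta_t\}$, and $\sum_t\eta_t^2<\infty$ yield the result. The only cosmetic difference is that you bound the final double sum by exchanging the order of summation, whereas the paper bounds $\sum_t \eta_t\eta_{t-t_o}$ by $\sum_t\eta_{t-t_o}^2$ directly; these are equivalent.
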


\begin{proof}
Refer Appendix~\ref{sub:bnd_growth}.
\end{proof}

The result in Lemma~\ref{lem:bnd_growth} does not require the sequence $\{\eta_t\}$ to be monotonically decreasing as long as $\sum_{t = 0}^\infty \eta^{2}_t < \infty$. However, the proof is simplified under this assumption and a non-monotonous $\eta_t$ does not confer any additional advantages as far as \emph{asymptotic} convergence of $\{w^t\}$ is concerned. Also, the commonly used diminishing step-size $\eta_t = 1/(t+1), \, \forall t \in \Z_{\geq 0}$ is indeed monotonically decreasing (cf.~\cite{rudin1964principles}). \\


\begin{theorem}
\label{thm:mr_delay}
Under assumptions \textbf{(A1)}-\textbf{(A3)}, \textbf{(A5)} and \textbf{(A6)}, if $\eta_{t+1} \leq \eta_t, \, \forall t \in \Z_{\geq 0}$, $\sum_{t = 0}^{\infty} \eta_t = \infty$, $\sum_{t = 0}^{\infty} \eta^2_t < \infty$, and condition~\eqref{eqn:cond_2} holds then the sequence of parameter estimates $\{w^t\}$, generated by~\eqref{eqn:algo_1}, converges to $w^*$.
\end{theorem}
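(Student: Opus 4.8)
The plan is to run a Lyapunov argument on $u_t = \norm{w^t - w^*}^2$, reusing the backbone of the synchronous analysis behind Theorem~\ref{thm:mr_2} but treating the bounded delays as a summable perturbation controlled by Lemma~\ref{lem:bnd_growth}. Since $w^* \in \W$ and the projection $[\cdot]_\W$ is non-expansive, I would start from
\[
u_{t+1} \leq u_t - 2\eta_t \iprod{w^t - w^*}{\textstyle\sum_{\sigma \in \F_t} g^t_\sigma} + \eta_t^2 \norm{\textstyle\sum_{\sigma \in \F_t} g^t_\sigma}^2,
\]
and control the cross term and the quadratic term separately. Throughout I would exploit that $\W$ is compact, so $\norm{w^t - w^*} \leq D$ for a fixed diameter $D$, and abbreviate $\delta_i(t) = \norm{w^t - w^{t-s_i(t)}}$ for the displacement induced by agent $i$'s delay.

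First I would re-establish the norm-filtering bound in the delayed setting. Because at most $f$ agents are Byzantine and the filter keeps the $n-f$ smallest reported norms, at least $n-2f$ honest agents lie in $\F_t$, and each kept Byzantine agent can be injectively paired with a distinct removed honest agent of larger reported norm. Using \textbf{(A2)} together with $\nabla C_i(w^*) = 0$, every honest reported norm satisfies $\norm{g^t_i} = \norm{\nabla C_i(w^{t-s_i(t)})} \leq \mu(\norm{w^t - w^*} + \delta_i(t))$, so via the pairing \emph{every} gradient entering the update (honest or Byzantine) is dominated by $\mu(\norm{w^t - w^*} + \delta_j(t))$ for some honest index $j$. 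Next I would lower-bound the cross term: splitting $\F_t$ into its honest part $\H_t$ (cardinality $\geq n-2f$) and its Byzantine part, I rewrite each delayed honest gradient as $\nabla C_i(w^t)$ plus a Lipschitz correction $\nabla C_i(w^{t-s_i(t)}) - \nabla C_i(w^t)$. Applying \textbf{(A5)} to a size-$(n-2f)$ subset of $\H_t$ and gradient monotonicity (convexity) to the remaining honest terms yields $\iprod{w^t-w^*}{\sum_{i \in \H_t}\nabla C_i(w^t)} \geq (n-2f)\gamma\, u_t$, while the Byzantine terms cost at most $f\mu\, u_t$ by the norm bound, and all delay corrections contribute only terms of order $D\,\delta_i(t)$. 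This produces $\iprod{w^t-w^*}{\sum_{\sigma}g^t_\sigma} \geq \alpha\, u_t - 2\mu D \sum_{i \in \H}\delta_i(t)$ with $\alpha = n\gamma - f(2\gamma+\mu)$, which is strictly positive precisely under condition~\eqref{eqn:cond_2}.

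Bounding the quadratic term by $(n-f)^2\mu^2 u_t$ plus further $O(\delta_i(t))$ contributions (using $\delta_i(t) \leq 2D$ to linearize the $\delta_i(t)^2$ pieces) and collecting everything gives
\[
u_{t+1} \leq u_t\bigl(1 - 2\eta_t \alpha + \eta_t^2 (n-f)^2\mu^2\bigr) + R_t,
\]
where $R_t \geq 0$ gathers all delay terms and satisfies $\sum_t R_t < \infty$: each is a constant multiple of $\eta_t \delta_i(t)$ (the $\eta_t^2\delta_i(t)$ pieces being smaller since $\eta_t$ is bounded), and $\sum_t \eta_t\delta_i(t) < \infty$ for every $i \in \H$ by Lemma~\ref{lem:bnd_growth}. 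Finally I would invoke Lemma~\ref{lem:seq_conv}: since $(u_{t+1}-u_t)_+ \leq \eta_t^2(n-f)^2\mu^2 D^2 + R_t$ is summable (using $\sum \eta_t^2 < \infty$ and $\sum R_t < \infty$), the lemma gives $u_t \to u_\infty$ and finite total negative variation. Summing the recursion then yields $2\alpha \sum_t \eta_t u_t \leq u_0 + (n-f)^2\mu^2 D^2\sum_t\eta_t^2 + \sum_t R_t < \infty$; with $\alpha > 0$ and $\sum_t \eta_t = \infty$ this forces $\liminf_t u_t = 0$, and combined with $u_t \to u_\infty$ we conclude $u_\infty = 0$, i.e. $w^t \to w^*$.

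I expect the main obstacle to be the cross-term bookkeeping: one must simultaneously extract the strong-convexity gain $(n-2f)\gamma\, u_t$ from the \emph{true} (undelayed) honest gradients at $w^t$ while cleanly peeling off both the delay corrections and the Byzantine terms, and then verify that every residual is dominated by $\eta_t \delta_i(t)$ so that Lemma~\ref{lem:bnd_growth} applies. Compactness of $\W$ is essential here, since it is what converts the awkward products $\norm{w^t-w^*}\,\delta_i(t)$ and $\delta_i(t)^2$ into constant multiples of $\delta_i(t)$, thereby making $R_t$ summable.
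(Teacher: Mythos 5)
Your proposal is correct and follows essentially the same route as the paper's proof: the non-expansive projection inequality on $\norm{w^t-w^*}^2$, domination of every filtered gradient by an honest agent's (delayed) gradient norm, splitting $\F_t$ into an $(n-2f)$-sized honest subset handled by \textbf{(A5)} plus Byzantine terms handled by Cauchy--Schwartz, Lipschitz delay corrections made summable via Lemma~\ref{lem:bnd_growth}, and Lemma~\ref{lem:seq_conv} to conclude. The only (cosmetic) differences are that the paper bounds the quadratic term directly by the constant $(n-f)^2\mu^2\Gamma^2$ rather than by a multiple of $u_t$, and finishes with an explicit contradiction argument on $h_\infty>0$ rather than your equivalent $\liminf$ argument from $\sum_t \eta_t u_t<\infty$.
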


\begin{proof}
Refer Appendix~\ref{sub:mr_delay}.
\end{proof}

The convergence result stated in Theorem~\ref{thm:mr_delay} is same as that in Theorem~\ref{thm:mr_2}, if the partial asynchronicity assumption (i.e.~\textbf{(A6)}) is replaced by the synchronicity assumption (i.e.~\textbf{(A4)}). Similarly, the convergence result stated in Theorem~\ref{thm:mr_1} is also valid if assumption~\textbf{(A4)} (full synchronism) in Theorem~\ref{thm:mr_1} is replaced by assumption~\textbf{(A6)} (partial asynchronism).


\section{Algorithm-II: Gradient Descent With Norm-Cap Filtering}
\label{sec:algo_2}
The algorithm in essence is similar to Algorithm-I, only here instead of eliminating the $f$ largest agents' gradients the server caps the  $f$ largest gradients' norms by the norm of  $(f+1)$-th largest reported gradient. Therefore, the filtering scheme is referred as \emph{norm-cap filtering}. Expectedly, norm-cap filtering improves the sufficiency bound on $f/n$ with respect to~\eqref{eqn:cond_2}. The steps of the algorithm are formally described as follows. \\

Server begins with an arbitrary estimate $w^0 \in \W$ of the parameter $w^*$ and iteratively updates it using the following steps. We let $w^t$ denote the parameter estimate at time $t \in \Z_{\geq 0}$.

\begin{enumerate}
    \item[S1:] At each time $t \in \Z_{\geq 0}$, the server requests from each agent the gradient of its cost at the current estimate $w^{t}$, and sorts the received gradients by their norms. Let,
\[\norm{g^t_{i_1}} \leq \ldots \leq \norm{ g^t_{i_{n-f}}} \leq \ldots \leq \norm{g^t_{i_{n}}}\]
where, $i_k \in [n], \, \forall k \in [n]$ and $g_i^t$ denotes the gradient reported by agent $i$ at time $t$. Note that if $i \in \B$ then $g^t_i = \star$ (arbitrary), and if $i \in \H$ and the system is synchronous then $g^t_i = \nabla C_i(w^t)$ (asynchronous case is discussed in Assumption~\textbf{(A6)} of Section~\ref{sub:conv_2}). Let, 
\begin{align*}
    \F_t = \{i_1,\ldots, i_{n-f}\} 
\end{align*}
be the set of agents with $n-f$ smallest gradient norms at time $t$. 
    \item[S2:] The server caps the norms of the gradients reported by agents $\varrho \in [n] \setminus \F_t$ by $\norm{g^t_{i_{n-f}}}$ as 
    \begin{align}
        \overline{g^t_\varrho}  = \left\{ \begin{array}{ccc} \frac{\norm{g^t_{i_{n-f}}}}{\norm{g^t_\varrho}} \, g^t_\varrho &, &  \norm{g^t_\varrho} > 0 \\ \\ 0 & , & \text{o.w.} \end{array}\right. \label{eqn:g_cap}
    \end{align}
    and updates $w^t$ as,
\begin{align}
    w^{t+1} = \left[ w^t - \eta_{t} \cdot \left(\sum_{\sigma \in \F_t} g^t_{\sigma} + \sum_{\varrho \in [n]\setminus \F_t}  \overline{g^t_{\varrho}}\right) \right]_{\W}, \, \forall t \in \Z_{\geq 0} \label{eqn:algo_2}
\end{align}
where, $\{\eta_{t}\}$ is a sequence of bounded positive real values and $[\,\cdot \,]_\W$ denotes projection onto $\W$ w.r.t. Euclidean norm, i.e. $[w]_{\W} = \arg \min_{v \in \W} \norm{w - v}, \, \forall w \in \R^d$.
\end{enumerate}

\subsection{Modification (\emph{Informal}): Normalizing Gradients}
Instead of capping just the $f$ largest gradients, the server could scale the norms of all non-zero gradients to $\norm{g^t_{i_{n-f}}}$. In which case, the non-zero honest gradients in $\{g^t_\sigma\}_{\sigma \in \F_t}$ get amplified, whereas the maximum possible norm of Byzantine faulty agents' gradients still remains bounded by $\norm{g^t_{i_{n-f}}}$. Therefore, intuitively, correctness of Algorithm-II implies correctness of this modified version of Algorithm-II, but the other way around need not be true. However, it might be possible to improve the sufficiency bound on $f/n$ by this modification of Algorithm-II. Note that modification of Algorithm-II in this manner is equivalent to normalizing all the agents' gradients (that are non-zero), and then adding these normalized gradients to compute the update direction at each iteration. Thus, this modification replaces sorting of agents' gradients in Step S1 with normalization of agents' gradients.
\section{Convergence Analysis: Algorithm-II}
\label{sec:conv_algo_2}
In this section, we present the convergence of Algorithm-II for the synchronous case. The convergence result is however expected to hold even under partial asynchronism.

\begin{theorem}
\label{thm:mr_algo_2}
Under assumptions \textbf{(A1)}-\textbf{(A5)}, if $\sum_{t = 0}^{\infty} \eta_t = \infty$, $\sum_{t = 0}^{\infty} \eta^2_t < \infty$, and
\begin{align}
    \frac{f}{n} < \frac{1}{2 + \mu/\gamma - \gamma/\mu} \label{eqn:cond_3}
\end{align}
then the sequence of parameter estimates $\{w^t\}$, generated by update law~\eqref{eqn:algo_2}, converges to $w^*$.
\end{theorem}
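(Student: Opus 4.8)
The plan is to follow the stochastic-approximation template of Theorem~\ref{thm:mr_2}, monitoring $u_t = r_t^2$ with $r_t = \norm{w^t - w^*}$, but to replace the inner-product estimate by a sharper one that credits the capped honest gradients. Write the update direction of~\eqref{eqn:algo_2} as $G^t = \sum_{\sigma \in \F_t} g^t_\sigma + \sum_{\varrho \in [n]\setminus\F_t} \overline{g^t_\varrho}$ and set $c_t = \norm{g^t_{i_{n-f}}}$ for the cap value. Since $w^* \in \W$ and projection onto $\W$ is non-expansive, expanding~\eqref{eqn:algo_2} gives $u_{t+1} \leq u_t - 2\eta_t\iprod{w^t - w^*}{G^t} + \eta_t^2\norm{G^t}^2$. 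First I would record two elementary facts: because at least one honest agent lies among the $f+1$ largest reported gradients, and honest gradients vanish at $w^*$ (assumption~\textbf{(A2)}, cf.\ Section~\ref{sub:lin_pf}), Lipschitz continuity yields $c_t \leq \mu r_t$; and since each of the $n$ summands of $G^t$ has norm at most $c_t$, we get $\norm{G^t}^2 \leq n^2\mu^2 u_t$, bounded on the compact set $\W$. The same reasoning shows $c_t = 0$ and $G^t = 0$ at $w^*$, so $w^*$ is a fixed point.

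The heart of the argument is the bound $\iprod{w^t - w^*}{G^t} \geq \alpha\, u_t$ with $\alpha > 0$ precisely under~\eqref{eqn:cond_3}. I would split the honest agents into the uncapped set $A = \H\cap\F_t$ (with $|A| \geq n-2f$) and the capped set $B = \H\setminus\F_t$, and estimate three pieces. For $A$, averaging assumption~\textbf{(A5)} over all $\binom{|A|}{n-2f}$ honest subsets of size $n-2f$ contained in $A$ gives the per-capita bound $\sum_{\sigma\in A}\iprod{w^t - w^*}{\nabla C_\sigma(w^t)} \geq |A|\,\gamma\, u_t$. For $B$, the decisive new ingredient is the co-coercivity of a smooth convex cost, $\iprod{w^t - w^*}{\nabla C_\varrho(w^t)} \geq (1/\mu)\norm{\nabla C_\varrho(w^t)}^2$; combined with $\norm{\nabla C_\varrho(w^t)} \geq c_t$ (as $\varrho \notin \F_t$), the capped summand obeys $\iprod{w^t - w^*}{\overline{g^t_\varrho}} = \tfrac{c_t}{\norm{\nabla C_\varrho(w^t)}}\iprod{w^t - w^*}{\nabla C_\varrho(w^t)} \geq c_t^2/\mu$. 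Finally, each of the at most $f$ Byzantine summands has norm at most $c_t$, so by Cauchy--Schwarz it costs at least $-c_t r_t$.

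A second, easily missed, ingredient is a \emph{lower} bound on the cap: any size-$(n-2f)$ honest subset $\H' \subseteq A \subseteq \F_t$ has all member norms bounded by $c_t$, so $(n-2f)c_t \geq \norm{\sum_{\sigma\in\H'}\nabla C_\sigma(w^t)} \geq (n-2f)\gamma r_t$ by strong convexity, whence $c_t \geq \gamma r_t$. Writing $b_1 = |\B\cap\F_t|$, so that in the worst case (all $f$ faulty agents active, $|\H| = n-f$) one has $|A| = n-f-b_1$ and $|B| = b_1$, the three pieces combine to $\iprod{w^t - w^*}{G^t} \geq \big[(n-f-b_1)\gamma + b_1 c_t^2/(\mu r_t^2) - f c_t/r_t\big]u_t$. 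I expect the main obstacle to be the ensuing worst-case optimization of this bracket over $b_1 \in \{0,\dots,f\}$ and $c_t \in [\gamma r_t,\, \mu r_t]$: one must verify that its minimum is at least $\big[(n-2f)\gamma - f\mu + f\gamma^2/\mu\big]u_t$, an expression strictly positive exactly when~\eqref{eqn:cond_3} holds. The lower bound $c_t \geq \gamma r_t$ is precisely what prevents the adversary from shrinking the cap to zero, and it is what produces the $+f\gamma^2/\mu$ gain over the norm-filtering threshold~\eqref{eqn:cond_2}; dropping it collapses the estimate back to $(n-2f)\gamma - f\mu$.

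Granting $\alpha > 0$, I would finish exactly as for Theorem~\ref{thm:mr_2}: the descent inequality becomes $u_{t+1} \leq (1 - 2\eta_t\alpha + \eta_t^2 n^2\mu^2)u_t$, so $(u_{t+1} - u_t)_+ \leq n^2\mu^2\,\mathrm{diam}(\W)^2\,\eta_t^2$ is summable because $\sum_t \eta_t^2 < \infty$. Lemma~\ref{lem:seq_conv} then gives $u_t \to u_\infty$ together with $\sum_t \eta_t u_t < \infty$, and since $\sum_t \eta_t = \infty$ this forces $u_\infty = 0$, i.e.\ $\{w^t\}$ converges to $w^*$.
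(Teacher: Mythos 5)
The paper does not actually contain a proof of Theorem~\ref{thm:mr_algo_2} --- it is deferred to ``a revision of this manuscript'' --- so there is nothing to compare against; your proposal has to stand on its own, and as far as I can check it does. The two ingredients that go beyond the paper's norm-filtering analysis are both sound: (i) co-coercivity of a convex cost with $\mu$-Lipschitz gradient gives $\iprod{w^t-w^*}{\nabla C_\varrho(w^t)} \geq \norm{\nabla C_\varrho(w^t)}^2/\mu$, so a capped honest agent still contributes at least $c_t^2/\mu$; and (ii) the lower bound $c_t \geq \gamma r_t$, which follows because the $n-f$ uncapped agents contain an honest subset $\H'$ of size $n-2f$ with $(n-2f)c_t \geq \sum_{\sigma\in\H'}\norm{\nabla C_\sigma(w^t)} \geq \norm{\sum_{\sigma\in\H'}\nabla C_\sigma(w^t)} \geq (n-2f)\gamma r_t$ via \textbf{(A5)} and Cauchy--Schwarz. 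Your combinatorial averaging of \textbf{(A5)} over all size-$(n-2f)$ subsets of $A$ to get $\sum_{\sigma\in A}\iprod{w^t-w^*}{\nabla C_\sigma(w^t)} \geq \mnorm{A}\gamma\norm{w^t-w^*}^2$ is also correct (each agent appears in the same number of subsets), and is itself a refinement over the paper's Theorem~\ref{thm:mr_2} proof, which credits only one such subset. The one step you flag but do not carry out --- minimizing $\beta(b_1,x) = (n-f-b_1)\gamma + b_1x^2/\mu - fx$ over $b_1\in[0,f]$, $x\in[\gamma,\mu]$ --- does check out: $\beta$ is linear in $b_1$, so it suffices to verify $b_1\in\{0,f\}$, and in both cases $\beta - \bigl[(n-2f)\gamma - f\mu + f\gamma^2/\mu\bigr]$ is a sum of manifestly nonnegative terms ($f\gamma(1-\gamma/\mu)+f(\mu-x)$ at $b_1=0$, and $f(x^2-\gamma^2)/\mu + f(\mu-x)$ at $b_1=f$). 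In fact the true minimum is $(n-f)\gamma-f\mu$, attained at $b_1=0$, $x=\mu$, so your argument actually establishes convergence under the weaker requirement $f/n < 1/(1+\mu/\gamma)$; the stated threshold~\eqref{eqn:cond_3} is the positivity condition for your (valid but non-tight) lower bound, so the theorem as stated follows a fortiori. The concluding stochastic-approximation step is the same telescoping-plus-contradiction argument as in Appendix~\ref{sub:mr_1} (note only that $\sum_t\eta_t u_t<\infty$ comes from summing the descent inequality, not from Lemma~\ref{lem:seq_conv} directly), and the $w^t=w^*$ degenerate case is handled since $c_t=0$ forces $G^t=0$ there.
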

\begin{proof}
To be included in a revision of this manuscript. 
\end{proof}

Evidently, the bound on $f/n$ given in~\eqref{eqn:cond_3} is better than the bound in~\eqref{eqn:cond_2}, which was obtained for \emph{norm filtering} given in Section~\ref{sec:algo}. In fact, in an extreme case where $w^*$ is the unique minimizer of every honest agents' cost, i.e. $\mu = \gamma$, then right-hand side of~\eqref{eqn:cond_3} is equal to $1/2$. Thus, in this extreme case, Algorithm-II solves the regression problem if Byzantine faulty agents are less than the majority, which is in fact the necessary condition for solving the problem.

\section{Numerical Example}
\label{sec:num}
In this section, we present a small numerical example to demonstrate the convergence of \emph{norm filtering} based gradient descent algorithm, as given by Theorem~\ref{thm:mr_2} for the synchronous case, i.e. under assumption~\textbf{(A4)}. \\

In this example, we choose $n = 6$, $d = 2$ and $f = 1$. Note that assumption~\textbf{(A3)} holds readily as $f<n/2$. Each agent $i \in [n]$ is associated with $n_i = 1$ data point $X_i$ and a corresponding response $Y_i$, such that 
\[Y_i = X_i w^*, \, w^* = \left[\begin{array}{c}1 \\ 1\end{array}\right], \, \forall i \in [n]\] 

\noindent The collective data points $X_{[n]}$ and responses $Y_{[n]}$ are:
\begin{align*}
    X_{[n]} = \left[\begin{array}{c} X_1 \\ X_2 \\X_3 \\ X_4\\ X_5 \\ X_6 \end{array}\right] = 
    \left[\begin{array}{cc} 1 & 0 \\ 0.8 & 0.5 \\ 0.5 & 0.8 \\ 0 & 1 \\ -0.5 & 0.8 \\ -0.8 & 0.5 \end{array}\right], \, Y_{[n]} = \left[\begin{array}{c} Y_1 \\ Y_2 \\Y_3 \\ Y_4\\ Y_5 \\ Y_6 \end{array}\right] = \left[\begin{array}{c} 1 \\ 1.3 \\ 1.3 \\ 1 \\ 0.3 \\ -0.3 \end{array}\right]
\end{align*}
For the above data points, we get the following:
\begin{enumerate}
    \item Rank of $X_{S} = [X_i]_{i \in S}$ is equal to $d = 2$ for every $S \subset [n]$ of cardinality $n-2f = 4$. This implies that assumption~\textbf{(A1)} holds with $\W = [-100, 100]^2$, and $\lambda$ is some positive real value whose exact value is not required (refer Section~\ref{sub:lin_pf} for the procedure).
    \item Assumption~\textbf{(A2)} holds and $\mu \leq 1$ (refer Section~\ref{sub:lin_pf} for the procedure).
    \item Assumption~\textbf{(A5)} holds and $\gamma \geq 0.258$ (refer Section~\ref{sub:sync} for the procedure).
\end{enumerate}
Therefore,
\[\frac{1}{2 + (\mu/\gamma)} \geq 0.17\]
As $f/n = 1/6 \leq 0.167$, thus condition~\eqref{eqn:cond_2} in Theorem~\ref{thm:mr_2} is satisfied for this example.\\

We also note that Assumption 1 in Su and Shahrampour~\cite{su2018finite}, closest related work, does not hold for the given set of data points. Specifically, if $\B = \{6\}$ and $\H = \{1,2,3,4,5\}$ then
\[\frac{1}{\mnorm{\H}-\mnorm{\B}}\sum_{i \in \H}\norm{(I_2 - X_i^T X_i)e_1}_1 = 1.015 \not< 1 \text{ and } \frac{1}{\mnorm{\H}-\mnorm{\B}}\sum_{i \in \H}\norm{(I_2 - X_i^T X_i)e_2}_1 \leq 0.92 \]
where, $I_2$ is the $2 \times 2$ identity matrix, $e_1 = [1 \quad 0]^T$, $e_2 = [0 \quad 1]^T$, and $\norm{v}_1$ is the $1$-norm of any vector $v \in \R^d$, i.e
\[\norm{v}_1 = \sum_{k = 1}^d \mnorm{v[k]}\]
Thus, the proposed \emph{coordinate-wise trimmed mean} filtering technique in~\cite{su2018finite} is not guaranteed to be effective for this particular case.\\

\noindent \textbf{Omniscient Byzantine faulty agents:} To simulate our proposed algorithm, described in Section~\ref{sec:algo}, we randomly choose an agent to be Byzantine faulty. The chosen Byzantine faulty agent is assumed to have complete knowledge of honest agents' gradients, and even knows the value of $w^*$. At each time $t$, the faulty agent reports gradient that is directed opposite to $w^t - w^*$ ($w^t$ being the parameter estimate at $t$), to maximize the damage, and has norm equal to the $2$nd largest norm of honest agents' gradients to pass through the filter (as in this particular example $f = 1$ and so the filtering in step S1 eliminates the gradient with largest norm). \\

Expectedly (cf. Theorem~\ref{thm:mr_2}), the proposed algorithm converges to $w^*$ for this example with $w^0 = [0 \quad 0]^T$ and step-size $\eta_t = 10/(t+1), \, \forall t \in \Z_{\geq 0}$, regardless of the identity of Byzantine faulty agent. Note that $\sum_{t = 0}^\infty\eta_t = \infty$ and $\sum_{t = 0}^\infty\eta^2_t < \infty$ (refer.~\cite{rudin1964principles}). \\

Convergence plot of the proposed (with \emph{norm filtering}) gradient descent algorithm (plotted in `blue') for $\B = \{2\}$ (chosen randomly for the purpose of simulation) is shown in Figure~\ref{fig:small-ex}. In the plot, the estimation error is equal to $\norm{w^t - w^*}$ for each iteration (or time) $t \in [0, \, 50)$. The initial estimate $w^0 = [0 \quad 0]^T$, Byzantine faulty agent is omniscient and chooses its gradients as described above. \\

\begin{figure}[htb!]
\centering
\includegraphics[width=0.75\textwidth]{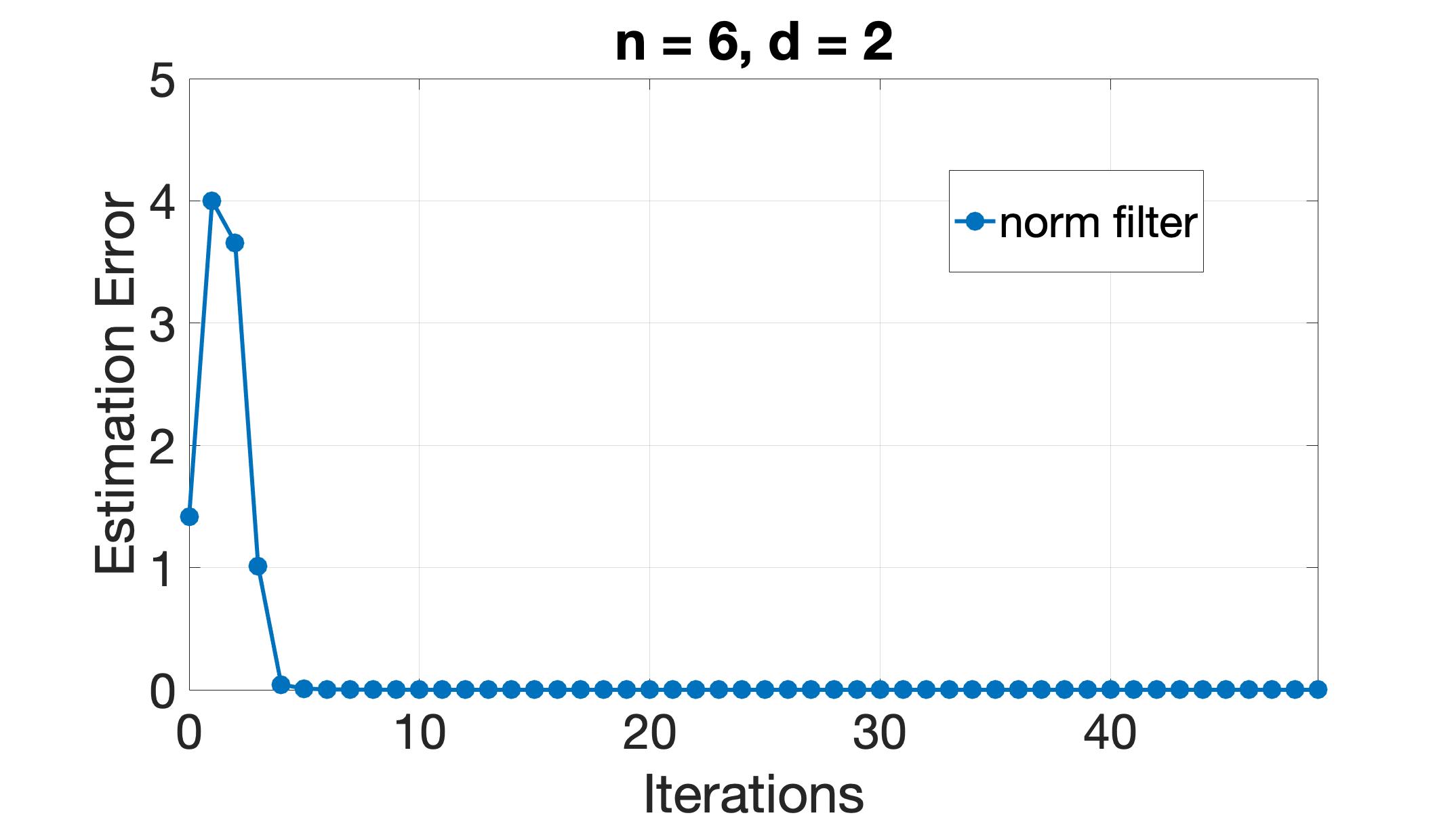} 
\caption{\small{\it Here, the estimation error is $\norm{w^t - w^*}$ at each iteration $t$ and $\B = \{2\}$. The Byzantine faulty agent is assumed omniscient and chooses its gradients as described above. The plot corresponds to the estimation errors for the \emph{norm filtering} based gradient descent algorithm, given in Section~\ref{sec:algo}. The initial estimate $w^0 = [0 \quad 0]^T$.}}
\label{fig:small-ex}
\end{figure}

\noindent \textbf{Ill-informed Byzantine faulty agents:} It may happen that Byzantine faulty agents are not omniscient, as mentioned above. They could just have access to information held by them. To simulate such faulty behavior, in this example, the Byzantine faulty agent simply reports randomly chosen gradient vectors to the server in step S1. The proposed norm filter converges to $w^*$, as expected (shown in Figure~\ref{fig:small-ex-2}). Whereas, the original gradient descent algorithm does not converge as expected, and often diverges away from $w^*$ as shown in Figure~\ref{fig:small-ex-2}.

\begin{figure}[htb!]
\centering
\includegraphics[width=0.75\textwidth]{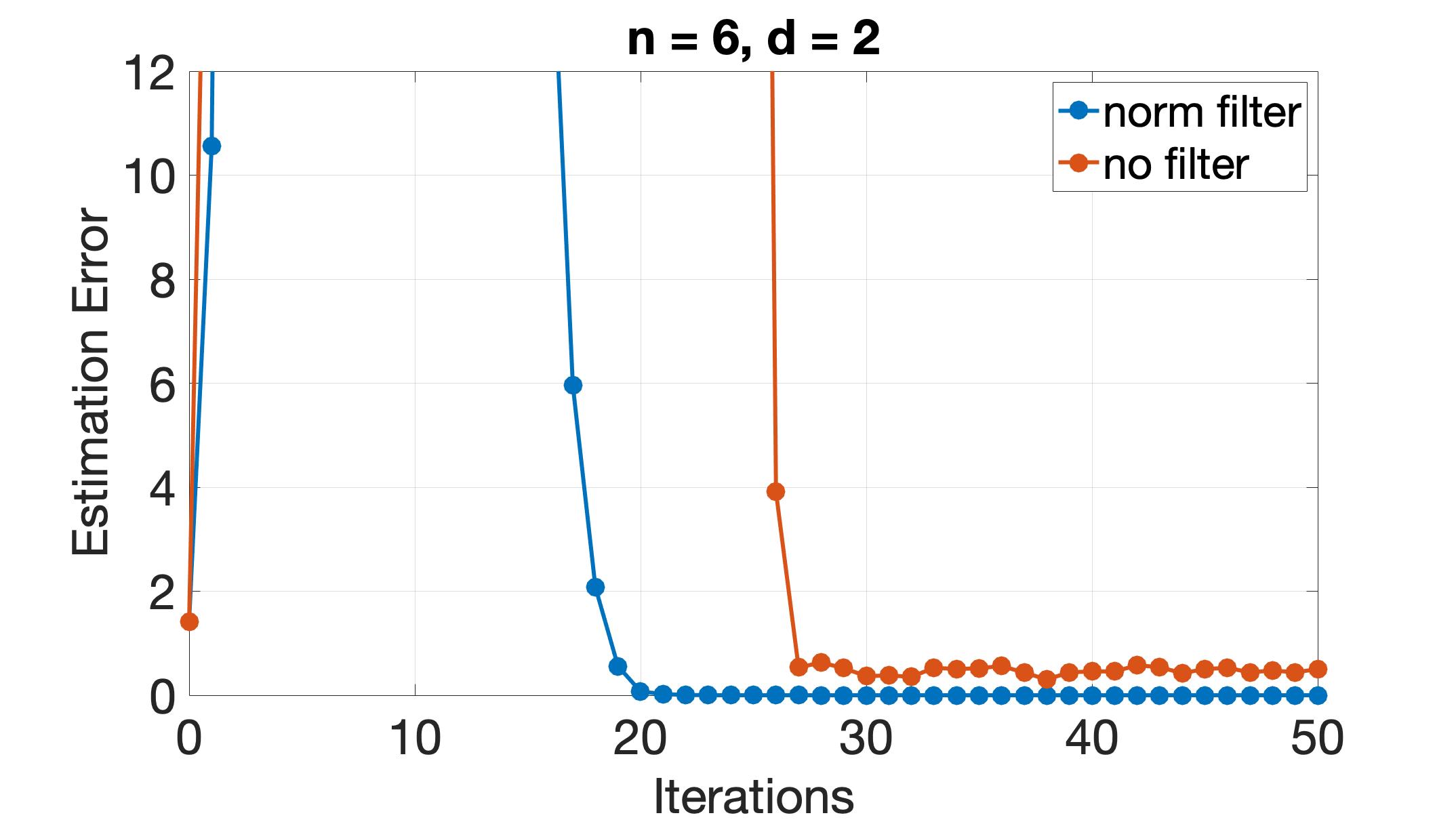} 
\caption{\small{\it Here, the estimation error is $\norm{w^t - w^*}$ at each iteration $t$ and $\B = \{2\}$. The Byzantine faulty agent is assumed ill-informed and chooses its gradients randomly, as described above. The plots in `blue' and `red' correspond to the estimation errors of the \emph{norm filtering} based gradient descent algorithm (ref. Section~\ref{sec:algo}) and the original gradient descent algorithm (without any filtering), respectively. For both the algorithms, the initial estimate $w^0 = [0 \quad 0]^T$.}}
\label{fig:small-ex-2}
\end{figure}

\section{Conclusion}
\label{sec:conclude}
This paper proposes two simple norm based filtering techniques, \emph{norm filtering} and \emph{norm-cap filtering}, for ``robustifying" the original distributed gradient descent algorithm for solving distributed linear regression problem in presence of Byzantine faulty agents in the multi-agent system, when the maximum possible number of Byzantine faulty agents is less than a specified bound. The proposed ``robustification" techniques also solve a more general multi-agent optimization problem with Byzantine faults. We note that the obtained bound on the number of faulty agents, which if satisfied guarantees correctness of the proposed algorithm, relates to the conditioning of the resultant matrix constructed by stacking the data points of the honest agents. \\



\textbf{Stopping Failures:} Even though the proposed algorithm can handle any kind of faults, including stopping failure (when a certain agent crashes and stops responding), it is not yet optimal for handling such inadvertent crashes. However, the server can simply define an upper limit on the outdatedness (time passed since the last update) of an agent's gradient and deem a particular agent as `crashed' if the outdatedness of the agent's gradient exceeds the limit.

\section*{Acknowledgements}
Research reported in this paper was sponsored in part by the Army Research Laboratory under Cooperative Agreement W911NF- 17-2-0196, and by National Science Foundation award 1610543. The views and conclusions contained in this document are those of the authors and should not be interpreted as representing the official policies, either expressed or implied, of the the Army Research Laboratory, National Science Foundation or the U.S. Government.

\bibliographystyle{IEEEtran}
\bibliography{ref_resilient_dist_opt}

\begin{appendices}

\section{Appendix: Noisy Gradients}
\label{app:noise}
In practice, honest agents might not report their costs' gradients accurately due to reasons such as system noise or quantization errors. Specifically, in case of synchronous execution we assume the following.
\begin{itemize}
    \item[\textbf{(A7)}] \textbf{Noisy Gradients}: For each honest agent $i \in \H$, assume that
    \[g^t_i = \nabla C_i (w^t) + D_i(w^t), \, \forall t \in \Z_{\geq 0}\]
    where, $\norm{D_i(w)} \leq D < \infty, \, \forall w \in \R_{\geq 0}$.
\end{itemize}

\subsection{Noisy Responses in Linear Regression}
The above approximate gradient framework models the case of noisy responses in distributed linear regression, where
\begin{align}
    Y_i = X_i w^* + \xi_i, \, \norm{\xi_i} \leq \xi < \infty, \quad \forall i \in \H \label{eqn:noise_resp}
\end{align}
The actual error cost of an agent $i \in \H$ at an estimated parameter value $w \in \R^d$ is
\begin{align}
    C_i(w) = (1/2)\norm{X_i w - X_i w^*}^2 \label{eqn:actual_cost}
\end{align}
However, agent $i \in \H$ can only observe $Y_i$, and not $X_i w^*$. Therefore, the error cost observed by agent $i \in \H$ at an estimated parameter value $w \in \R^d$ is
\[\widehat{C}_i(w) = (1/2)\norm{X_i w - X_i w^*}^2\]
Thus, the reported gradient $g^t_i$ of an agent $i \in \H$ at any time $t \in \Z_{\geq 0}$, in Step S1 of the Algorithm given in Section~\ref{sec:algo}, is given as follows (for the synchronous case).
\[g^t_i = \nabla \widehat{C}_i(w^t) = X_i^T (X_i w^t - Y_i)\]
Substituting~\eqref{eqn:noise_resp} above gives
\[g^t_i = X_i^T X_i (w^t - w^*) - X_i^T \xi_i\]
As $\nabla C_i(w) = X_i^T X_i (w^t - w^*), \, \forall w \in \R^d$ (cf.~\eqref{eqn:actual_cost}), thus for the synchronous case,
\[g^t_i = \nabla C_i(w^t) - X_i^T \xi_i, \quad \forall i \in \H, \, \forall t \in \Z_{\geq 0}\]
Note that the above gradient is a special case of the noisy gradient model in Assumption~\textbf{(A7)}, where $D_i(w^t) = - X_i^T \xi_i, \, \forall i \in \H, \, \forall t \in \Z_{\geq 0}$. As $\norm{\xi_i} \leq \xi, \, \forall i \in \H$, thus 
\[\norm{D_i(w^t)} = \sqrt{\xi^T_i \left(X_i X^T_i\right) \xi_i} \leq \sqrt{u_i} \norm{\xi_i} \leq \sqrt{u_i} \, \xi, \, \forall t \in \Z_{\geq 0}, \, \forall i \in \H\]
where, $u_i$ is the largest eigenvalue of positive semi-definite matrix $X_i X^T_i$. Let $u = \max_{i \in \H}\{\sqrt{u_i}\}$, then
\[\norm{D_i(w^t)} \leq u \,  \xi < \infty, \, \forall t \in \Z_{\geq 0}, \, \forall i \in \H\]

\subsection{Convergence Analysis: Algorithm-I With System Noise}
Intuitively, it is impossible in general for any algorithm to compute $w^*$ accurately when none of the agents report gradients of their costs accurately. However, if the algorithm is robust enough then it can compute a point in the neighborhood of $w^*$, whose size usually depends on the magnitude of inaccuracies (or noise) in the agents' gradients. For the proposed algorithm with update law~\eqref{eqn:algo_1} in Section~\ref{sec:algo}, we can guarantee convergence to a neighborhood of $w^*$ whose size, expectedly, depends on $D$ and the also on the fraction of maximum possible Byzantine faulty agents $f/n$.
\begin{theorem}
\label{thm:mr_noise}
Consider the update law~\eqref{eqn:algo_1} given in Section~\ref{sec:algo} under assumptions \textbf{(A1)}-\textbf{(A3)}, \textbf{(A5)} and \textbf{(A7)}. If $\sum_{t = 0}^{\infty} \eta_t = \infty$, $\sum_{t = 0}^{\infty} \eta^2_t < \infty$, and condition~\eqref{eqn:cond_2} holds then for 
\[D^* = \frac{1}{\gamma}\left( \frac{1 - 2(f/n)}{1 - (f/n)(2 + \mu/\gamma)} \right) D\]
there exists a finite $\tau \in \Z_{\geq 0}$ such that 
\[\norm{w^t - w^*} \leq D^*, \, \forall t \geq \tau\]
\end{theorem}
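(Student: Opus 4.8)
The plan is to run the same Lyapunov argument that underlies Theorem~\ref{thm:mr_2}, but tracking the extra perturbation that the bounded noise $D_i(w^t)$ injects into the update direction. Write $G_t = \sum_{\sigma \in \F_t} g^t_\sigma$ for the (noisy, filtered) descent direction and set $r_t = \norm{w^t - w^*}$. Since $w^* \in \W$ and projection onto the convex set $\W$ is non-expansive, I would first drop the projection to obtain $r_{t+1}^2 \leq \norm{w^t - w^* - \eta_t G_t}^2$, and then expand the square to get $r_{t+1}^2 \leq r_t^2 - 2\eta_t \iprod{w^t - w^*}{G_t} + \eta_t^2 \norm{G_t}^2$. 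Everything then reduces to a lower bound on the inner product and an upper bound on $\norm{G_t}$.

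The key estimate, as in the noiseless case, is the filter cap. Under \textbf{(A2)} each honest gradient satisfies $\nabla C_i(w^*) = 0$, so $\norm{\nabla C_i(w^t)} \leq \mu r_t$; combined with \textbf{(A7)} this gives $\norm{g^t_i} \leq \mu r_t + D$ for every $i \in \H$. Because at most $f$ agents are Byzantine, among the $f+1$ agents reporting the largest norms at least one is honest, whence $\norm{g^t_{i_{n-f}}} \leq \max_{i \in \H}\norm{g^t_i} \leq \mu r_t + D$; thus \emph{every} gradient admitted to $\F_t$ has norm at most $\mu r_t + D$, which immediately yields $\norm{G_t} \leq (n-f)(\mu r_t + D)$. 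For the inner product I would split $\F_t$ into its honest and faulty parts, apply the uniform $f$-redundancy \textbf{(A5)} to a size-$(n-2f)$ honest subset to extract $(n-2f)\gamma\, r_t^2$, use monotonicity of the gradient of each remaining honest convex cost to keep those terms nonnegative, and bound the honest-noise and capped-faulty contributions by their norms. Collecting terms produces a bound of the form $\iprod{w^t - w^*}{G_t} \geq \alpha\, r_t^2 - c\, D\, r_t$, with $\alpha = n\gamma - f(2\gamma + \mu)$, which is strictly positive precisely because condition~\eqref{eqn:cond_2} holds (and $\mu \geq \gamma$ by Claim~\ref{clm:mu_gamma}).

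Substituting these two estimates gives the scalar recursion $r_{t+1}^2 \leq r_t^2 - 2\eta_t(\alpha r_t^2 - cDr_t) + \eta_t^2 (n-f)^2(\mu r_t + D)^2$. Since $\W$ is compact, $r_t$ is uniformly bounded, so $\norm{G_t}^2$ is bounded by a constant and the term quadratic in $\eta_t$ is summable by $\sum_t \eta_t^2 < \infty$. I would then read off that the drift $\alpha r_t^2 - cDr_t$ is positive whenever $r_t$ exceeds the threshold $cD/\alpha$, so outside the ball of that radius the positive variation of $\{r_t^2\}$ is dominated by the summable $\eta_t^2$ term. Applying the summability criterion of Lemma~\ref{lem:seq_conv} to the nonnegative sequence $r_t^2$ (or to $(r_t - D^*)_+^2$) shows it converges, and $\sum_t \eta_t = \infty$ forces the limit into the ball, i.e. $r_t \leq D^*$ for all $t$ past some finite $\tau$.

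I expect the main obstacle to be twofold. First, converting the $\liminf$-type statement (the iterate enters the ball infinitely often) into the asserted eventual containment $\norm{w^t - w^*} \leq D^*$ for all $t \geq \tau$ is delicate because $\{r_t^2\}$ is not monotone; this is exactly what the positive/negative-variation bookkeeping of Lemma~\ref{lem:seq_conv} is for, and care is needed near the boundary of the ball, where the drift is small and the $\eta_t^2$ term competes with it. Second, and more quantitatively, matching the sharp radius $D^* = (n-2f)D/\alpha$ (equivalently, pinning the noise coefficient to $c = n-2f$ rather than the crude count $n-f$ of admitted gradients) requires a careful apportioning of the honest-noise and capped-Byzantine contributions in the inner-product bound; extracting this exact constant, rather than a looser multiple of $D$, is the crux of the argument.
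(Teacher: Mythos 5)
Your outline is essentially the paper's argument (Appendix~\ref{app:mr_noise}): the same projection/expansion of $r_{t+1}^2$, the same norm cap via the pigeonhole on the top $f+1$ reported norms, the same split of $\F_t$ into an honest subset $\H^t_1$ of size $n-2f$ (handled by \textbf{(A5)}) plus $f$ capped residual terms, and the same endgame via Lemma~\ref{lem:seq_conv} plus a contradiction using $\sum_t \eta_t = \infty$. The one device you gesture at but do not name is made explicit in the paper: it applies the descent inequality not to $r_t^2$ but to $h_t = \psi(r_t^2)$ with $\psi(x) = (x - \widehat{D})_+^2$, $\widehat{D} = (D^*)^2$, using Bottou's inequality $\psi(y)-\psi(x) \leq (y-x)\psi'(x) + (y-x)^2$; the quadratic remainder is absorbed into the summable $\eta_t^2$ term, the factor $\psi'_t \geq 0$ kills the drift inside the target ball, and outside a slightly larger ball the drift is bounded below by a positive constant, which is exactly how your first ``obstacle'' (turning a $\liminf$ statement into eventual containment, without monotonicity of $r_t$) is resolved. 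Your suggestion to work with $(r_t - D^*)_+^2$ is the same idea in a less convenient parametrization.

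Your second flagged obstacle --- pinning the noise coefficient to $c = n-2f$ rather than $n-f$ --- is a genuine discrepancy, and you should be aware that the paper's proof obtains $n-2f$ only by bounding the capped gradients as $\norm{g^t_\sigma} \leq \norm{\nabla C_{j_t}(w^t)} \leq \mu\Gamma$, a step it justifies by citing assumption \textbf{(A4)} even though the theorem is stated under \textbf{(A7)}; under \textbf{(A7)} the honest reference gradient satisfies only $\norm{g^t_{j_t}} \leq \mu r_t + D$, so the $f$ residual terms contribute an extra $-fDr_t$ exactly as your accounting predicts. Your more careful bookkeeping therefore yields the recursion with $c = n-f$ and hence a guaranteed radius $\frac{(n-f)D}{n\gamma - f(2\gamma+\mu)}$, larger than the stated $D^*$ by the factor $(n-f)/(n-2f)$; the refinement you mention (keeping the honest members of $\F_t \setminus \H^t_1$ nonnegative via monotonicity) does not close this gap, since in the worst case all $f$ residual slots are Byzantine. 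So your proposal is sound as a proof of the theorem with that slightly weaker constant, but the exact constant in the statement is not recovered by your argument --- nor, as written, by the paper's.
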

\begin{proof}
Refer Appendix~\ref{app:mr_noise}. 
\end{proof}
Theorem~\ref{thm:mr_noise} states that the final inaccuracy of the solution obtained by the server using the algorithm given in Section~\ref{sec:algo} can be at most $D^*$ w.r.t $2$-norm. In case $f = 0$,
\[D^* = \left(\frac{1}{\gamma} \right) D\]
For now, we have only considered the synchronous case. However, using similar arguments as in assumption~\textbf{(A6)} and Theorem~\ref{thm:mr_delay}, the above convergence result is expected to hold even when there is partial asynchronicity in the system.

\section{Appendix: Proofs}
\label{app:proofs}

\subsection{Proof of Claim~\ref{clm:red}}
\label{sub:red}




As $C_i$ is convex for all $i \in \H$, thus assumption~\textbf{(A2)} implies
\[\nabla C_i(w^*) = 0, \, \forall i \in \H\]
Lipschitz continuity (assumption~\textbf{(A2)}) of $\nabla C_i, \, \forall i \in \H$ further implies
\[\norm{\nabla C_i (w)} \leq \mu \norm{w - w^*}, \, \forall w \in \R^d, \, \forall i \in \H \]
Combining this inequality with Cauchy-Schwartz inequality implies,
\begin{align}
    \iprod{w - w^*}{\nabla C_i (w)} \leq \mu \norm{w - w^*}^2, \, \forall w \in \R^d, \, \forall i \in \H \label{eqn:ci_bnd}
\end{align}
From assumption \textbf{(A1)},
\begin{align}
    \iprod{w-w^{*}}{\nabla C_{\H}(w)} = \frac{1}{\mnorm{\H}}\sum_{i \in \H} \iprod{w - w^*}{\nabla C_{i}(w)} \geq \lambda \norm{w - w^*}^2 , \, \forall w \in \R^d \label{eqn:ch_bound}
\end{align}
as $\nabla C_{\H}(w^*) = 0$. Therefore,~\eqref{eqn:ci_bnd} and~\eqref{eqn:ch_bound} imply that
\begin{align*}
    \mu \norm{w - w^*}^2  \geq \lambda \norm{w - w^*}^2, \, \forall w \in \R^d
\end{align*}
Hence, $\mu \geq \lambda$ if assumptions \textbf{(A1)}-\textbf{(A2)} hold.\\

Now, the above implies that if $f/n < 1 / (1 + (\mu/\lambda))$ then $n > 2f$. So, let $\H'$ be a non-empty subset of $\H$ such that $\mnorm{\H'} = n-2f$, and let $\hat{\H}$ be a subset of $\H$, such that $\H' \subset \hat\H$ and $\mnorm{\hat\H} = n-f$. Then,
\begin{align*}
    \nabla C_{\hat{\H}}(w) = \frac{|\H'|}{|\hat{\H}|} \nabla C_{\H'}(w) + \frac{1}{|\hat{\H}|} \sum_{j \in \hat{\H} \setminus \H'} \nabla C_{j}(w)
\end{align*}
From above,
\begin{align*}
    \iprod{w - w^*}{\nabla C_{\hat{\H}}(w)} = \frac{|\H'|}{|\hat{\H}|} \iprod{w-w^*}{\nabla C_{\H'}(w)} + \frac{1}{|\hat{\H}|} \sum_{j \in \hat{\H} \setminus \H'} \iprod{w-w^*}{\nabla C_{j}(w)}
\end{align*}
Using~\eqref{eqn:ci_bnd} above implies,
\begin{align*}
    \iprod{w - w^*}{\nabla C_{\hat{\H}}(w)} \leq \frac{|\H'|}{|\hat{\H}|} \iprod{w-w^*}{\nabla C_{\H'}(w)} +  \frac{1}{|\hat{\H}|}\sum_{j \in \hat{\H}\setminus \H'}\mu \norm{w - w^*}^2
\end{align*}
Assumption~\textbf{(A1)} implies,
\begin{align*}
    \lambda \norm{w - w^*}^2 \leq \iprod{w - w^*}{\nabla C_{\hat{\H}}(w)} \leq \frac{|\H'|}{|\hat{\H}|} \iprod{w-w^*}{\nabla C_{\H'}(w)} +  \frac{1}{|\hat{\H}|}\sum_{j \in \hat{\H}\setminus \H'}\mu \norm{w - w^*}^2
\end{align*}
Therefore,
\begin{align*}
    \frac{n-2f}{|\hat{\H}|}\iprod{w - w^*}{\nabla C_{\H'} (w)} + \frac{1}{|\hat{\H}|}\sum_{j \in \hat{\H}\setminus \H'}\mu \norm{w - w^*}^2 \geq \lambda \norm{w - w^*}^2 , \, \forall w \in \W
\end{align*}
Or,
\begin{align*}
    & \frac{n-2f}{|\hat{\H}|}\iprod{w - w^*}{\nabla C_{\H'} (w)} + \frac{(|\hat{\H}|-n+2f)\mu}{|\hat{\H}|} \norm{w - w^*}^2 \geq \lambda \norm{w - w^*}^2,\\
    \implies & (n - 2f) \cdot \iprod{w - w^*}{\nabla C_{\H'} (w)} \geq (|\hat{\H}| (\lambda - \mu) + (n-2f) \mu ) \cdot \norm{w - w^*}^2, \, \forall w \in \R^d
\end{align*}
Substituting $\mnorm{\hat{\H}} = n-f$ above implies 
\begin{align*}
 	(n - 2f) \cdot \iprod{w - w^*}{\nabla C_{\H'} (w)} \geq (n \lambda - f (\lambda + \mu)) \cdot \norm{w - w^*}^2, \, \forall w \in \R^d
\end{align*}
As $n-2f > 0$ (argued above), therefore, if 
\[\frac{f}{n} < \frac{1}{1 + (\mu/\lambda)}\]
then
\[\iprod{w - w^*}{\nabla C_{\H'} (w)} \geq \zeta \norm{w - w^*}^2, \, \forall w \in \R^d \]
where, 
\[\zeta  = \frac{n \lambda - f (\lambda + \mu)}{n-2f} > 0\]
Thus, $\nabla C_{\H'}(w) = 0$ only if $w = w^*$. From assumption~\textbf{(A2)}, we have
\[\nabla C_{\H'} (w^*) = \frac{1}{\mnorm{\H'}}\sum_{i \in \H' \subset \H} \nabla C_i(w^*) = \frac{1}{\mnorm{\H'}}\sum_{i \in \H' \subset \H} 0 = 0\]
Hence, the above implies that $\nabla C_{\H'}(w) = 0$ iff $w = w^*$ when assumptions~\textbf{(A1)}-\textbf{(A2)} hold and $f/n < 1/(1 + (\mu/\lambda))$.

\subsection{Proof of Claim~\ref{clm:mu_gamma}}
\label{sub:mu_gamma}
As $C_i$ is convex for all $i \in \H$, thus assumption~\textbf{(A2)} implies
\[\nabla C_i(w^*) = 0, \, \forall i \in \H\]
Lipschitz continuity (assumption~\textbf{(A2)}) of $\nabla C_i, \, \forall i \in \H$ further implies
\[\norm{\nabla C_i (w)} \leq \mu \norm{w - w^*}, \, \forall w \in \R^d, \, \forall i \in \H \]
Combining this inequality with Cauchy-Schwartz inequality implies,
\begin{align}
    \iprod{w - w^*}{\nabla C_i (w)} \leq \mu \norm{w - w^*}^2, \, \forall w \in \R^d, \, \forall i \in \H \label{eqn:ci_bnd_2}
\end{align}
For any subset $\H' \subset \H$ of cardinality $n-2f$ (note that $\H'$ is non-empty as $f < n/2$ due to assumption~\textbf{(A3)}),
\[\nabla C_{\H'}(w) = \frac{1}{\mnorm{\H'}}\sum_{i \in \H'} \nabla C_i(w)\]
Thus, $\nabla C_{\H'}(w^*) = 0$. Therefore, assumption \textbf{(A5)} implies that
\begin{align}
    \iprod{w-w^{*}}{\nabla C_{\H'}(w)} = \frac{1}{\mnorm{\H'}}\sum_{i \in \H'} \iprod{w - w^*}{\nabla C_{i}(w)} \geq \gamma \norm{w - w^*}^2, \, \forall w \in \R^d \label{eqn:ch'_bound}
\end{align}
Therefore,~\eqref{eqn:ci_bnd_2} and~\eqref{eqn:ch'_bound} imply that
\begin{align*}
    \mu \norm{w - w^*}^2  \geq \gamma \norm{w - w^*}^2, \, \forall w \in \R^d
\end{align*}
Hence, $\mu \geq \gamma$ if assumptions \textbf{(A1)}-\textbf{(A2)} hold.

\subsection{Proof of Theorem~\ref{thm:mr_1}}
\label{sub:mr_1}

Define $h_t = \norm{w^t - w^{*}}^2$. From \eqref{eqn:algo_1}, we get
\[ h_{t+1} = \norm{\left[ w^t - \eta_{t} \cdot \sum_{\sigma \in \F_t} g^t_{\sigma} \right]_{\W} - w^{*}}^2 \]
Due to the non-expansion property of projection onto a closed convex set~\cite{boyd2004convex}, $\norm{w - w^{*}} \geq \norm{[w]_{\W} - w^{*}}, \, \forall w \in \R^{d}$, therefore 
\begin{align}
	h_{t+1} \leq \norm{w^t - \eta_{t} \cdot \sum_{\sigma \in \F_t} g^t_{\sigma} - w^{*}}^2 = h_t - 2\eta_t \iprod{w^t - w^{*}}{\sum_{\sigma \in \F_t} g^t_{\sigma}} + \eta^2_t \norm{\sum_{\sigma \in \F_t} g^t_{\sigma}}^2 \label{eqn:ht_1}
\end{align}
As there are at most $f$ Byzantine agents, thus for each time $t \in \Z_{\geq 0}$ there exists $j_t \in \H$ such that
\[\norm{g^t_{\sigma}} \leq \norm{g^t_{i_{n-f}}} \leq \norm{g^t_{j_t}}, \, \forall \sigma \in \F_t\] 
From assumption \textbf{(A4)}, $g^t_{j_t} = \nabla C_{j_t}(w^{t})$, thus
\begin{align}
	\norm{g^t_{\sigma}} \leq \norm{\nabla C_{j_t}(w^{t})}, \, \forall \sigma \in \F_t \label{eqn:c_jt}
\end{align}
As $w^*$ is assumed to be a minimizer of all the honest agents' cost (cf. assumption~\textbf{(A2)}), 
\[\nabla C_{j_t}(w^*) = 0\]
Therefore, assumption~\textbf{(A2)} implies,
\begin{align}
	\norm{\nabla C_{j_t}(w)} \leq \mu \norm{w - w^*}, \, \forall w \in \W \label{eqn:cj_bound}
\end{align}
Let $\Gamma = \max_{w \in \W}\norm{w - w^*}$, where $\W$ is a compact set in $\R^d$. It should be noted that $\Gamma < \infty$ as compact sets in real spaces are bounded. Thus,
\begin{align}
	\norm{\nabla C_{j_t}(w)} \leq \mu \max_{w \in \W}\norm{w - w^*} = \mu \Gamma, \, \forall w \in \W \label{eqn:c_jt_2}
\end{align}
From triangle inequality, \eqref{eqn:c_jt} and \eqref{eqn:c_jt_2} we obtain,
\[\norm{\sum_{\sigma \in \F_t} g^t_{\sigma}} \leq \sum_{\sigma \in \F_t}\norm{g^t_{\sigma}} \leq (n-f) \mu \Gamma\]
Substituting this in \eqref{eqn:ht_1} implies,
\begin{align}
	h_{t+1} \leq h_t - 2\eta_t \iprod{w^t - w^{*}}{\sum_{\sigma \in \F_t} g^t_{\sigma}} + (n-f)^2 \mu^2 \Gamma^2 \eta^2_t \label{eqn:ht_before_split}
\end{align}
As $\mnorm{\F_t} = n-f$ and it is assumed that $f<n/2$ (assumption~\textbf{(A3)}), therefore there exists a subset $\H^t_1 \subset \F_t$ of cardinality $n-2f$ such that $\H^t_1 \subset \H$. Thus, using assumption~\textbf{(A4)} we get
\[g^t_{i} = \nabla C_i(w^t), \, \forall i \in \H^t_1, \, \forall t \in \Z_{\geq 0}.\]
Substituting this in~\eqref{eqn:ht_before_split} gives,
\begin{align}
\begin{split}
h_{t+1} & \leq h_t - 2\eta_t \sum_{i \in \H^t_1}\iprod{w^t - w^{*}}{\nabla C_i(w^t)} - 2\eta_t \sum_{k \in \F_t \setminus \H^t_1}\iprod{w^t-w^{*}}{g^t_k} + (n-f)^2 \mu^2 \Gamma^2 \eta^2_t \\
& = h_t - 2\eta_t \mnorm{\H^t_1} \iprod{w^t - w^{*}}{\nabla C_{\H^t_1}(w^t)} - 2\eta_t \sum_{k \in \F_t \setminus \H^t_1}\iprod{w^t-w^{*}}{g^t_k} + (n-f)^2 \mu^2 \Gamma^2 \eta^2_t \label{eqn:ht_split}
\end{split}
\end{align}
where, $C_{\H^t_1} = (1/\mnorm{\H^t_1})\sum_{i \in \H^t_1} C_i$. Using Cauchy-Schwartz inequality, we know that
\begin{align*}
    \mnorm{\iprod{w^t-w^{*}}{g^t_k}} &\leq \norm{w^t-w^{*}} \cdot \norm{g^t_k}, \, \forall k \in \F_t \setminus \H^t_1
\end{align*}
Using the fact that $\norm{g^t_\sigma} \leq \norm{\nabla C_{j_t}(w^t)}, \, \forall \sigma \in \F_t$, we get
\[\mnorm{\iprod{w^t-w^{*}}{g^t_k}} \leq \norm{w^t-w^{*}} \cdot \norm{\nabla C_{j_t}(w^t)},  \, \forall k \in \F_t \setminus \H^t_1\]
Thus,
\[\iprod{w^t-w^{*}}{g^t_k} \geq - \norm{w^t-w^{*}} \cdot \norm{\nabla C_{j_t}(w^t)},  \, \forall k \in \F_t \setminus \H^t_1\]
As $\mnorm{\H^t_1} = n-2f$ and $\mnorm{\F_t} = n-f$, thus, 
\begin{align}
    \sum_{k \in \F_t \setminus \H^t_1}\iprod{w^t-w^{*}}{g^t_k} \geq -f \cdot \norm{w^t-w^{*}} \cdot \norm{\nabla C_{j_t}(w^t)} \label{eqn:bf_bound}
\end{align}
From substituting $\mnorm{\H^t_1} = n-2f$ and~\eqref{eqn:bf_bound} in~\eqref{eqn:ht_split} we obtain, 
\begin{align*}
        h_{t+1} \leq h_t - 2\eta_t \left\{ (n-2f) \cdot \iprod{w^t - w^{*}}{\nabla C_{\H^t_1}(w^t)} - f \cdot \norm{w^t-w^{*}} \cdot \norm{\nabla C_{j_t}(w^t)} \right\} + (n-f)^2 \mu^2 \Gamma^2 \eta^2_t
\end{align*}
If we let 
\begin{align}
	\phi_t = (n-2f) \cdot \iprod{w^t - w^{*}}{\nabla C_{\H^t_1}(w^t)} - f \cdot \norm{w^t-w^{*}} \cdot \norm{\nabla C_{j_t}(w^t)} \label{eqn:phi_t_begin}
\end{align}
then the last inequality can be written as 
\begin{align}
    h_{t+1} \leq h_t - 2\eta_t \phi_t + \eta_t^2 (n-f)^2 \mu^2 \Gamma^2 \label{eqn:ineq_1}
\end{align}
Now, consider two possible cases; case (i) $w^t = w^*$, and case (ii) $w^t \neq w^*$.\\
Case (i) If $w^t = w^*$ then from assumption~\textbf{(A2)},
\[\nabla C_{j_t}(w^t) = 0\]
and
\[\nabla C_{\H^t_1}(w^t) = \frac{1}{\mnorm{\H^t_1}} \sum_{i \in \H^t_1} \nabla C_i(w^t) = 0\]
Using the above inferences in~\eqref{eqn:phi_t_begin} imply that
\begin{align}
    \phi_t = 0 \text{ if } w^t = w^* \label{eqn:phi_0}
\end{align}
Case (ii) Let $w^t \neq w^*$. From \eqref{eqn:cj_bound} we obtain,
\[ \norm{\nabla C_{j_t}(w^t)} \leq \mu \norm{w^t - w^{*}} \]
Therefore (cf.~\eqref{eqn:phi_t_begin}),
\begin{align}
    \phi_t \geq (n-2f) \cdot \iprod{w^t - w^{*}}{\nabla C_{\H^t_1}(w^t)} - \mu f \cdot \norm{w^t-w^{*}}^2 \label{eqn:phi_1}
\end{align}
Let $\H^t \subseteq \H$ of a set of $n-f$ honest agents such that $\H^t_1 \subset \H^t$. Therefore,
\[\nabla C_{\H^t}(w^t)  = \frac{\mnorm{\H^t_1}}{\mnorm{\H^t}} \nabla C_{\H^t_1}(w^t) + \frac{1}{\mnorm{\H^t}}\sum_{j \in \H^t_2 } \nabla C_{j} (w^t) \]
where, $\H^t_2 = \H^t \setminus \H^t_1$.
From assumption \textbf{(A1)}, 
\[\iprod{w^t - w^*}{\nabla C_{\H^t}(w^t)} \geq \lambda \norm{w^t - w^*}^2, \, \forall t \in \Z_{\geq 0}\]
Thus (substituting $\mnorm{H^t_1} = n-2f$),
\begin{align}
    \frac{n-2f}{|\H^t|}\iprod{w^t - w^*}{\nabla C_{\H^t_1} (w^t)} + \frac{1}{|\H^t|}\sum_{j \in \H^t_2}\iprod{w^t - w^*}{\nabla C_{j} (w^t)} \geq \lambda \norm{w^t - w^*}^2 \label{eqn:h1_bnd}
\end{align}
From Cauchy-Schwartz inequality, 
\[\iprod{w^t - w^*}{\nabla C_{j} (w^t)} \leq \norm{w^t - w^*} \cdot\norm{\nabla C_{j} (w^t)}, \, \forall j \in \H^t_2\] 
Due to assumption~\textbf{(A2)},
\[\norm{\nabla C_{j} (w^t)} \leq \mu \norm{w^t - w^{*}}, \, \forall j \in \H^t_2\]
Therefore, 
\[\iprod{w^t - w^*}{\nabla C_{j} (w^t)} \leq \mu \norm{w^t - w^*}^2, \, \forall j \in \H^{t}_2\]
Using the above inequality in \eqref{eqn:h1_bnd} implies
\begin{align*}
    \frac{n-2f}{|\H^t|}\iprod{w^t - w^*}{\nabla C_{\H^t_1} (w^t)} + \frac{1}{|\H^t|}\sum_{j \in \H^t_2}\mu \norm{w^t - w^*}^2 \geq \lambda \norm{w^t - w^*}^2
\end{align*}
As $\mnorm{\H_1^t} = n - 2f$ and $\H^t_2 = \H^t \setminus \H^t_1$, thus $\mnorm{\H^t_2} = \mnorm{\H^t} - n + 2f, \, \forall t \in \Z_{\geq 0}$. Thus, from above,
\begin{align*}
    & \frac{n-2f}{|\H^t|}\iprod{w^t - w^*}{\nabla C_{\H^t_1} (w^t)} + \frac{(|\H^t|-n+2f)\mu}{|\H^t|} \norm{w^t - w^*}^2 \geq \lambda \norm{w^t - w^*}^2,\\
    & \implies (n - 2f) \cdot \iprod{w^t - w^*}{\nabla C_{\H^t_1} (w^t)} \geq (|\H^t| (\lambda - \mu) + (n-2f) \mu ) \cdot \norm{w^t - w^*}^2
\end{align*}
Using this inequality in \eqref{eqn:phi_1} implies
\begin{align*}
    \phi_t \geq (|\H^t| (\lambda - \mu) + (n-3f) \mu) \norm{w^t - w^*}^2
\end{align*}
Substituting $|\H^t| = n-f$ above implies,
\begin{align*}
    \phi_t \geq (\lambda n - f (\lambda + 2 \mu)) \norm{w^t - w^*}^2, \, \forall w^t \in \W 
\end{align*}
Therefore, if condition \eqref{eqn:cond_1} holds then for any positive real value $\delta$,
\begin{align}
    \text{ if } \norm{w^t - w^*}^2 > \delta \text{ then } \phi_t > (\lambda n - f (\lambda + 2 \mu))\delta > 0 \label{eqn:phi_2}
\end{align}
Owing to condition~\eqref{eqn:cond_1}, \eqref{eqn:ineq_1}, \eqref{eqn:phi_0} and \eqref{eqn:phi_2}, we get
\begin{align}
     h_{t+1} \leq h_t - 2\eta_t \phi_t + \eta^2_t (n-f)^2 \mu^2 \Gamma^2 \leq h_t + \eta^2_t (n-f)^2 \mu^2 \Gamma^2, \quad \forall t \in \Z_{\geq 0} \label{eqn:final}
\end{align}
Therefore,
\[(h_{t+1} - h_t)_{+} \leq \eta^2_t (n-f)^2 \mu^2 \Gamma^2, \quad \forall t \in \Z_{\geq 0}\]
where, operator $(\cdot)_{+}$ is same as defined in Lemma \ref{lem:seq_conv}. As $\sum_{t = 0}^{\infty}\eta^2_t < \infty$ and $\Gamma < \infty$, therefore
\[ \sum_{t = 0}^{\infty}( h_{t+1} - h_t)_{+} < \infty\]
As $h_{t} \geq 0, \, \forall t \in \Z_{\geq 0}$, the above implies (cf. Lemma~\ref{lem:seq_conv})
\begin{align}
    h_t \underset{t \to \infty}{\longrightarrow} h_{\infty} < \infty \text{ and } \sum_{t = 0}^{\infty} (h_{t+1} - h_t)_{-} > -\infty \label{eqn:conv}
\end{align}
where, operator $(\cdot)_{-}$ is same as defined in Lemma \ref{lem:seq_conv}. As 
\[h_{\infty} - h_0 = \sum_{t = 0}^\infty (h_{t+1}-h_t) \]
Therefore, from \eqref{eqn:final} we get
\[h_{\infty} - h_0 \leq - 2 \sum_{t = 0}^\infty \eta_t \phi_t + (n-f)^2 \mu^2 \Gamma^2 \sum_{t = 0}^\infty  \eta^2_t\]
As $\sum_{t = 0}^\infty \eta^2_t < \infty$, using \eqref{eqn:conv} above implies
\begin{align}
    \sum_{t = 0}^\infty \eta_t \phi_t < \infty \label{eqn:contradict}
\end{align}
Now, we show that $h_{\infty} = 0$ using reasoning by contradiction. Suppose that $h_{\infty} = \beta > 0$, in which case there exists a time $\tau \in Z_{\geq 0}$ such that 
\[\mnorm{h_t - h_\infty} < \beta/2, \quad \forall t \geq \tau\]
This implies, 
\[\beta/2 < h_t = \norm{w^t - w^*}^2 < 3(\beta/2) , \quad  \forall t \geq \tau\]
This implies (refer \eqref{eqn:phi_2}),
\[ \phi_t  > (\lambda n - f (\lambda + 2 \mu)) (\beta/2), \quad  \forall t \geq \tau\]
This implies that if condition \eqref{eqn:cond_1} is satisfied then, 
\[\sum_{t = \tau}^{\infty}\eta_t \phi_t > (\lambda n - f (\lambda + 2 \mu))(\beta/2) \sum_{t = \tau}^{\infty}\eta_t = \infty\]
as $\sum_{t = 0}^{\infty}\eta_t = \infty$ and $\eta_t < \infty, \, \forall t < \infty$. The above is a contradiction of the deduction in \eqref{eqn:contradict}. Therefore, $h_{\infty} \not > 0$ and hence, 
\[ w^t \underset{t \to \infty}{\longrightarrow} w^* \]
\subsection{Proof of Theorem~\ref{thm:mr_2}}
\label{sub:mr_2}
The result is entirely based on the deductions made in the proof of Theorem~\ref{thm:mr_1} (given in Appendix~\ref{sub:mr_1}), except here we need to show that $\phi_t$ (as defined in~\eqref{eqn:phi_t_begin}) is positive when $w^t \neq w^*$ (and $0$ otherwise) under condition~\eqref{eqn:cond_2}, instead of condition~\eqref{eqn:cond_1}. The notation used here is same as in Appendix~\ref{sub:mr_1}. Recall (refer~\eqref{eqn:phi_t_begin}),
\[\phi_t = (n-2f) \cdot \iprod{w^t - w^{*}}{\nabla C_{\H^t_1}(w^t)} - f \cdot \norm{w^t-w^{*}} \cdot \norm{\nabla C_{j_t}(w^t)} \]
Note that due to assumption~\textbf{(A2)}, $\nabla C_{j_t}(w^*) = 0$ and $\nabla C_{\H^t_1}(w^*) = 0$. Therefore, 
\[\phi_t = 0  \text{ if } w^t = w^*\]
From assumption \textbf{(A2)}, $\nabla C_{j_t}(w^*) = 0$ and 
\[ \norm{\nabla C_{j_t}(w^t)} \leq \mu \norm{w^t - w^{*}} \]
Therefore,
\begin{align*}
    \phi_t \geq (n-2f) \cdot \iprod{w^t - w^{*}}{\nabla C_{\H^t_1}(w^t)} - \mu f \cdot \norm{w^t-w^{*}}^2 
\end{align*}
From assumption \textbf{(A5)}, $\iprod{w^t - w^*}{\nabla C_{\H^t_1}(w^t)} \geq \gamma \norm{w^t - w^*}^2$. Thus,
\begin{align*}
    \phi_t \geq (n \gamma - f (2\gamma + \mu)) \norm{w^t - w^*}^2
\end{align*}
Therefore, if condition \eqref{eqn:cond_2} holds, i.e. $(n \gamma - f (2\gamma + \mu)) > 0$, then for any positive real value $\delta$,
\begin{align*}
    \text{ if } \norm{w^t - w^*}^2 > \delta \text{ then } \phi_t > (n \gamma - f (2\gamma + \mu))\delta > 0
\end{align*}
The rest follows immediately from the arguments in the proof of Theorem~\ref{thm:mr_1} (given in Appendix~\ref{sub:mr_1}).

\subsection{Proof of Theorem~\ref{thm:exp_conv}}
\label{sub:exp_conv}
Define $h_t = \norm{w^t - w^{*}}^2$ and let $\eta_t = \eta, \, \forall t \in \Z_{\geq 0}$. From \eqref{eqn:algo_1}, we get
\[ h_{t+1} = \norm{\left[ w^t - \eta \cdot \sum_{\sigma \in \F_t} g^t_{\sigma} \right]_{\W} - w^{*}}^2 \]
Due to the non-expansion property of projection onto a closed convex set~\cite{boyd2004convex}, $\norm{w - w^{*}} \geq \norm{[w]_{\W} - w^{*}}, \, \forall w \in \R^{d}$, therefore 
\[ h_{t+1} \leq \norm{w^t - \eta \cdot \sum_{\sigma \in \F_t} g^t_{\sigma} - w^{*}}^2 = h_t - 2\eta \iprod{w^t - w^{*}}{\sum_{\sigma \in \F_t} g^t_{\sigma}} + \eta^2 \norm{\sum_{\sigma \in \F_t} g^t_{\sigma}}^2\] 
As there are at most $f$ Byzantine agents, thus for each time $t \in \Z_{\geq 0}$ there exists $j_t \in \H$ such that
\[\norm{g^t_{\sigma}} \leq \norm{g^t_{i_{n-f}}} \leq \norm{g^t_{j_t}}, \, \forall \sigma \in \F_t\] 
From assumption \textbf{(A4)}, $g^t_{j_t} = \nabla C_{j_t}(w^{t})$, thus
\begin{align}
	\norm{g^t_{\sigma}} \leq \norm{\nabla C_{j_t}(w^{t})}, \, \forall \sigma \in \F_t \label{eqn:basic_exp}
\end{align}
From assumption~\textbf{(A2)}, we get $\nabla C_{j_t}(w^*) = 0$ and 
\begin{align}
	\norm{\nabla C_{j_t}(w^{t})} \leq \mu \norm{w^t - w^*} \label{eqn:cj_bound_exp}
\end{align}
Thus, 
\[\norm{\sum_{\sigma \in \F_t} g^t_{\sigma}} \leq \sum_{\sigma \in \F_t} \norm{g^t_{\sigma}} \leq \mu (n-f) \norm{w^t - w^*}\]
as $\mnorm{\F_t} = n-f$. This implies,
\begin{align}
h_{t+1} \leq (1 + \mu^2 (n-f)^2 \eta^2) h_{t} - 2\eta \iprod{w^t - w^{*}}{\sum_{\sigma \in \F_t} g^t_{\sigma}} \label{eqn:ht_exp}
\end{align}
As $\mnorm{\F_t} = n-f$ and it is assumed that $f<n/2$ (assumption~\textbf{(A3)}), therefore there exists a subset $\H^t_1 \subset \F_t$ of cardinality $n-2f$ such that $\H^t_1 \subset \H$. Thus, using assumption~\textbf{(A4)} we get
\[g^t_{i} = \nabla C_i(w^t), \, \forall i \in \H^t_1, \, \forall t \in \Z_{\geq 0}.\]
Using the above in \eqref{eqn:ht_exp} gives,
\begin{align}
\begin{split}
h_{t+1} & \leq (1 + \mu^2 (n-f)^2 \eta^2) h_{t} - 2\eta \sum_{i \in \H^t_1}\iprod{w^t - w^{*}}{\nabla C_i(w^t)} - 2\eta \sum_{k \in \F_t \setminus \H^t_1}\iprod{w^t-w^{*}}{g^t_k}\\
& = (1 + \mu^2 (n-f)^2 \eta^2) h_{t} - 2\eta \mnorm{\H^t_1} \iprod{w^t - w^{*}}{\nabla C_{\H^t_1}(w^t)} - 2\eta \sum_{k \in \F_t \setminus \H^t_1}\iprod{w^t-w^{*}}{g^t_k} \label{eqn:ht_split_exp}
\end{split}
\end{align}
where, $C_{\H^t_1} = (1/\mnorm{\H^t_1})\sum_{i \in \H^t_1} C_i$. Using Cauchy-Schwartz inequality,
\begin{align*}
    \mnorm{\iprod{w^t-w^{*}}{g^t_k}} &\leq \norm{w^t-w^{*}} \cdot \norm{g^t_k}, \, \forall k \in \F_t \setminus \H^t_1
\end{align*}
Using \eqref{eqn:basic_exp} and the inequality above above implies (recall $\mnorm{\H^t_1} = n-2f$ and $\mnorm{\F_t} = n-f$),
\begin{align*}
    \sum_{k \in \F_t \setminus \H^t_1}\iprod{w^t-w^{*}}{g^t_k} \geq -f \cdot \norm{w^t-w^{*}} \cdot \norm{\nabla C_{j_t}(w^t)}
\end{align*}
By substituting the above in~\eqref{eqn:ht_split_exp} we obtain,
\begin{align*}
        h_{t+1} \leq (1 + \mu^2 (n-f)^2 \eta^2) h_{t} - 2\eta \left\{ (n-2f) \cdot \iprod{w^t - w^{*}}{\nabla C_{\H^t_1}(w^t)} - f \cdot \norm{w^t-w^{*}} \cdot \norm{\nabla C_{j_t}(w^t)} \right\}
\end{align*}
From~\eqref{eqn:cj_bound_exp},
\[\norm{\nabla C_{j_t}(w^t)} \leq \mu \norm{w^t - w^*}, \, \forall t \in \Z_{\geq 0}\]
Therefore,
\begin{align*}
        h_{t+1} \leq (1 + \mu^2 (n-f)^2 \eta^2) h_{t} - 2\eta \left\{ (n-2f) \cdot \iprod{w^t - w^{*}}{\nabla C_{\H^t_1}(w^t)} - f \mu \norm{w^t-w^{*}}^2 \right\}
\end{align*}
From assumption~\textbf{(A5)},
\[ \iprod{w^t - w^{*}}{\nabla C_{\H^t_1}(w^t)} \geq \gamma \norm{w^t - w^*}^2\]
As $h_t = \norm{w^t - w^*}^2$, thus from above we obtain,
\[h_{t+1} \leq (1 + \mu^2 (n-f)^2 \eta^2) h_{t} - 2\eta \left((n-2f) \gamma  - f \mu\right) h_t\]
or,
\begin{align}
	h_{t+1} \leq \rho^2 h_t, \, \forall t \in \Z_{\geq 0} \label{eqn:ht_final_exp}
\end{align}
where $\rho =  \sqrt{1 - 2\eta(n\gamma - f(2 \gamma + \mu)) + \mu^2 (n-f)^2 \eta^2}$. For
\[\eta = \frac{n\gamma - f(2\gamma + \mu)}{\mu^2 (n-f)^2},\]
which is a positive owing to condition~\eqref{eqn:cond_2}, we get
\[\rho^2 = 1 - \frac{(n\gamma - f(2\gamma + \mu))^2}{\mu^2 (n-f)^2} < 1\]
Now, we verify if the value of $\rho$ above is real, i.e. if the right hand side of the above equality is non-negative. As $\gamma \leq \mu$ (cf. Claim~\ref{clm:mu_gamma}) and condition~\eqref{eqn:cond_2} holds, thus $n > 3f$. This implies that 
\[\frac{(n\gamma - f(2\gamma + \mu))^2}{\mu^2 (n-f)^2} \leq \frac{(n-3f)^2}{(n-f)^2} \leq 1\]
Therefore, 
\[ 1- \frac{(n\gamma - f(2\gamma + \mu))^2}{\mu^2 (n-f)^2} \geq 0 \]
Thus, the value of $\rho$ given above is real and less than one. 
Substituting $h_t = \norm{w^t - w^*}^2$ in~\eqref{eqn:ht_final_exp} concludes the proof.

\subsection{Proof of Lemma~\ref{lem:bnd_growth}}
\label{sub:bnd_growth}

From~\eqref{eqn:algo_1},
\[w^t = \left[ w^{t-1} - \eta_{t-1}\sum_{\sigma \in \F_t} g^t_\sigma \right]_{\W}, \, \forall t \in \mathbb{N}\]
Due to the non-expansion property of projection onto a closed convex set~\cite{boyd2004convex}, $\norm{w - v} \geq \norm{[w]_{\W} - v}, \, \forall w \in \R^d, \,\forall v \in \W$. Therefore, we get
\begin{align}
\norm{w^t - w^{t-1}} = \norm{\left[ w^{t-1} - \eta_{t-1}\sum_{\sigma \in \F_t} g^t_\sigma \right]_{\W} - w^{t-1}} \leq \eta_{t-1}\norm{\sum_{\sigma \in \F_t} g^t_\sigma} \leq \eta_{t-1} \sum_{\sigma \in \F_t} \norm{g^t_\sigma} , \, \forall t \in \mathbb{N} \label{eqn:wt_bound_1}
\end{align}
where, the second inequality follows from the triangle inequality. As there are at most $f$ Byzantine agents, thus for each time $t \in \Z_{\geq 0}$ there exists $j_t \in \H$ such that
\[\norm{g^t_{\sigma}} \leq \norm{g^t_{i_{n-f}}} \leq \norm{g^t_{j_t}}, \, \forall \sigma \in \F_t\] 
using the definition of $\F_t$ in~\eqref{eqn:filter}. From assumption \textbf{(A6)}, $g^t_{j_t} = \nabla C_{j_t}(w^{t - s_{j_t}(t)})$ if $t - s_{j_t}(t) \geq 0$ else $0$, therefore
\begin{align}
	\norm{g^t_{\sigma}} \leq \norm{\nabla C_{j_t}(w^{t - s_{j_t}(t)})}, \, \forall \sigma \in \F_t \label{eqn:cj_bound_lem}
\end{align}
As $w^*$ is assumed to be a minimizer of all the honest agents' cost (cf. assumption~\textbf{(A2)}), 
\[\nabla C_{j_t}(w^*) = 0\]
Therefore, assumption~\textbf{(A2)} implies,
\begin{align*}
	\norm{\nabla C_{j_t}(w)} \leq \mu \norm{w - w^*}, \, \forall w \in \W 
\end{align*}
Let $\Gamma = \max_{w \in \W}\norm{w - w^*}$, where $\W$ is a compact set in $\R^d$. It should be noted that $\Gamma < \infty$ as compact sets in real spaces are bounded. Thus,
\begin{align*}
	\norm{\nabla C_{j_t}(w)} \leq \mu \max_{w \in \W}\norm{w - w^*} = \mu \Gamma, \, \forall w \in \W 
\end{align*}
Substituting the above inequality in~\eqref{eqn:cj_bound_lem} implies that 
\[\norm{g^t_{\sigma}} \leq \mu\Gamma, \, \forall \sigma \in \F_t, \, \forall t \in \Z_{\geq 0}\]
Using this in~\eqref{eqn:wt_bound_1} implies, (recall $\mnorm{\F_t} = n-f$)
\begin{align}
    \norm{w^t - w^{t-1}} \leq \eta_{t-1} \sum_{\sigma \in \F_t} \norm{g^t_\sigma} \leq \eta_{t-1}(n-f)\mu \Gamma, \, \forall t \in \mathbb{N} \label{eqn:delta_w}
\end{align}
From triangle inequality and the fact that $s_i(t) \leq t_o, \, \forall i \in \H$ in assumption~\textbf{(A6)}, we get
\begin{align*}
    \norm{w^t - w^{t - s_i(t)}} \leq \left\{\begin{array}{ccc} \sum_{k = 0}^{t_o-1} \norm{w^{t-k} - w^{t-k-1}} & , & t \geq t_o \\ \empty \\ \sum_{k = 0}^{t-1} \norm{w^{t-k} - w^{t-k-1}} & , & 1 \leq t < t_o\end{array}\right., \, \forall i \in \H
\end{align*}
where, $t_o \geq 1$ as per assumption~\textbf{(A6)}. Using~\eqref{eqn:delta_w} above implies that
\begin{align*}
    \norm{w^t - w^{t - s_i(t)}} \leq \left\{\begin{array}{ccc} (n-f) \mu \Gamma \sum_{k = 0}^{t_o-1} \eta_{t-k-1} & , & t \geq t_o \\ \empty \\ (n-f) \mu \Gamma \sum_{k = 0}^{t-1} \eta_{t-k-1} & , & 1 \leq t < t_o\end{array}\right., \, \forall i \in \H
\end{align*}
Thus, if $\eta_{t+1} \leq \eta_t, \, \forall t \in \Z_{\geq 0}$ then 
\begin{align*}
    \norm{w^t - w^{t - s_i(t)}} \leq \left\{\begin{array}{ccc} \eta_{t - t_o}t_o(n-f)\mu\Gamma & , & t \geq t_o \\ \eta_{0} t (n-f) \mu \Gamma & , & 0 \leq t < t_o\end{array}\right., \, \forall i \in \H
\end{align*}
Therefore,
\begin{align*}
    \sum_{t = 0}^\infty \eta_t \norm{w^t - w^{t-s_i(t)}} & = \sum_{t = t_o}^\infty \eta_t \norm{w^t - w^{t-s_i(t)}} + \sum_{t = 0}^{t_o -1} \eta_t \norm{w^t - w^{t-s_i(t)}} \\
    & \leq \left( t_o\sum_{t = t_o}^\infty \eta_t \eta_{t-t_o} + \eta_0\sum_{t = 0}^{t_o -1} t \eta_t \right) (n-f) \mu \Gamma, \, \forall i \in \H
\end{align*}
As $t_o < \infty$ (assumption~\textbf{(A6)}) and $\eta_t < \infty, \, \forall t < \infty$, thus $\sum_{t = 0}^{t_o -1} t \eta_t < \infty$. As $\eta_{t+1} \leq \eta_t, \, \forall t \in \Z_{\geq 0}$, 
\[\sum_{t = t_o}^\infty \eta_t \eta_{t-t_o} \leq \sum_{t = t_o}^\infty \eta_{t-t_o} \eta_{t-t_o} = \sum_{t = 0}^\infty \eta^2_t\]
Thus, if $\sum_{t = 0}^\infty \eta^2_t < \infty$ then from above $\sum_{t = t_o}^\infty \eta_t \eta_{t-t_o} < \infty$. Hence,
\[\sum_{t = 0}^\infty \eta_t \norm{w^t - w^{t-s_i(t)}} \leq \left( t_o \sum_{t = 0}^\infty \eta^2_t  + \sum_{t = 0}^{t_o -1} t \eta_t \right) (n-f)\mu \Gamma < \infty, \quad \forall i \in \H\]

\subsection{Proof of Theorem~\ref{thm:mr_delay}}
\label{sub:mr_delay}

Define $h_t = \norm{w^t - w^{*}}^2$. From \eqref{eqn:algo_1}, we get
\[ h_{t+1} = \norm{\left[ w^t - \eta_{t} \cdot \sum_{\sigma \in \F_t}^n g^t_{\sigma} \right]_{\W} - w^{*}}^2 \]
Due to the non-expansion property of projection onto a closed convex set~\cite{boyd2004convex}, $\norm{w - w^{*}} \geq \norm{[w]_{\W} - w^{*}}, \, \forall w \in \R^{d}$, therefore 
\begin{align}
	h_{t+1} \leq \norm{w^t - \eta_{t} \cdot \sum_{\sigma \in \F_t}^n g^t_{\sigma} - w^{*}}^2 = h_t - 2\eta_t \iprod{w^t - w^{*}}{\sum_{\sigma \in \F_t}^n g^t_{\sigma}} + \eta^2_t \norm{\sum_{\sigma \in \F_t}^n g^t_{\sigma}}^2 \label{eqn:ht_bound_delay}
\end{align}
As there are at most $f$ Byzantine agents, thus for each time $t \in \Z_{\geq 0}$ there exists $j_t \in \H$ such that 
\[\norm{g^t_{\sigma}} \leq \norm{g^t_{i_{n-f}}} \leq \norm{g^t_{j_t}}, \, \forall \sigma \in \F_t\] 
using the definition of $\F_t$ in~\eqref{eqn:filter}. From assumption \textbf{(A6)}, $g^t_{j_t} = \nabla C_{j_t}(w^{t - s_{j_t}(t)})$, thus
\begin{align}
	\norm{g^t_{\sigma}} \leq \norm{\nabla C_{j_t}(w^{t - s_{j_t}(t)})}, \, \forall \sigma \in \F_t \label{eqn:cj_delay}
\end{align}
From assumption~\textbf{(A2)}, $\nabla C_{j_t}(w^*) = 0$ and  
\begin{align}
	\norm{\nabla C_{j_t}(w)} \leq \mu \norm{w - w^*} \leq \mu \max_{w \in \W}\norm{w - w^*} = \mu \Gamma, \, \forall w \in \W \label{eqn:cj_delay_2}
\end{align}
where, $\Gamma = \max_{w \in \W}\norm{w - w^*}$ (the right hand side exists due to the fact that $\W$ is compact). Note that $\Gamma < \infty$ as $\W$ is bounded (compact sets in real spaces are closed and bounded). Thus, from \eqref{eqn:ht_bound_delay},~\eqref{eqn:cj_delay} and~\eqref{eqn:cj_delay_2} we get
\begin{align}
h_{t+1} \leq h_t - 2\eta_t \iprod{w^t - w^{*}}{\sum_{\sigma \in \F_t}^n g^t_{\sigma}} + \eta^2_t (n-f)^2 \mu^2 \Gamma^2 \label{eqn:ht_simple_bound}
\end{align}
as from triangle inequality, 
\[\norm{\sum_{\sigma \in \F_t}^n g^t_{\sigma}} \leq  \sum_{\sigma \in \F_t}^n\norm{g^t_{\sigma}} \leq (n-f) \mu \Gamma\]
As $\mnorm{\F_t} = n-f$ and it is assumed that $f<n/2$ (assumption~\textbf{(A3)}), therefore there exists a subset $\H^t_1 \subset \F_t$ of cardinality $n-2f$ such that $\H^t_1 \subset \H$. Thus, from assumption~\textbf{(A6)},
\[g^t_{i} = \nabla C_i(w^{t - s_i(t)}), \, \forall i \in \H^t_1, \, \forall t \in \Z_{\geq 0}.\]
Therefore, 
\[\sum_{\sigma \in \F_t}g^t_\sigma = \sum_{i \in \H^t_1} \nabla C_i(w^{t - s_i(t)}) + \sum_{k \in \F_t \setminus \H^t_1} g^t_k , \, \forall t \in \Z_{\geq 0}\]
Substituting this in~\eqref{eqn:ht_simple_bound} gives,
\begin{align*}
&h_{t+1} \leq h_t - 2\eta_t \sum_{i \in \H^t_1}\iprod{w^t - w^{*}}{\nabla C_i(w^{t - s_i(t)})} - 2\eta_t \sum_{k \in \F_t \setminus \H^t_1}\iprod{w^t-w^{*}}{g^t_k} + \eta^2_t (n-f)^2 \mu^2 \Gamma^2
\end{align*}
Using Cauchy-Schwartz inequality, we get
\begin{align*}
    \mnorm{\iprod{w^t-w^{*}}{g^t_k}} \leq \norm{w^t-w^{*}} \cdot \norm{g^t_k} \leq \norm{w^t-w^{*}} \cdot \norm{\nabla C_{j_t}(w^{t - s_{j_t}(t)})}, \, \forall k \in \F_t \setminus \H^t_1
\end{align*}
Thus (note that $\mnorm{\F_t\setminus \H^t_1} = f$),
\begin{align*}
    \sum_{k \in \F_t \setminus \H^t_1}\iprod{w^t-w^{*}}{g^t_k} \geq -f \cdot \norm{w^t-w^{*}} \cdot \norm{\nabla C_{j_t}(w^{t - s_{j_t}(t)})}
\end{align*}
Therefore,
\begin{align}
	h_{t+1} \leq h_t - 2\eta_t \sum_{i \in \H^t_1}\iprod{w^t - w^{*}}{\nabla C_i(w^{t - s_i(t)})} + 2\eta_t f \cdot \norm{w^t-w^{*}} \cdot \norm{\nabla C_{j_t}(w^{t - s_{j_t}(t)})} + \eta^2_t (n-f)^2 \mu^2 \Gamma^2 \label{eqn:ht_0-0}
\end{align}
By substituting 
\[\nabla C_i(w^{t - s_i(t)}) = \nabla C_i(w^t) + \nabla C_i(w^{t - s_i(t)}) - \nabla C_i(w^t), \, \forall i \in \H^t_1\]
and 
\[\nabla C_{\H^t_1}(w^{t}) = (1/\mnorm{\H^t_1})\sum_{i \in \H^t_1}\nabla C_i (w^t)\]
in~\eqref{eqn:ht_0-0}, we obtain (recall $\mnorm{\H^t_1} = n-2f$),
\begin{align*}
h_{t+1} & \leq h_t - 2\eta_t (n-2f) \iprod{w^t - w^{*}}{\nabla C_{\H^t_1}(w^{t})} - 2\eta_t \sum_{i \in \H^t_1}\iprod{w^t - w^{*}}{\nabla C_i(w^{t - s_i(t)}) - \nabla C_{i}(w^{t})} \\
& + 2\eta_t f \cdot \norm{w^t-w^{*}} \cdot \norm{\nabla C_{j_t}(w^{t - s_{j_t}(t)})} + \eta^2_t (n-f)^2 \mu^2 \Gamma^2
\end{align*}
From triangle inequality, 
\[\norm{\nabla C_{j_t}(w^{t - s_i(t)})} \leq \norm{\nabla C_{j_t}(w^t)} + \norm{\nabla C_{j_t}(w^{t - s_i(t)}) - \nabla C_{j_t}(w^t)}\]
Therefore,
\begin{align*}
h_{t+1} & \leq  h_t - 2\eta_t (n-2f) \cdot \iprod{w^t - w^{*}}{\nabla C_{\H^t_1}(w^{t})} + 2\eta_t f \cdot \norm{w^t-w^{*}} \cdot \norm{\nabla C_{j_t}(w^t)} \\ 
& - 2\eta_t \sum_{i \in \H^t_1}\iprod{w^t - w^{*}}{\nabla C_i(w^{t - s_i(t)}) - \nabla C_i(w^{t})} + 2\eta_t f \cdot \norm{w^t-w^{*}} \cdot \norm{\nabla C_{j_t}(w^{t - s_{j_t}(t)}) - \nabla C_{j_t}(w^t)}\\
&+ \eta^2_t (n-f)^2 \mu^2 \Gamma^2
\end{align*}
If we let 
\begin{align}
\phi_t = (n-2f) \cdot \iprod{w^t - w^{*}}{\nabla C_{\H^t_1}(w^t)} - f \cdot \norm{w^t-w^{*}} \cdot \norm{\nabla C_{j_t}(w^t)} \label{eqn:phi_t_delay}
\end{align}
then the above inequality becomes
\begin{align*}
    h_{t+1} & \leq h_t - 2\eta_t \phi_t + \eta^2_t (n-f)^2 \mu^2 \Gamma^2 \\
    & - 2\eta_t \sum_{i \in \H^t_1}\iprod{w^t - w^{*}}{\nabla C_i(w^{t - s_i(t)}) - \nabla C_i(w^{t})} + 2\eta_t f \cdot \norm{w^t-w^{*}} \cdot \norm{\nabla C_{j_t}(w^{t - s_{j_t}(t)}) - \nabla C_{j_t}(w^t)}
\end{align*}
From Cauchy-Schwartz inequality, 
\[\mnorm{\iprod{w^t - w^{*}}{\nabla C_i(w^{t - s_i(t)}) - \nabla C_i(w^{t})} } \leq \norm{w^t - w^{*}} \cdot \norm{\nabla C_i(w^{t - s_i(t)}) - \nabla C_i(w^{t})}, \, \forall i \in \H^t_1\]
Therefore,
\begin{align*}
    h_{t+1} & \leq h_t - 2\eta_t \phi_t + \eta^2_t (n-f)^2 \mu^2 \Gamma^2 \\
    & + 2\eta_t \sum_{i \in \H^t_1}\norm{w^t - w^{*}} \cdot \norm{\nabla C_i(w^{t - s_i(t)}) - \nabla C_i(w^{t})} + 2\eta_t f \cdot \norm{w^t-w^{*}} \cdot \norm{\nabla C_{j_t}(w^{t - s_{j_t}(t)}) - \nabla C_{j_t}(w^t)}
\end{align*}
From assumption~\textbf{(A2)}, 
\begin{align*}
    \norm{\nabla C_i(w^{t - s_i(t)}) - \nabla C_i(w^{t})} & \leq \mu \norm{w^{t - s_i(t)} - w^t}, \, \forall i \in \H^t_1 \text{, and } \\
    \norm{\nabla C_{j_t}(w^{t - s_{j_t}(t)}) - \nabla C_{j_t}(w^t)} & \leq \mu \norm{w^{t - s_{j_t}(t)} - w^t}
\end{align*}
This implies,
\begin{align*}
    h_{t+1} & \leq h_t - 2\eta_t \phi_t + \eta^2_t (n-f)^2 \mu^2 \Gamma^2 \\
    & + 2\eta_t \mu \sum_{i \in \H^t_1}\norm{w^t - w^{*}} \cdot \norm{w^{t - s_i(t)} - w^t} + 2\eta_t f \mu \cdot \norm{w^t-w^{*}} \cdot \norm{w^{t - s_{j_t}(t)} - w^t}
\end{align*}
As mentioned earlier, $\norm{w^t - w^*} \leq \Gamma = \max_{w \in \W}\norm{w - w^*}, \, \forall t \in \Z_{\geq 0}$ owing to the fact that $w^t \in \W,  \, \forall t \in \Z_{\geq 0}$ (refer~\eqref{eqn:algo_1}) and $\W$ is a compact set. This implies
\begin{align}
        h_{t+1} \leq h_t - 2\eta_t \phi_t + \eta^2_t (n-f)^2 \mu^2 \Gamma^2 + 2\eta_t \left(\sum_{i \in \H^t_1} \norm{w^{t - s_i(t)} - w^t} \right) \mu \Gamma + 2\eta_t \norm{w^{t - s_{j_t}(t)} - w^t} f \mu \Gamma \label{eqn:d_ineq_1}
\end{align}
Now, we establish that $\phi_t$ (as defined in~\eqref{eqn:phi_t_delay}) is non-negative $\forall t \in \Z_{\geq 0}$ as follows, under condition~\eqref{eqn:cond_2} by considering two possible cases; case (i) $w^t = w^*$, and case (ii) $w^t \neq w^*$.\\
Case (i) If $w^t = w^*$, then from assumption~\textbf{(A2)}, $\nabla C_{j_t}(w^t) = 0$ and $\nabla C_{\H^t_1}(w^t) = 0$. Therefore (ref.~\eqref{eqn:phi_t_delay}), 
\begin{align}
    \phi_t = 0 \text{ if } w^t = w^* \label{eqn:d_phi_0}
\end{align}
Case (ii) Let $w^t \neq w^*$. From assumption \textbf{(A2)}, $\nabla C_{j_t}(w^*) = 0$ and
\[ \norm{\nabla C_{j_t}(w^t)} \leq \mu \norm{w^t - w^{*}} \]
Therefore (ref.~\eqref{eqn:phi_t_delay}),
\begin{align}
    \phi_t \geq (n-2f) \cdot \iprod{w^t - w^{*}}{\nabla C_{\H^t_1}(w^t)} - \mu f \cdot \norm{w^t-w^{*}}^2 \label{eqn:d_phi_1}
\end{align}
From assumption \textbf{(A5)}, $\iprod{w^t - w^*}{\nabla C_{\H^t_1}(w^t)} \geq \gamma \norm{w^t - w^*}^2$. Thus,
\begin{align*}
    \phi_t \geq (n \gamma - f (2\gamma + \mu)) \norm{w^t - w^*}^2
\end{align*}
Therefore, condition \eqref{eqn:cond_2} implies that for any positive real value $\delta$,
\begin{align}
    \text{ if } \norm{w^t - w^*}^2 > \delta \text{ then } \phi_t > (n \gamma - f (2\gamma + \mu))\delta > 0 \label{eqn:d_phi_2}
\end{align}
Owing to \eqref{eqn:cond_2}, \eqref{eqn:d_ineq_1}, \eqref{eqn:d_phi_0} and \eqref{eqn:d_phi_2}, we get
\begin{align}\label{eqn:d_final}
\begin{split}
        h_{t+1} & \leq h_t - 2\eta_t \phi_t + \eta^2_t (n-f)^2 \mu^2 \Gamma^2 + 2\eta_t \left(\sum_{i \in \H^t_1} \norm{w^{t - s_i(t)} - w^t} \right) \mu \Gamma + 2\eta_t \norm{w^{t - s_{j_t}(t)} - w^t} f \mu \Gamma  \\
        & \leq h_t + \eta^2_t (n-f)^2 \mu^2 \Gamma^2 + 2\eta_t \left(\sum_{i \in \H^t_1} \norm{w^{t - s_i(t)} - w^t} \right) \mu \Gamma + 2\eta_t \norm{w^{t - s_{j_t}(t)} - w^t} f \mu \Gamma
\end{split}
\end{align}
Therefore,
\begin{align*}
    (h_{t+1} - h_t)_{+} \leq \eta^2_t (n-f)^2 \mu^2 \Gamma^2 + 2\eta_t \left(\sum_{i \in \H^t_1} \norm{w^{t - s_i(t)} - w^t} \right) \mu \Gamma + 2\eta_t \norm{w^{t - s_{j_t}(t)} - w^t} f \mu \Gamma, \, \forall t \in \Z_{\geq 0}
\end{align*}
where, operator $(\cdot)_{+}$ is same as defined in Lemma \ref{lem:seq_conv}. As $\eta_{t+1} \leq \eta_t, \, \forall t \in Z_{\geq 0}$, $\sum_{t = 0}^{\infty}\eta^2_t < \infty$ and $\Gamma < \infty$, therefore from Lemma~\ref{lem:bnd_growth}, we get
\begin{align} \label{eqn:d_bnd_sum}
\begin{split}
    \sum_{t = 0}^\infty(h_{t+1} - h_t)_{+} & \leq (n-f)^2 \mu^2 \Gamma^2 \sum_{t = 0}^\infty \eta^2_t  + 2 \mu \Gamma \sum_{i \in \H^t_1} \sum_{t = 0}^\infty \eta_t \norm{w^{t - s_i(t)} - w^t}  + 2 f \mu \Gamma \sum_{t = 0}^\infty \eta_t \norm{w^{t - s_{j_t}(t)} - w^t}\\
    & < \infty
\end{split}
\end{align}
As $h_{t} \geq 0, \, \forall t \in \Z_{\geq 0}$, the above implies (cf. Lemma~\ref{lem:seq_conv})
\begin{align}
    h_t \underset{t \to \infty}{\longrightarrow} h_{\infty} < \infty \text{ and } \sum_{t = 0}^{\infty} (h_{t+1} - h_t)_{-} > -\infty \label{eqn:d_conv}
\end{align}
where, operator $(\cdot)_{-}$ is same as defined in Lemma \ref{lem:seq_conv}. As 
\[h_{\infty} - h_0 = \sum_{t = 0}^\infty (h_{t+1}-h_t) \]
Therefore, from \eqref{eqn:d_final} we get
\begin{align*}
   h_{\infty} - h_0 \leq -2 \sum_{t = 0}^\infty \eta_t \phi_t + (n-f)^2 \mu^2 \Gamma^2 \sum_{t = 0}^\infty\eta^2_t  + 2 \mu \Gamma \sum_{i \in \H^t_1} \sum_{t = 0}^\infty \eta_t \norm{w^{t - s_i(t)} - w^t} + 2 f \mu \Gamma \sum_{t = 0}^\infty \eta_t \norm{w^{t - s_{j_t}(t)} - w^t}  
\end{align*}
Using~\eqref{eqn:d_bnd_sum} and~\eqref{eqn:d_conv} above implies
\begin{align}
    \sum_{t = 0}^\infty \eta_t \phi_t < \infty \label{eqn:contradict_2}
\end{align}
Now, we show that $h_{\infty} = 0$ using reasoning by contradiction. Suppose that $h_{\infty} = \beta > 0$, in which case there exists a time $\tau \in Z_{\geq 0}$ such that 
\[\mnorm{h_t - h_\infty} < \beta/2, \quad \forall t \geq \tau\]
This implies, 
\[\beta/2 < h_t = \norm{w^t - w^*}^2 < 3(\beta/2) , \quad  \forall t \geq \tau\]
This implies (refer \eqref{eqn:d_phi_2}),
\[ \phi_t > (n \gamma - f (2\gamma + \mu)) \cdot (\beta/2), \quad  \forall t \geq \tau\]
This implies that if condition \eqref{eqn:cond_1} is satisfied then, 
\[\sum_{t = \tau}^{\infty}\eta_t \phi_t > (n \gamma - f (2\gamma + \mu)) \cdot (\beta/2) \sum_{t = \tau}^{\infty}\eta_t = \infty\]
as $\sum_{t = 0}^{\infty}\eta_t = \infty$ and $\eta_t < \infty, \, \forall t < \infty$. The above is a contradiction of the deduction in \eqref{eqn:contradict_2}. Therefore, $h_{\infty} \not > 0$ and hence, 
\[ w^t \underset{t \to \infty}{\longrightarrow} w^* \]

\subsection{Proof of Theorem~\ref{thm:mr_noise}}
\label{app:mr_noise}

Define a scalar function $\psi: \R \to \R$ as follows (cf. Bottou, 1998~\cite{bottou1998online}):
\begin{align}
\psi(x) = \left\{\begin{array}{ccc}0 &, & x \leq \widehat{D} \\ \left(x - \widehat{D}\right)^2 & , & \text{o.w.} \end{array} \right. \label{eqn:def_psi}
\end{align}
where, 
\[\widehat{D} = \left(D^*\right)^2 = \left( \frac{n-2f}{n\gamma - f (2\gamma + \mu)} D \right)^2 \]
Note that (cf. Bottou, 1998~\cite{bottou1998online}) 
\begin{align}
    \psi(y) - \psi(x) \leq (y-x)\psi' (x) + (y-x)^2 \label{eqn:psi_bnd}, \, \forall x,\, y \in \R_{\geq 0}
\end{align}
where, $\psi^{'}(x)$ is the derivative of $\psi$ at $x$. Define, 
\begin{align}
    h_t = \psi \left( \norm{w^t - w^*}^2\right) \label{eqn:ht_def_noise}
\end{align}
Note that $h_t = 0$ iff $\norm{w^t - w^*} \leq D^*$. Inequality~\eqref{eqn:psi_bnd} implies that
\begin{align*}
    h_{t+1} - h_t & = \psi \left( \norm{w^{t+1} - w^*}^2\right) - \psi \left( \norm{w^t - w^*}^2\right) \\
    & \leq \left(\norm{w^{t+1} - w^*}^2 - \norm{w^t - w^*}^2\right) \cdot \psi' \left( \norm{w^t - w^*}^2\right) + \left( \norm{w^{t+1} - w^*}^2 - \norm{w^t - w^*}^2 \right)^2
\end{align*}
For the sake of convenience, let
\[\psi'_t \triangleq \psi' \left( \norm{w^t - w^*}^2\right)\]
Note that
\begin{align}
    \psi'_t = \left\{\begin{array}{ccc} 0 & , & \norm{w^t - w^*}^2 \leq \widehat{D} \\ 2 \left(\norm{w^t - w^*}^2 - \widehat{D}\right) & , & \norm{w^t - w^*}^2 > \widehat{D} \end{array}\right. \label{eqn:psi_t}
\end{align}
We can re-write the above inequality as,
\begin{align}
    h_{t+1} - h_t & \leq \left(\norm{w^{t+1} - w^*}^2 - \norm{w^t - w^*}^2\right)  \psi'_t + \left( \norm{w^{t+1} - w^*}^2 - \norm{w^t - w^*}^2 \right)^2 , \, \forall t \in \Z_{\geq 0} \label{eqn:ht_1_noise}
\end{align}
From~\eqref{eqn:algo_1}, we know that 
\[w^{t+1} = \left[ w^t - \eta_{t} \cdot \sum_{\sigma \in \F_t} g^t_{\sigma} \right]_{\W} \]
From the non-expansion property of the projection onto a closed convex set~\cite{boyd2004convex}, $\norm{w - w^{*}} \geq \norm{[w]_{\W} - w^{*}}, \, \forall w \in \R^{d}$. Therefore,
\[\norm{w^{t+1} - w^*} \leq \norm{w^t - w^* - \eta_{t} \cdot \sum_{\sigma \in \F_t} g^t_{\sigma}}\]
\begin{align}
    \implies \norm{w^{t+1} - w^*}^2 \leq \norm{w^t - w^*}^2  - 2 \eta_{t} \iprod{w^t - w^*}{\sum_{\sigma \in \F_t} g^t_{\sigma}} +  \eta_t^2 \norm{\sum_{\sigma \in \F_t} g^t_{\sigma}}^2 \label{eqn:proj_bnd}
\end{align}
As $\psi'_t \geq 0, \, \forall t \in \Z_{\geq 0}$ (refer~\eqref{eqn:psi_t}), therefore~\eqref{eqn:ht_1_noise} and~\eqref{eqn:proj_bnd} implies that
\begin{align}
    h_{t+1} - h_t &\leq \left(- 2 \eta_{t} \iprod{w^t - w^*}{\sum_{\sigma \in \F_t} g^t_{\sigma}} + \eta^2_t \norm{\sum_{\sigma \in \F_t} g^t_{\sigma}}^2 \right)\psi'_t  +  \left( \norm{w^{t+1} - w^*}^2 - \norm{w^t - w^*}^2 \right)^2 \label{eqn:ht_2_noise}
\end{align}
Note that,
\begin{align*}
	\mnorm{\norm{w^{t+1} - w^*}^2 - \norm{w^t - w^*}^2}  = \left(\norm{w^{t+1} - w^*} + \norm{w^t - w^*}\right)  \mnorm{ \norm{w^{t+1} - w^*} - \norm{w^t - w^*}} 
\end{align*}
As $w^t \in \W \subset \R^d, \, \forall t \in \Z_{\geq 0}$, where $\W$ is a compact set (closed and bounded in $\R^d$), therefore 
\begin{align}
     \norm{w^t - w^*} \leq \Gamma = \max_{w \in \W}\norm{w - w^*} < \infty, \quad \forall t \in \Z_{\geq 0}    \label{eqn:wt_bnd}
\end{align}
Let $\Gamma > 0$ (otherwise, $\W$ contains only $w^*$, and the problem is trivial). Thus, \[\norm{w^{t+1} - w^*} + \norm{w^{t} - w^*} \leq 2\Gamma, \,\forall t \in \Z_{\geq 0}\] Therefore,
\begin{align}
	\mnorm{\norm{w^{t+1} - w^*}^2 - \norm{w^t - w^*}^2}  \leq 2\Gamma \mnorm{ \norm{w^{t+1} - w^*} - \norm{w^t - w^*}} \label{eqn:ab}
\end{align}
From triangle inequality 
\begin{align}
	\norm{w^{t+1} - w^*} - \norm{w^t - w^*} \leq \norm{w^{t+1} - w^t} \label{eqn:ab_t1}
\end{align}
and,
\begin{align}
	\norm{w^{t+1} - w^*} - \norm{w^t - w^*} \geq -\norm{w^{t+1} - w^t} \label{eqn:ab_t2}
\end{align}
Inequalities~\eqref{eqn:ab_t1} and~\eqref{eqn:ab_t2} imply that
\begin{align}
	\mnorm{ \norm{w^{t+1} - w^*} - \norm{w^t - w^*}} \leq \norm{w^{t+1} - w^t} \label{eqn:ab_t3}
\end{align}
Substituting~\eqref{eqn:ab_t3} in~\eqref{eqn:ab} implies that 
\begin{align}
	\mnorm{\norm{w^{t+1} - w^*}^2 - \norm{w^t - w^*}^2}  \leq 2\Gamma \norm{w^{t+1} - w^t} \label{eqn:ab_2}
\end{align}
Using the non-expansion property of the projection onto a closed convex set, 
\[\norm{w^{t+1} - w^t} = \norm{ \left[ w^{t} -\eta_t \sum_{\sigma \in \F_t}g^t_\sigma \right]_{\W} - w^t} \leq \eta_t\norm{\sum_{\sigma \in \F_t}g^t_\sigma}\]
Substituting this in~\eqref{eqn:ab_2} implies that
\begin{align*}
	\mnorm{\norm{w^{t+1} - w^*}^2 - \norm{w^t - w^*}^2}  \leq 2 \eta_t \Gamma \norm{\sum_{\sigma \in \F_t}g^t_\sigma} 
\end{align*}
\begin{align}
	\implies \left(\norm{w^{t+1} - w^*}^2 - \norm{w^t - w^*}^2\right)^2  \leq 4 \eta^2_t \Gamma^2 \norm{\sum_{\sigma \in \F_t}g^t_\sigma}^2 \label{eqn:ab_f}
\end{align}
Substituting~\eqref{eqn:ab_f} in~\eqref{eqn:ht_2_noise} implies that
\begin{align*}
    h_{t+1} - h_t &\leq \left(- 2 \eta_{t} \iprod{w^t - w^*}{\sum_{\sigma \in \F_t} g^t_{\sigma}} + \eta^2_t \norm{\sum_{\sigma \in \F_t} g^t_{\sigma}}^2 \right)\psi'_t  + 
    4 \eta^2_t \Gamma^2 \norm{\sum_{\sigma \in \F_t}g^t_\sigma}^2
\end{align*}
\begin{align}
    \implies h_{t+1} - h_t &\leq - 2 \eta_{t} \iprod{w^t - w^*}{\sum_{\sigma \in \F_t} g^t_{\sigma}}\psi'_t + \eta^2_t \left\{ \norm{\sum_{\sigma \in \F_t} g^t_{\sigma}}^2 \psi'_t  + 4  \Gamma^2 \norm{\sum_{\sigma \in \F_t}g^t_\sigma}^2 \right\}, \quad \forall t \in \Z_{\geq 0} \label{eqn:ht_ht_noise}
\end{align}
As $D < \infty$ (cf. Assumption~\textbf{(A7)}),~\eqref{eqn:psi_t} and~\eqref{eqn:wt_bnd} implies that
\begin{align}
   0 \leq \psi'_t \leq 2 (\Gamma^2 - \widehat{D}) \leq 2 \Gamma^2 <  \infty, \quad \forall t \in \Z_{\geq 0} \label{eqn:d_psi_bnd}
\end{align}
As there are at most $f$ Byzantine agents, thus for each time $t \in \Z_{\geq 0}$ there exists $j_t \in \H$ such that
\[\norm{g^t_{\sigma}} \leq \norm{g^t_{i_{n-f}}} \leq \norm{g^t_{j_t}}, \, \forall \sigma \in \F_t\] 
From assumption \textbf{(A4)}, $g^t_{j_t} = \nabla C_{j_t}(w^{t})$, thus
\begin{align}
	\norm{g^t_{\sigma}} \leq \norm{\nabla C_{j_t}(w^{t})}, \, \forall \sigma \in \F_t \label{eqn:c_jt_noise}
\end{align}
As $w^*$ is assumed to be a minimizer of all the honest agents' cost (cf. assumption~\textbf{(A2)}), 
\[\nabla C_{j_t}(w^*) = 0\]
Therefore, assumption~\textbf{(A2)} and~\eqref{eqn:wt_bnd} imply that
\begin{align}
	\norm{\nabla C_{j_t}(w^t)} \leq \mu \norm{w^t - w^*} \leq \mu \Gamma, \, \forall t \in \Z_{\geq 0} \label{eqn:cj_bound_noise}
\end{align}
Using triangle inequality, \eqref{eqn:c_jt_noise} and \eqref{eqn:cj_bound_noise} we obtain (recall, $\mnorm{\F_t} = n-f$),
\begin{align}
    \norm{\sum_{\sigma \in \F_t} g^t_{\sigma}} \leq \sum_{\sigma \in \F_t}\norm{g^t_{\sigma}} \leq (n-f) \mu \Gamma < \infty \label{eqn:gt_bnd_noise}
\end{align}
Using~\eqref{eqn:d_psi_bnd} and~\eqref{eqn:gt_bnd_noise} in~\eqref{eqn:ht_ht_noise} implies
\begin{align*}
    h_{t+1} - h_t &\leq -2\eta_t \iprod{w^t - w^*}{\sum_{\sigma \in \F_t} g^t_{\sigma}}\psi'_t +  \eta^2_t \left\{2(n-f)^2\mu^2\Gamma^4 + 4 (n-f)^2 \mu^2 \Gamma^4 \right\}, \, \forall t \in \Z_{\geq 0}
\end{align*}
Let,
\begin{align}
    K = 2(n-f)^2\mu^2\Gamma^4 + 4 (n-f)^2 \mu^2 \Gamma^4 = 6(n-f)^2 \mu^2 \Gamma^4 \label{eqn:omega}
\end{align}
Then,
\begin{align}
    h_{t+1} - h_t \leq -2\eta_t \iprod{w^t - w^*}{\sum_{\sigma \in \F_t} g^t_{\sigma}}\psi'_t + \eta^2_t K, \quad \forall t \in \Z_{\geq 0} \label{eqn:ht_simple_noise}
\end{align}
As $\mnorm{\F_t} = n-f$ and it is assumed that $f<n/2$ (assumption~\textbf{(A3)}), there exists a subset $\H^t_1 \subset \F_t$ of cardinality $n-2f$ such that $\H^t_1 \subset \H$. Thus, from assumption~\textbf{(A7)} we get
\[g^t_{i} = \nabla C_i(w^t) + D_i(w^t), \, \forall i \in \H^t_1, \, \forall t \in \Z_{\geq 0}.\]
where, $\norm{D_i(w^t)} \leq D < \infty,  \, \forall i \in \H^t_1, \, \forall t \in \Z_{\geq 0}$. Substituting the above in~\eqref{eqn:ht_simple_noise} implies that
\begin{align*}
h_{t+1} & \leq h_t - 2\eta_t \left(\sum_{i \in \H^t_1}\iprod{w^t - w^{*}}{\nabla C_i(w^t)} + \sum_{i \in \H^t_1}\iprod{w^t - w^{*}}{D_i(w^t)} + \sum_{k \in \F_t \setminus \H^t_1}\iprod{w^t-w^{*}}{g^t_k}\right) \psi'_t \\
& + \eta^2_t K 
\end{align*}
Substituting $C_{\H^t_1} = (1/\mnorm{\H^t_1})\sum_{i \in \H^t_1} C_i$ above implies,
\begin{align}
\begin{split}
& h_{t+1} = h_t - 2\eta_t \left(\mnorm{\H^t_1} \iprod{w^t - w^{*}}{\nabla C_{\H^t_1}(w^t)} + \sum_{i \in \H^t_1}\iprod{w^t - w^{*}}{D_i(w^t)} + \sum_{k \in \F_t \setminus \H^t_1}\iprod{w^t-w^{*}}{g^t_k}\right) \psi'_t \\
& + \eta^2_t K, \quad \forall t \in \Z_{\geq 0} \label{eqn:ht_split_noise}
\end{split}
\end{align}
Using Cauchy-Schwartz inequality, we get
\begin{align*}
    \mnorm{\iprod{w^t-w^{*}}{g^t_k}} &\leq \norm{w^t-w^{*}} \cdot \norm{g^t_k}, \, \forall k \in \F_t \setminus \H^t_1
\end{align*}
Using~\eqref{eqn:c_jt_noise} above implies that
\[\mnorm{\iprod{w^t-w^{*}}{g^t_k}} \leq \norm{w^t-w^{*}} \cdot \norm{\nabla C_{j_t}(w^t)},  \, \forall k \in \F_t \setminus \H^t_1\]
Thus,
\[\iprod{w^t-w^{*}}{g^t_k} \geq - \norm{w^t-w^{*}} \cdot \norm{\nabla C_{j_t}(w^t)},  \, \forall k \in \F_t \setminus \H^t_1\]
As $\mnorm{\H^t_1} = n-2f$ and $\mnorm{\F_t} = n-f$, 
\begin{align}
    \sum_{k \in \F_t \setminus \H^t_1}\iprod{w^t-w^{*}}{g^t_k} \geq -f \cdot \norm{w^t-w^{*}} \cdot \norm{\nabla C_{j_t}(w^t)} \label{eqn:bf_bound_noise}
\end{align}
Similarly, from Cauchy-Schwartz inequality, we also get
\begin{align*}
    \mnorm{\iprod{w^t - w^*}{D_i(w^t)}} \leq \norm{w^t - w^*} \norm{D_i(w^t)}, \, \forall i \in \H^t_1
\end{align*}
Using Assumption~\textbf{(A7)} above implies,
\begin{align*}
    \iprod{w^t - w^*}{D_i(w^t)} \geq - D \norm{w^t - w^*}, \, \forall i \in \H^t_1
\end{align*}
Thus,
\begin{align}
    \sum_{i \in \H^t_1}\iprod{w^t - w^*}{D_i(w^t)} \geq -\mnorm{\H^t_1} D \norm{w^t - w^*}, \, \forall t \in \Z_{\geq 0} \label{eqn:noise_grad_bnd}
\end{align}
From substituting~\eqref{eqn:bf_bound_noise} and~\eqref{eqn:noise_grad_bnd} in~\eqref{eqn:ht_split_noise}, we obtain, 
\begin{align*}
        h_{t+1} & \leq h_t - 2\eta_t \left\{ \mnorm{\H^t_1} \iprod{w^t - w^{*}}{\nabla C_{\H^t_1}(w^t)} - \mnorm{\H^t_1} D \norm{w^t - w^*} - f \norm{w^t-w^{*}} \norm{\nabla C_{j_t}(w^t)} \right\}\psi'_t \\
        & + \eta^2_t K, \quad \forall t \in \Z_{\geq 0}
\end{align*}
If we let 
\begin{align}
	\phi_t = \left\{ \mnorm{\H^t_1} \iprod{w^t - w^{*}}{\nabla C_{\H^t_1}(w^t)} - \mnorm{\H^t_1} D \norm{w^t - w^*} - f \norm{w^t-w^{*}} \norm{\nabla C_{j_t}(w^t)}  \right\}\psi'_t \label{eqn:phi_t_noise}
\end{align}
Then the last inequality can be re-written as 
\begin{align}
    h_{t+1} \leq h_t - 2\eta_t \phi_t  + \eta_t^2 K , \quad \forall t \in \Z_{\geq 0} \label{eqn:ineq_1_noise}
\end{align}
Next, we show that $\phi_t \geq 0, \, \forall t \in \Z_{\geq 0}$ by considering two cases, case (i) $\norm{w^t - w^*} \leq \sqrt{\widehat{D}}$ and case (ii) $\norm{w^t - w^*} > \sqrt{\widehat{D}}$, as follows. \\
\noindent Case (i): From~\eqref{eqn:psi_t}, if $\norm{w^t - w^*}^2 \leq \widehat{D}$ then $\psi'_t = 0$. Thus, 
\begin{align}
    \phi_t = 0, \, \text{ if } \norm{w^t - w^*} \leq \sqrt{\widehat{D}} \label{eqn:case1_noise}
\end{align}
Case (ii): From~\eqref{eqn:psi_t}, if $\norm{w^t - w^*}^2 > \widehat{D}$ then $\psi'_t = 2(\norm{w^t - w^*}^2 - \widehat{D}) > 0$. From~\eqref{eqn:cj_bound_noise}, 
\[- f \norm{w^t-w^{*}} \norm{\nabla C_{j_t}(w^t)} \geq  - f \mu \norm{w^t - w^*}^2\]
From assumption~\textbf{(A5)}, 
\[\iprod{w^t - w^{*}}{\nabla C_{\H^t_1}(w^t)} \geq \gamma \norm{w^t - w^*}^2\]
As $\mnorm{\H^t_1} = n - 2f > 0$, from the last two inequalities we obtain,
\[\mnorm{\H^t_1} \iprod{w^t - w^{*}}{\nabla C_{\H^t_1}(w^t)} - f \norm{w^t-w^{*}} \norm{\nabla C_{j_t}(w^t)} \geq (n \gamma   - f (2 \gamma + \mu)) \norm{w^t - w^*}^2 \]
Substituting the above in~\eqref{eqn:phi_t_noise} implies that (recall, $\psi'_t \geq 0$ and $\mnorm{\H^t_1} = n - 2f$ for all $t \in \Z_{\geq 0}$)
\begin{align}
\phi_t \geq \left\{(n \gamma   - f (2 \gamma + \mu)) \norm{w^t - w^*}^2 - (n-2f)D \norm{w^t - w^*}\right\} \psi'_t \label{eqn:phi_t_case2_noise}
\end{align}
For a positive real value $\delta$, if $\norm{w^t - w^*} \geq \sqrt{\widehat{D}} + \sqrt{\delta}$ then (ref.~\eqref{eqn:psi_t}),
\begin{align}
    \psi'_t = 2 (\norm{w^t - w^*}^2 - \widehat{D}) = 2 \left(\norm{w^t - w^*} + \sqrt{\widehat{D}}\right) \left(\norm{w^t - w^*} - \sqrt{\widehat{D}}\right)\geq 2 \sqrt{\delta} \left((\sqrt{\delta} + 2\sqrt{\widehat{D}}\right) \geq \delta \label{eqn:case2_temp1}
\end{align}
Recall,
\[\sqrt{\widehat{D}} = \frac{n-2f}{n \gamma   - f (2 \gamma + \mu)} D \, \implies (n \gamma   - f (2 \gamma + \mu))\sqrt{\widehat{D}} = (n-2f) D\]
Thus, if $\norm{w^t - w^*} \geq \sqrt{\widehat{D}} + \sqrt{\delta}$ then in~\eqref{eqn:phi_t_case2_noise},
\[(n \gamma   - f (2 \gamma + \mu)) \norm{w^t - w^*} - (n-2f)D \geq (n \gamma   - f (2 \gamma + \mu)) \left( \sqrt{\widehat{D}} + \sqrt{\delta} \right) - (n-2f)D \geq (n \gamma   - f (2 \gamma + \mu)) \sqrt{\delta}\]
\begin{align}
\begin{split}
	\implies & (n \gamma   - f (2 \gamma + \mu)) \norm{w^t - w^*}^2 - (n-2f)D \norm{w^t - w^*} \\
	&= \norm{w^t - w^*} \left((n \gamma   - f (2 \gamma + \mu)) \norm{w^t - w^*} - (n-2f)D\right) \geq (n \gamma   - f (2 \gamma + \mu)) \sqrt{\delta}\left(\sqrt{\widehat{D}} + \sqrt{\delta}\right) \\ \label{eqn:case2_temp2}
	&\geq (n \gamma   - f (2 \gamma + \mu)) \delta
\end{split}
\end{align}
Since condition~\eqref{eqn:cond_2} is assumed to hold (i.e. $n \gamma  > f (2 \gamma + \mu)$), from~\eqref{eqn:phi_t_case2_noise},~\eqref{eqn:case2_temp1} and~\eqref{eqn:case2_temp2} we infer the following.
\begin{align}
    \text{If } \norm{w^t - w^*} \geq \sqrt{\widehat{D}} + \sqrt{\delta} \text{, for $\delta \in \R_{\geq 0}$, then } \phi_t \geq  \left(n \gamma   - f (2 \gamma + \mu)\right) \delta^2 > 0  \label{eqn:case2_noise} 
\end{align}
From~\eqref{eqn:ineq_1_noise},~\eqref{eqn:case1_noise} and~\eqref{eqn:case2_noise}, we get
\begin{align*}
    h_{t+1} \leq h_t + \eta^2_t K, \, \forall t \in \Z_{\geq 0}
\end{align*}
As $K > 0$ (cf.~\eqref{eqn:omega}), the above implies that (refer Lemma~\ref{lem:seq_conv} for the notation $(\cdot)_{+}$)
\begin{align*}
    (h_{t+1} - h_t)_{+} \leq \eta^2_t K, \, \forall t \in \Z_{\geq 0}
\end{align*}
As $\sum_{t = 0}^\infty \eta^2_t < \infty$ and $K < \infty$ (ref.~\eqref{eqn:omega}), thus $\sum_{t = 0}^\infty \eta^2_t K < \infty$. This implies that the infinite sum of the positive variance of the sequence $\{h_t\}$ is finite, i.e.
\begin{align*}
    \sum_{t = 0}^\infty (h_{t+1} - h_t)_{+} \leq \sum_{t = 0}^\infty \eta^2_t K < \infty
\end{align*}
As $h_{t} \geq 0, \, \forall t \in \Z_{\geq 0}$, the above implies that (cf. Lemma~\ref{lem:seq_conv})
\begin{align}
    h_t \underset{t \to \infty}{\longrightarrow} h_{\infty} < \infty \text{ and } \sum_{t = 0}^{\infty} (h_{t+1} - h_t)_{-} > -\infty \label{eqn:conv_noise}
\end{align}
where, operator $(\cdot)_{-}$ is same as defined in Lemma \ref{lem:seq_conv}. Now, 
\[h_{\infty} - h_0 = \sum_{t = 0}^\infty (h_{t+1}-h_t) \]
Thus, from \eqref{eqn:ineq_1_noise} we obtain,
\[h_{\infty} - h_0 \leq - 2 \sum_{t = 0}^\infty \eta_t \phi_t +  K \sum_{t = 0}^\infty  \eta^2_t\]
As $\sum_{t = 0}^\infty \eta^2_t < \infty$ and $K < \infty$, using \eqref{eqn:conv_noise} above implies
\begin{align}
    \sum_{t = 0}^\infty \eta_t \phi_t < \infty \label{eqn:contradict_noise}
\end{align}
Now, we show that $h_{\infty} = 0$ using reasoning by contradiction. Suppose there exists a positive real value $\beta$, such that $h_{\infty} = 2 \beta (2 \sqrt{\widehat{D}} + \sqrt{\beta})^2 > 0$. Note that for any positive real value $\epsilon$, there is a unique positive real value $\beta$ such that $ 2 \beta (2 \sqrt{\widehat{D}} + \sqrt{\beta})^2 = \epsilon$. \\
\noindent Thus, there exists a finite $\tau \in Z_{\geq 0}$ such that 
\[\mnorm{h_t - h_\infty} \leq \beta \left(2 \sqrt{\widehat{D}} + \sqrt{\beta}\right)^2, \quad \forall t \geq \tau\]
Thus (ref.~\eqref{eqn:ht_def_noise}),
\[ h_t = \psi\left( \norm{w^t - w^*}^2 \right) \geq \beta \left(2 \sqrt{\widehat{D}} + \sqrt{\beta}\right)^2 , \quad  \forall t \geq \tau\]
This implies that (ref. definition of $\psi(\cdot)$ in~\eqref{eqn:def_psi})
\[\left(\norm{w^t - w^*}^2 - \widehat{D} \right)^2 \geq \beta \left(2 \sqrt{\widehat{D}} + \sqrt{\beta}\right)^2,  \quad  \forall t \geq \tau\]
As $\norm{w^t - w^*}^2 \geq 0$ and $\beta > 0$, the above implies,
\[\norm{w^t - w^*}^2 \geq \widehat{D} + \sqrt{\beta} (2 \sqrt{\widehat{D}} + \sqrt{\beta}) = \left(\sqrt{\beta} + \sqrt{\widehat{D}}\right)^2, \quad  \forall t \geq \tau\]
\[\implies \norm{w^t - w^*} \geq \sqrt{\widehat{D}} + \sqrt{\beta}, \quad  \forall t \geq \tau \]
Thus, from~\eqref{eqn:case2_noise} we get,
\[ \phi_t \geq  \left(n \gamma   - f (2 \gamma + \mu)\right) \beta^2, \quad  \forall t \geq \tau\]
This implies that if condition \eqref{eqn:cond_2} is satisfied (i.e. $n \gamma  > f (2 \gamma + \mu)$) then, 
\[\sum_{t = \tau}^{\infty}\eta_t \phi_t \geq  \left(n \gamma   - f (2 \gamma + \mu)\right) \beta^2 \sum_{t = \tau}^{\infty}\eta_t = \infty, \quad \left( \text{as } \eta_t > 0,\,\forall t \in \Z_{\geq 0}, \text{ and } \sum_{t = 0}^{\infty}\eta_t = \infty \right) \]
The above contradicts~\eqref{eqn:contradict_noise}. Therefore, $h_{\infty} \not > 0$ and hence, 
\[ \lim_{t \to \infty}\norm{w^t - w^*} \leq \sqrt{\widehat{D}} \]
As $\widehat{D} = (D^*)^2$, the above implies that there exists a finite $\tau \in \Z_{\geq 0}$ such that
\[\norm{w^t - w^*} \leq D^*, \quad \forall t \geq \tau\]

\end{appendices}

\end{document}